\documentclass{article}

\usepackage[main, preprint]{neurips_2026}

\usepackage{etoc}
\usepackage[utf8]{inputenc} 
\usepackage[T1]{fontenc}    
\usepackage{hyperref}       
\usepackage{url}  
\usepackage{booktabs}       
\usepackage{nicefrac}       
\usepackage{microtype}      
\usepackage{xcolor}         
\RequirePackage{amsthm, amsmath, amsfonts, amssymb}
\RequirePackage{mathtools}
\usepackage{algorithm}
\usepackage{algpseudocode}
\usepackage{array}
\usepackage{tabularx}
\usepackage{multirow}
\usepackage{ragged2e}
\usepackage{xurl}
\newcommand{\bfm}[1]{\ensuremath{\mathbf{#1}}}
\def\ba{\bfm a}     \def\bA{\bfm A}          
\def\bb{\bfm b}     \def\bB{\bfm B}          
     \def\bC{\bfm C}          
     \def\bD{\bfm D}     \def\cD{{\cal  D}}     
\def\be{\bfm e}     \def\bE{\bfm E}     \def\cE{{\cal  E}}     
\def\bff{\bfm f}    \def\bF{\bfm F}     \def\cF{{\cal  F}}     
     \def\bG{\bfm G}          
\def\bh{\bfm h}     \def\bH{\bfm H}     \def\cH{{\cal  H}}     
     \def\bI{\bfm I}          
          \def\cJ{{\cal  J}}     
          \def\cK{{\cal  K}}     
          \def\cL{{\cal  L}}     
\def\bm{\bfm m}     \def\bM{\bfm M}          
          \def\cN{{\cal  N}}     
               
     \def\bP{\bfm P}          
\def\bq{\bfm q}     \def\bQ{\bfm Q}          
     \def\bR{\bfm R}          
     \def\bS{\bfm S}          
\def\bt{\bfm t}          \def\cT{{\cal  T}}     
\def\bu{{\bfm u}}     \def\bU{\bfm U}          
\def\bv{\bfm v}     \def\bV{\bfm V}          
\def\bw{\bfm w}     \def\bW{\bfm W}          
\def\bx{\bfm x}     \def\bX{\bfm X}          
               
\def\bz{\bfm z}     \def\bZ{\bfm Z}          
\def\bzero{\bfm 0}

\newcommand{\bfsym}[1]{\ensuremath{\boldsymbol{#1}}}
\def \balpha   {\bfsym{\alpha}}       
\def \bgamma   {\bfsym{\gamma}}       
\def \bepsilon {\bfsym{\epsilon}}     
         \def \btheta   {\bfsym{\theta}}
        
      \def \bmu      {\bfsym{\mu}}
          \def \bxi      {\bfsym{\xi}}
     \def \bpi      {\bfsym{\pi}}

\def \bphi     {\bfsym{\phi}}         
\def \bpsi     {\bfsym{\psi}}

\def \bGamma   {\bfsym{\Gamma}}       \def \bDelta   {\bfsym{\Delta}}
\def \bTheta   {\bfsym{\Theta}}       \def \bLambda  {\bfsym{\Lambda}}
\def \bXi      {\bfsym{\Xi}}          \def \bPi      {\bfsym{\Pi}}
\def \bSigma   {\bfsym{\Sigma}}

\DeclareMathOperator*{\argmin}{argmin}
\DeclareMathOperator*{\argmax}{argmax}

\DeclareMathOperator{\diag}{diag}
\newcommand{\E}{\mathbb{E}}
\DeclareMathOperator{\rank}{rank}

\DeclareMathOperator \Tr {\mathrm{tr}}

\def \bone   {\bfsym{1}}

\def \RR	{\mathbb{R}}

\def \PP {\mathbb{P}}

\def \cN {\mathcal{N}}
\def \cL {\mathcal{L}}

\def \cT {\mathcal{T}}

\def \cJ {\mathcal{J}}

\def \tilde{\widetilde}

\newcommand{\beq}  {\begin{equation}}
\newcommand{\eeq}  {\end{equation}}
\newcommand{\beqn} {\begin{eqnarray}}
\newcommand{\eeqn} {\end{eqnarray}}
\newcommand{\beqnn}{\begin{eqnarray*}}
\newcommand{\eeqnn}{\end{eqnarray*}}

\newtheorem{theorem}{Theorem}[section]
\newtheorem{assumption}[theorem]{Assumption}

\theoremstyle{remark}

\providecommand{\mc}[2]{#1\,(#2)}
\newcommand{\diseaselist}[1]{%
\begin{tabular}{@{}l@{}}
#1
\end{tabular}
}

\title{Hypergraph Representation Learning with Hyperlink Random Effects}

\author{%
  Zimeng Li
  \\
  Department of Statistics, Harvard University\\
  Cambridge, MA 02138 \\
  \texttt{zimengli@g.harvard.edu} \\
   \And
  Shihao Wu \\
  Department of Statistics and Data Science, University of California, Davis \\
  Davis, CA 95616 \\
  \texttt{shaowu@ucdavis.edu} \\
  \AND
  Gongjun Xu \\
  Department of Statistics, University of Michigan \\
  Ann Arbor, MI 48109 \\
  \texttt{gongjun@umich.edu} \\
  \And
  Ji Zhu \\
  Department of Statistics, University of Michigan \\
  Ann Arbor, MI 48109 \\
  \texttt{jizhu@umich.edu} \\
}

\begin{document}

\maketitle

\begin{abstract}
Hypergraphs record multi-way interactions among entities. Extracting information
from the combinatorial structure underlying observed multi-way
interactions is a central task in many real-world problems. Existing methods
face several limitations. First, many deep architectures for hypergraphs do not
explicitly exploit the potential low-rank structure, which can sacrifice
parsimony and interpretability in the learned representations. Second, many
low-rank-based methods operate on tensor representations, which typically
require hyperlinks to have uniform sizes and thus limit their applicability to general
hypergraphs with non-uniform hyperlink sizes. Third, many methods ignore the
fact that hyperlinks often arise from heterogeneous mechanisms. For example,
medical symptoms may co-occur in the profiles of patients with very different
conditions, and such heterogeneity should be incorporated into the learning
process.
In this work, we develop a general framework for hypergraph representation
learning using hyperlink random effects while exploiting the low-rank structure
in hypergraphs. The proposed framework accommodates latent heterogeneity in
hyperlink formation while preserving entity interaction patterns. We establish identifiability of the model parameters and theoretical guarantees of  representation-level recovery under this framework. The framework allows flexible specifications for the
hyperlink random effects; in this paper, we study three choices: categorical, Gaussian mixture, and score-based effects, and develop corresponding estimation algorithms. Through simulation studies, we
demonstrate the effectiveness of the proposed method in recovering latent
structure and capturing heterogeneous interaction patterns. Empirical studies on
 real-world hypergraph datasets further
illustrate the practical utility of our approach.
\end{abstract}

\etocdepthtag.toc{main}

\section{Introduction}

Hypergraphs provide a flexible representation for relational data in which
interactions can occur among any subsets of entities. Unlike traditional
graphs, where each edge connects two nodes, a hyperlink can connect any number
of nodes and record the corresponding multi-way interaction. Such set-valued
interactions arise in many domains, including recommender systems
\citep{xia2021self}, biological networks
\citep{klamt2009hypergraphs,feng2021hypergraph}, medical records
\citep{johnson2016mimic,johnson2023mimic}, and social systems
\citep{benson2018simplicial,cencetti2021temporal}. A central task in these applications
is hypergraph representation learning, which seeks low-dimensional embeddings of
entities that summarize the combinatorial structure of observed hyperlinks and
support downstream tasks such as hyperlink prediction, clustering,
classification, and interpretation.

Existing methods face several limitations. First, many deep architectures for
hypergraphs do not explicitly exploit the potential low-rank structure
underlying multi-way interactions. This can lead to representations that are
less parsimonious and harder to interpret, with potential higher computational demand. 
Second, many low-rank-based methods
operate on tensor representations, which typically require hyperlinks to have
uniform sizes and can become computationally challenging for high-order tensors.
This limits their applicability to general hypergraphs with hyperlinks of
varying sizes. Third, many methods assume, either explicitly or implicitly, that
hyperlinks are formed through a homogeneous mechanism conditional on node-level
information or parameters. In practice, however, hyperlinks often arise from
heterogeneous mechanisms. For example, in medical records, sets of
symptoms may appear in patients with very different underlying conditions. A
practical representation learning method should therefore capture such latent
heterogeneity while preserving shared entity-level interaction patterns.

To address these concerns, we introduce a general framework for hypergraph representation
learning with flexible hyperlink random effects. The framework assigns each node
a latent embedding and a degree parameter. The latent embeddings characterize
entity interactions in a shared low-dimensional space, while the degree
parameters capture individual entity popularity. To model heterogeneity in
hyperlink formation, each hyperlink is associated with a random-effect embedding
that represents its latent formation mechanism. The node-level parameters
preserve common interaction patterns across the hypergraph, whereas the
hyperlink-level random effects allow different hyperlinks to arise from
different latent mechanisms. Figure~\ref{fig:model} illustrates this idea using
a symptom co-occurrence hypergraph from medical record applications. A formal
introduction to the model is given in Section~\ref{sec:model}.

\begin{figure}[t]
    \centering
    \includegraphics[width=1\linewidth]{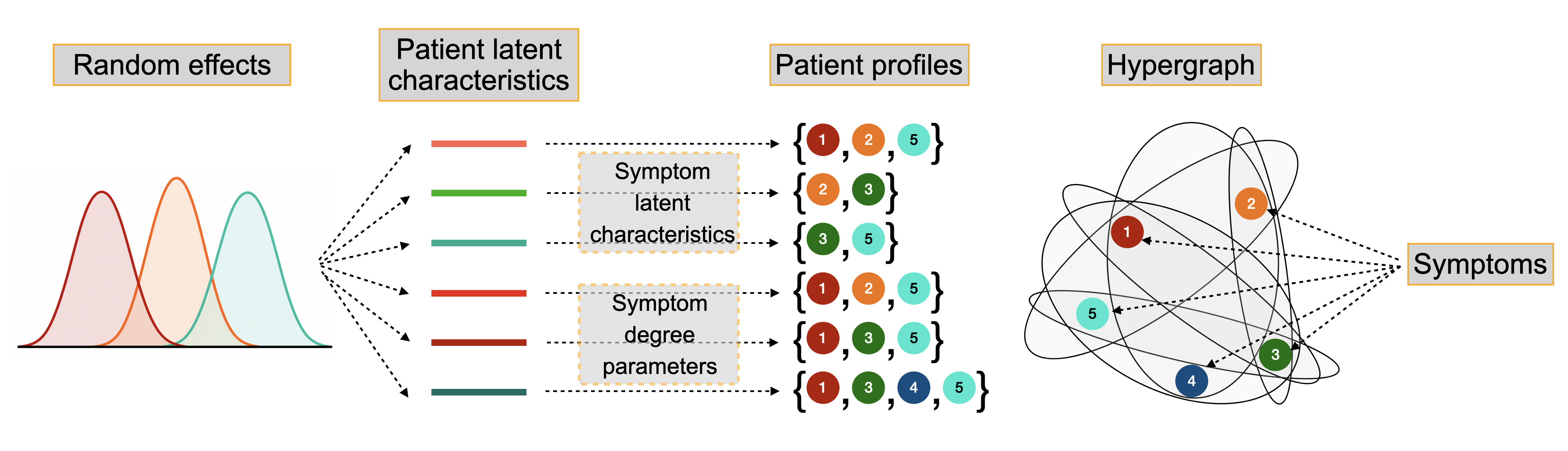}
    \caption{Illustration of hypergraph representation learning with hyperlink
    random effects using a patient-profile toy example. The data are represented
    as a symptom co-occurrence hypergraph with five nodes, corresponding to
    symptoms, and six observed hyperlinks, corresponding to symptom sets observed
    in individual patient profiles. Each symptom is assigned a latent embedding
    and degree parameter. To capture patient-level heterogeneity, each profile
    is assigned a hyperlink random-effect embedding representing latent patient
    characteristics that drive profile formation.\vspace*{-0.5em}}
    \label{fig:model}
\end{figure}

\subsection{Related work}

\paragraph{Projecting hypergraphs onto graphs.}
A common strategy when analyzing hypergraphs is to first project them into
weighted graphs, enabling the use of graph-based tools such as modularity
maximization \citep{kumar2018hypergraph}, graph-likelihood-based methods
\citep{lee2020robust}, and spectral methods
\citep{ghoshdastidar2015provable}. This projection strategy is convenient and have
been used in applications including academic collaboration networks
\citep{newman2001structure,ji2016coauthorship} and congressional voting
relations \citep{fowler2006connecting,lee2017time}. However, projecting a
hypergraph into pairwise edges can cause information loss
and lead to suboptimal performance in tasks such as community detection
\citep{ke2019community}. Our framework instead models hyperlinks directly as
multi-way interactions and preserves the original higher-order structure.

\paragraph{Tensor-based methods.}
Tensor-based methods directly model higher-order structure and are especially
natural for uniform hypergraphs, where all hyperlinks have the same size. They
have been used for tasks such as community detection
\citep{ghoshdastidar2017consistency,ke2019community}, embedding learning
\citep{yuan2021high}, and inference \citep{yuan2022testing}.
However, fixed-order tensor representations are less suited to general
hypergraphs with non-uniform hyperlink sizes and can become computationally
challenging when the tensor order is large. Existing extensions to non-uniform
hypergraphs often require preprocessing or augmentation, such as adding null
vertices to pad hyperlinks to a common size \citep{zhen2023community}. In
contrast, our framework works directly with observed hyperlinks of varying sizes
and exploits low-rank structure through entity embeddings and hyperlink random
effects, avoiding the need to represent the hypergraph as a fixed-order tensor.

\paragraph{Deep architectures for hypergraph learning.}
Recent neural architectures extend graph learning to hypergraphs in several directions. For instance, 
\cite{feng2019hypergraph} define spectral convolution using a hypergraph Laplacian,
\cite{yadati2019hypergcn} train graph convolutional networks on sparse graph approximations of hyperedges,
\cite{zhang2020hypersagnn} use self-attention for hyperedges,
\cite{dong2020hnhn} update both node and hyperedge representations through nonlinear hypergraph convolutions,
\cite{huang2021unignn} give a unified message-passing view of graph and hypergraph neural networks,
and \cite{chien2022allset} implement hypergraph layers through learnable multiset functions.
These models are flexible and powerful for prediction, but they are not primarily designed to impose a compact low-rank structure, while
the low-rank hypothesis motivates parsimonious representations of complex interactions \citep{thibeault2024low}.
Our approach instead learns an identifiable and interpretable low-rank representation directly for general, non-uniform hypergraphs.

\paragraph{Heterogeneous hypergraph learning.}
Many methods incorporate observed heterogeneity, such as hyperlink types or semantic relations, through type-aware interactions or
hyperedge-type decompositions
\citep{sun2021heterogeneous,li2022heterogeneous,lu2023neighborhood}.
These methods rely on observed type information, whereas our framework models
latent heterogeneity in hyperlink formation and can further distinguish
different mechanisms within hyperlinks sharing the same observed type. Related
work also learns edge- or hyperlink-level representations: 
\cite{jo2021edge} learns graph-edge representations through a dual hypergraph
transformation, while \cite{wu2024general} assigns each observed
hyperlink its own embedding. Such individual embeddings are flexible and have
been used for hypergraph generation \citep{wu2025denoising,marelash}, but the
number of embeddings grows with the number of observed hyperlinks. The random-effect
parametrization in this work introduces shared structure across hyperlinks, thus improving efficiency and
parsimony.

\subsection{Contribution and organization of the paper}

The contributions of this paper are threefold. First, we introduce a general
framework for hypergraph representation learning with hyperlink random effects.
The framework exploits low-rank structure in hypergraphs while accommodating
latent heterogeneity in hyperlink formation. Second, we establish identifiability
of the model parameters and develop expectation-maximization-based estimation
procedures. The framework allows flexible specifications for the hyperlink
random effects; in this paper, we study three choices: categorical, Gaussian
mixture, and score-based effects. For each choice, we develop the corresponding
theory to support the model formulation, identifiability conditions, and
algorithm construction. Third, we conduct empirical studies through simulation and real-data
applications to demonstrate the effectiveness of the proposed framework in efficiently 
recovering latent structure, capturing heterogeneous interaction patterns, and
producing useful hypergraph representations.

The remainder of the paper is organized as follows. Section \ref{sec:model} introduces the general framework and model formulation of hypergraph representation learning using hyperlink random effects. Section \ref{sec:estimation} studies identifiability conditions and estimation procedures under three choices of the hyperlink random effects. Section \ref{sec:numerics} presents empirical results from simulations and real-data applications. Section \ref{sec:conclusion} concludes the paper with a discussion. Additional  numerical results, additional discussion, and proofs of theoretical results are provided in the Appendix.

\section{Hypergraph Representation Learning via Hyperlink Random Effects}
\label{sec:model}

Let $\cH([n],\cE_m)$ denote an observed hypergraph with node set
$[n]=\{1,\ldots,n\}$ and hyperlink multiset
$\cE_m=\{e_1,\ldots,e_m\}$. Here, ``multiset'' means that identical
hyperlinks are allowed to be observed repeatedly. Each hyperlink
$e_j\subseteq[n]$ contains the nodes participating in the $j$-th interaction.

In this section, we introduce the general hyperlink random-effects framework for
hypergraph representation learning. We assign each node $i\in[n]$ a latent
embedding $\bz_i$, which can reside in general latent spaces, and a degree
parameter $\alpha_i\in\RR$. Each hyperlink $j\in[m]$ is associated with an
unobserved random effect $\bff_j$, independently drawn from a latent distribution
$\PP_{\bF}$. Random effects are a standard tool for modeling unobserved
unit-level heterogeneity \citep{gelman2007data}; in our setting, the units are hyperlinks, and
$\PP_{\bF}$ shares information across hyperlinks while allowing them to arise
from different latent formation mechanisms. Conditional on the random effects $\{\bff_j\}_{j\in[m]}$ and the node parameters $\{\bz_i,\alpha_i\}_{i\in[n]}$, the node-inclusion indicators
$\bone_{\{i\in e_j\}}$ are independent Bernoulli variables:
\begin{equation}
\label{eq:hre}
    \bone_{\{i\in e_j\}}\mid \{\bff_j\}_{j\in[m]},\{\bz_i,\alpha_i\}_{i\in[n]}
    \sim
    \mathrm{Bernoulli}\left[
    \sigma\{\cK(\bff_j,\bz_i)+\alpha_i\}
    \right],~ i\in[n],
\end{equation}
where $\sigma(x)=\exp(x)/(1+\exp(x))$ is the logistic link and $\cK$ is a kernel measuring the compatibility between a hyperlink random effect and a node embedding.
In \eqref{eq:hre}, $\bz_i$ represents the latent position of node $i$, $\alpha_i$ captures its individual popularity to appear in hyperlinks, and $\bff_j$ represents the latent context or interaction pattern generating hyperlink $j$. The distribution $\PP_{\bF}$ captures population-level heterogeneity in hyperlink formation. For example, a discrete $\PP_{\bF}$ corresponds to a finite number of hyperlink types, a Gaussian mixture allows both type-level and within-type variation, and a score-based $\PP_{\bF}$ can capture more flexible non-Gaussian patterns.
In Section~\ref{sec:estimation}, we discuss three examples under this general framework: categorical, Gaussian-mixture, and score-based random effects.

The marginal likelihood of the observed hyperlink multiset is accordingly
\begin{equation}
\label{eq:general_likelihood}
    \mathcal{L}(\cE_m\mid\bZ,\balpha,\PP_{\bF})
    =
    \prod_{j=1}^m
    \int
    \prod_{i=1}^n
    \sigma\{\cK(\bff,\bz_i)+\alpha_i\}^{\bone_{\{i\in e_j\}}}
    \left[1-\sigma\{\cK(\bff,\bz_i)+\alpha_i\}\right]^{1-\bone_{\{i\in e_j\}}}
    d\PP_{\bF}(\bff),
\end{equation}
where $\bZ=(\bz_1,\ldots,\bz_n)^\top\in\RR^{n\times d}$ and $\balpha=(\alpha_1,\ldots,\alpha_n)^\top$.
In the rest of the paper, we focus on the inner-product kernel $\cK(\bff,\bz_i)=\bff^\top\bz_i$ in a $d$-dimensional Euclidean latent space $\RR^{d}$. Then
$\log\{\PP(i\in e_j|\bff_j)/\PP(i\notin e_j|\bff_j)\}
= \bff_j^\top\bz_i+\alpha_i$.
Thus, the hyperlink distribution in model \eqref{eq:hre} depends on
$\bff_j^{\top}\bz_i+\alpha_i$, which raises identifiability issues. Specifically,
for any $\bmu\in\RR^d$ and any nonsingular $\bA\in\RR^{d\times d}$, let
$\bff'=\bA^\top(\bff-\bmu)$, $\bz_i'=\bA^{-1}\bz_i$, and
$\alpha_i'=\alpha_i+\bmu^\top\bz_i$. Then
${\bff'}^\top\bz_i'+\alpha_i'=\bff^\top\bz_i+\alpha_i$. Thus, the location,
scale, and orientation of the latent space are not identifiable without further
normalization. To fix this, let $\mathcal{S}(\PP_{\bF})$ be an equivariant
positive-definite scale functional of $\PP_{\bF}$, and we consider the following identifiability conditions
\begin{equation}\label{eq:sgm_identifiability}
    \E_{\PP_{\bF}}(\bff)=\bzero_d,~~
    \mathcal{S}(\PP_{\bF})=\bI_d,~~
    \bZ^\top\bZ=\diag(\lambda_1,\ldots,\lambda_d),~~
    \lambda_1>\cdots>\lambda_d>0 .
\end{equation}
Under these normalization conditions and regular conditions on $\PP_{\bF}$,  the parameters
$(\PP_{\bF},\bZ,\balpha)$ are identifiable up to coordinate-wise sign flips,
which can be fixed by specifying the sign of one node-embedding coordinate in each latent dimension.

\paragraph{Shared-representation recovery.}
We now study 
shared-representation recovery of the framework.  Model \eqref{eq:hre} assumes conditional independence
of node inclusion indicators within each hyperlink given its random effect.
Our recovery theory, however, does not require this conditional-independence
assumption: we allow within-hyperlink conditional dependence while only assuming independence across hyperlinks.

Specifically, let $\bv_j=(\bone_{\{i\in e_j\}})_{i=1}^n$, and write
$\vartheta=(\bZ,\balpha,\bgamma)$, with the
random-effect distribution belonging to a finite-dimensional prior family
$\{\PP_{\bF,\bgamma}:\bgamma\in\Gamma\subset\RR^s\}$.
Define
\[
    p_j(\bff)
    =
    \prod_{i=1}^n
    \sigma(\bff^\top\bz_i+\alpha_i)^{v_{ji}}
    \{1-\sigma(\bff^\top\bz_i+\alpha_i)\}^{1-v_{ji}},
\]
which, under the model's conditional-independence assumption, is the
conditional probability of observing the hyperlink incidence vector $\bv_j$
given random effect $\bff$, and let
$q_{n,\vartheta}(\bv_j)=\int p_j(\bff)dP_{F,\gamma}(\bff)$
be the corresponding single-hyperlink marginal mass.
For the recovery theory, we use this marginal mass as a working law and
assume \emph{marginal compatibility},
$\PP(\bv_j=\bv)=q_{n,\vartheta_n^\star}(\bv)$, where
$\vartheta_n^\star=(\bZ_n^\star,\balpha_n^\star,\bgamma^\star)$.

\begin{theorem}[Shared-representation recovery]
\label{thm:shared-main}
Let $\widehat\vartheta
=(\widehat{\bZ},\widehat{\balpha},\widehat{\bgamma})$
globally maximize
$\widehat M_{m,n}(\vartheta)
=(mn)^{-1}\sum_{j=1}^m\log q_{n,\vartheta}(\bv_j)$
over $\cT_n$. Suppose Assumptions~\ref{ass:shared-space}--\ref{ass:shared-regularity}
in Appendix~\ref{sec:thm} hold, with fixed $d,s$.
Set $p_n=n(d+1)+s$ and $a_{m,n}=\sqrt{p_n\log m/m}$, and let
$g_n(\varepsilon)$ denote the population criterion gap
in~\eqref{eq:shared-population-gap}.
If $a_{m,n}\to0$ and $a_{m,n}/g_n(\varepsilon)\to0$ for every
$\varepsilon>0$, then there exists a diagonal sign matrix
\(D_{m,n}\in\cD_d\) such that
\[
 \frac{\|\widehat{\bZ}\bD-\bZ_n^\star\|_F}{\sqrt n}
 +\frac{\|\widehat{\balpha}-\balpha_n^\star\|_2}{\sqrt n}
 +\|\widehat{\bgamma}-\bgamma^\star\|_2
 \xrightarrow{\PP}0.
\]
\end{theorem}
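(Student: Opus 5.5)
The plan is a standard M-estimation consistency argument in which the parameter dimension $p_n$ grows. Let $M_{m,n}(\vartheta)=\E\widehat M_{m,n}(\vartheta)=(mn)^{-1}\sum_j \E\log q_{n,\vartheta}(\bv_j)$ be the population criterion. Marginal compatibility gives $\PP(\bv_j=\bv)=q_{n,\vartheta_n^\star}(\bv)$. So $M_{m,n}(\vartheta^\star_n)-M_{m,n}(\vartheta)=n^{-1}\mathrm{KL}(q_{n,\vartheta_n^\star}\,\|\,q_{n,\vartheta})\ge 0$, and $\vartheta_n^\star$ maximizes the population criterion. I take this to be the quantity underlying the gap $g_n(\varepsilon)$ in~\eqref{eq:shared-population-gap}. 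That is, $g_n(\varepsilon)$ is the infimum of $M_{m,n}(\vartheta_n^\star)-M_{m,n}(\vartheta)$ over $\vartheta\in\cT_n$ whose sign-aligned distance $\rho_n(\vartheta,\vartheta_n^\star)=\min_{\bD\in\cD_d}\{\|\bZ\bD-\bZ_n^\star\|_F/\sqrt n+\|\balpha-\balpha_n^\star\|_2/\sqrt n+\|\bgamma-\bgamma^\star\|_2\}$ is at least $\varepsilon$. Any sign or identifiability issues are absorbed into $\rho_n$, so the normalization~\eqref{eq:sgm_identifiability} only enters through this definition of $g_n$.

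The deterministic reduction is as follows. On the event $\{\rho_n(\widehat\vartheta,\vartheta_n^\star)\ge\varepsilon\}$ we have
\[
g_n(\varepsilon)\le M(\vartheta^\star_n)-M(\widehat\vartheta)\le \{M(\vartheta^\star_n)-\widehat M(\vartheta^\star_n)\}+\{\widehat M(\widehat\vartheta)-M(\widehat\vartheta)\}\le 2\sup_{\vartheta\in\cT_n}|\widehat M_{m,n}(\vartheta)-M_{m,n}(\vartheta)|,
\]
using $\widehat M(\widehat\vartheta)\ge\widehat M(\vartheta^\star_n)$. It therefore suffices to show $\sup_{\cT_n}|\widehat M-M|=O_\PP(a_{m,n})$. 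Then $\PP(\rho_n\ge\varepsilon)\le\PP(2\sup|\widehat M-M|\ge g_n(\varepsilon))\to0$, because $a_{m,n}/g_n(\varepsilon)\to0$. The sign matrix $\bD_{m,n}$ is the minimizer in $\rho_n$.

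The uniform deviation bound is the main step. I would use a union bound over a net combined with a Lipschitz argument.

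\textbf{Boundedness.} Under the regularity assumptions, $\cT_n$ bounds $\max_i\|\bz_i\|$, $|\alpha_i|$, and $\bgamma$, and $\PP_{\bF,\bgamma}$ has controlled tails. Then each logit $\bff^\top\bz_i+\alpha_i$ lies in a set of high prior mass where it is bounded. Hence $X_j(\vartheta)=n^{-1}\log q_{n,\vartheta}(\bv_j)$ is bounded by a constant $C$: each of the $n$ Bernoulli factors contributes $O(1)$ to the log. The $X_j$ are independent across $j$, by the across-hyperlink independence only; within-hyperlink dependence is irrelevant since $X_j$ is a function of the whole vector $\bv_j$. Hoeffding then gives $\PP(|\widehat M(\vartheta)-M(\vartheta)|>t)\le 2\exp(-mt^2/(2C^2))$ for each fixed $\vartheta$.

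\textbf{Lipschitz property.} The map $\vartheta\mapsto X_j(\vartheta)$ is Lipschitz in a suitable norm with constant $L$ polynomial in $n$. The derivative of $\log q$ is a posterior average of $\sum_i (v_{ji}-\sigma)(\bff,1)$. In the scaled norm $\|\cdot\|_F/\sqrt n$ this is $O(1)$ per coordinate block, and the $\bgamma$-derivative is controlled by a score-regularity assumption.

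\textbf{Net.} An $\eta$-net of $\cT_n$ has cardinality at most $(C'/\eta)^{p_n}$. Take $\eta=m^{-1}$ and $t\asymp\sqrt{p_n\log m/m}$; the union bound yields $\exp\{p_n\log(C'm)-c\,p_n\log m\}\to0$ for a large constant.

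This gives $\sup|\widehat M-M|=O_\PP(a_{m,n})$.

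I expect the main obstacle to be the boundedness and Lipschitz control of $\log q_{n,\vartheta}$ uniformly over $\cT_n$ when $\PP_{\bF,\bgamma}$ has unbounded support, e.g.\ a Gaussian mixture. There the logits are unbounded, and $\log q$ could in principle be very negative. My first attempt would be a lower bound: restrict the integral to a fixed ball $\|\bff\|\le R$ of prior mass bounded below uniformly in $\bgamma\in\Gamma$, giving $\log q\ge \log c_R - nC_R$. The upper bound is trivial since $q\le1$, so $|X_j|\le C$. For the Lipschitz part I would bound the posterior moments of $\|\bff\|$ by prior moments divided by the same mass lower bound. If this fails, I would truncate $\cT_n$ and treat the tail through the assumed regularity conditions.
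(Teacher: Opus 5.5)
Your argument is essentially the paper's proof: the KL identity for the population gap, the bound $M_n(\vartheta_n^\star)-M_n(\widehat\vartheta)\le 2\Delta_{m,n}$ from global maximality, and a uniform law of large numbers via a $1/m$-net of size $(1+2Rm)^{p_n}$ in the scaled metric $r_n$, combined with Hoeffding's inequality and a union bound. The obstacle you anticipate does not arise, because Assumption~\ref{ass:shared-regularity} directly posits $-C\le h_{n,\vartheta}\le 0$ and a Lipschitz constant in $r_n$ that does not depend on $n$; this matters, since your fallback of bounding posterior moments by prior moments over the mass bound $c_Re^{-nC_R}$ would make $L$ exponential in $n$ and ruin the entropy count. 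Also, since signs are fixed inside $\cT_n$, the paper defines $g_n$ using $r_n$ and obtains the result with $\bD=\bI_d$, which implies your sign-aligned version because $\rho_n\le\sqrt3\,r_n$.
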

Further details on identifiability and recovery consistency are provided
in Appendix~\ref{append:id}.
Theorem~\ref{thm:shared-main} provides a statistical recovery guarantee for
the global maximizer of the marginal objective. In practice, direct maximizing the objective
is challenging because evaluating the marginal likelihood requires
integration over the hyperlink random effects, which may be analytically
intractable. We therefore adopt an expectation-maximization (EM) strategy
that treats these random effects as latent variables to facilitate
optimization. In the next section, we introduce three specifications of
the hyperlink random effects: categorical, Gaussian mixture, and
score-based effects, and develop efficient estimation algorithms using
computational techniques tailored to each specification.

\section{Identifiability and Estimation under Different Random Effect Choices}
\label{sec:estimation}
The random-effect distribution $\PP_{\bF}$ can be chosen flexibly in practice,
depending on the structure of the application and the desired level of model
complexity. In this section, we introduce three representative choices:
categorical, Gaussian-mixture, and score-based random effects. The theoretical derivations supporting the identifiability
and estimation of these examples are provided in Appendix~\ref{append:alg}.
In the component-based examples below, namely the
categorical and Gaussian-mixture random effects, we use $K$ to denote the number
of latent hyperlink types and $\bpi=(\pi_1,\ldots,\pi_K)^\top$ to denote the
mixing probabilities, with $\bPi=\diag(\bpi)$.

\paragraph{Categorical random effects.} As a first example, let $\PP_{\bF}$ be a categorical distribution with $K$
support points, where $\delta_{\bff_k}$ denotes the Dirac measure concentrated at $\bff_k$:
\[
    \PP_{\bF}=\sum_{k=1}^K \pi_k \delta_{\bff_k},\qquad
    \pi_k>0,\quad \sum_{k=1}^K \pi_k=1.
\]
Equivalently, each hyperlink has a latent type \(k_j\in[K]\), with
\(\PP(k_j=k)=\pi_k\) and \(\bff_j=\bff_{k_j}\). The categorical random-effect
specification is one of the most parsimonious ways to model latent
heterogeneity, where each \(\bff_k\), \(k\in[K]\), represents one type of
latent interaction mechanism.  Let
$\bF=(\bff_1,\ldots,\bff_K)^\top\in\RR^{K\times d}$ and define
$q_{ki}=\sigma(\bff_k^\top\bz_i+\alpha_i)$. The likelihood is
\begin{equation}
\label{eq:cat_likelihood}
    \mathcal{L}(\bV\mid\bF,\bZ,\balpha,\bpi)
    =
    \prod_{j=1}^m
    \sum_{k=1}^K
    \pi_k
    \prod_{i=1}^n
    q_{ki}^{v_{ji}}(1-q_{ki})^{1-v_{ji}}
    =
    \prod_{j=1}^m\sum_{k=1}^K\pi_k p_j(\bff_k).
\end{equation}
For this example, the identifiability conditions become
\begin{equation}
\label{eq:cat_identifiability}
    \sum_{k=1}^K\pi_k\bff_k=\bzero_d,~~
    \bF^\top\bPi\bF=\bI_d,~~
    \bZ^\top\bZ=\diag(\lambda_1,\ldots,\lambda_d),~
    \lambda_1>\cdots>\lambda_d>0 .
\end{equation}
With the identifiability conditions, if the component-specific probability vectors
$(q_{k1},\ldots,q_{kn})$ are distinct across $k$, then
$(\bF,\bZ,\balpha,\bpi)$ is identifiable up to coordinate-wise sign flip and permutation of the component
labels. The categorical model can be estimated by an EM (expectation-maximization)
algorithm. Given \(\bQ=(q_{ki})\in(0,1)^{K\times n}\), where \(q_{ki}\) is the
candidate probability that node \(i\) appears on hyperlink \(j\) conditional on
\(k_j=k\), the E-step computes the posterior probability 
\(\Delta_{jk}=\PP(k_j=k\mid\bv_j)\), and the M-step updates \(\bpi\) and \(\bQ\). After estimating $\bQ$, we recover $\bF$, $\bZ$, and
$\balpha$ from the logit matrix $\bTheta=\operatorname{logit}(\bQ)$ and then
normalize them according to \eqref{eq:cat_identifiability}. The EM algorithm
for categorical hyperlink random effects is summarized in Algorithm~\ref{alg:EM}. Details on the implementation, including initialization, are provided in
Section~\ref{append:alg}.

\begin{algorithm}[t]
\caption{EM for categorical hyperlink random effects}
\label{alg:EM}
\small
\begin{algorithmic}[1]
\Require $\bV$; number of components $K$; embedding dimension $d$; initialization $\bQ^{(0)},\bpi^{(0)}$; tolerance $\epsilon$
\Ensure $\widehat{\bF},\widehat{\bZ},\widehat{\balpha},\widehat{\bPi}$
\State Set $t=0$.
\Repeat
\State Compute $\ell_{jk}^{(t)}=\prod_{i=1}^n(q_{ki}^{(t)})^{v_{ji}}(1-q_{ki}^{(t)})^{1-v_{ji}}$ and $\Delta_{jk}^{(t)}=\pi_k^{(t)}\ell_{jk}^{(t)}/\sum_{s=1}^K\pi_s^{(t)}\ell_{js}^{(t)}$.
\State Update $\pi_k^{(t+1)}=m^{-1}\sum_{j=1}^m\Delta_{jk}^{(t)}$ and $q_{ki}^{(t+1)}=\sum_{j=1}^m\Delta_{jk}^{(t)}v_{ji}/\sum_{j=1}^m\Delta_{jk}^{(t)}$.
\State Set $t\leftarrow t+1$.
\Until{$\|\bpi^{(t+1)}-\bpi^{(t)}\|_2\le\epsilon$}
\State Set $\widehat{\bQ}=\bQ^{(t+1)}$, $\widehat{\bpi}=\bpi^{(t+1)}$, $\widehat{\bPi}=\diag(\widehat{\bpi})$, $\widehat{\bTheta}=\operatorname{logit}(\widehat{\bQ})$, $\widehat{\balpha}=\widehat{\bTheta}^\top\widehat{\bpi}$, and $\widehat{\bG}=\widehat{\bTheta}-\bone_K\widehat{\balpha}^\top$.
\State Let $\widehat{\bG}=\bU_d\bD_d\bR_d^\top$ be the rank-$d$ SVD and set $\bF_0=\bU_d\bD_d^{1/2}$, $\bZ_0=\bR_d\bD_d^{1/2}$.
\State Recenter by $\bar\bff=\bF_0^\top\widehat{\bpi}$: set $\bF_0\leftarrow\bF_0-\bone_K\bar\bff^\top$ and $\widehat{\alpha}_i\leftarrow\widehat{\alpha}_i+\bar\bff^\top\bz_{0i}$ for $i\in[n]$.
\State Normalize with $\bS=\bF_0^\top\widehat{\bPi}\bF_0$: set $\bF_1=\bF_0\bS^{-1/2}$ and $\bZ_1=\bZ_0\bS^{1/2}$.
\State Eigendecompose $\bZ_1^\top\bZ_1=\bQ_0\bLambda\bQ_0^\top$ with decreasing diagonal entries in $\bLambda$; return $\widehat{\bF}=\bF_1\bQ_0$, $\widehat{\bZ}=\bZ_1\bQ_0$, $\widehat{\balpha}$, and $\widehat{\bPi}$.
\end{algorithmic}
\end{algorithm}

\paragraph{Gaussian mixture random effects.} As a second example, let $\PP_{\bF}$ be a Gaussian mixture:
\[
    \PP_{\bF}
    =
    \sum_{k=1}^K \pi_k \mathcal{N}(\bmu_k,\bSigma),
\]
where $\bmu_k\in\RR^d$ is the mean of component $k$ and $\bSigma$ is a common
positive-definite covariance matrix. Let
$\bU=(\bmu_1,\ldots,\bmu_K)^\top\in\RR^{K\times d}$. This model extends the
categorical example by allowing hyperlinks in the same latent group to have
continuous within-group variation. The likelihood is
\begin{equation}
\label{eq:gmm_likelihood}
    \mathcal{L}(\bV\mid\bZ,\balpha,\bpi,\bU,\bSigma)
    =
    \prod_{j=1}^m
    \sum_{k=1}^K
    \pi_k
    \int p_j(\bff)\phi_d(\bff;\bmu_k,\bSigma)\,d\bff,
\end{equation}
where $\phi_d(\cdot;\bmu,\bSigma)$ is the $d$-dimensional Gaussian density.
Using the second moment of $\bF$ as the scale functional,
the identifiability conditions become
\begin{equation}
\label{eq:gmm_identifiability}
    \sum_{k=1}^K\pi_k\bmu_k=\bzero_d,~~
    \bSigma+\sum_{k=1}^K \pi_k\bmu_k\bmu_k^\top=\bI_d,~~
    \bZ^\top\bZ=\diag(\lambda_1,\ldots,\lambda_d),~
    \lambda_1>\cdots>\lambda_d>0 .
\end{equation}
If $\bmu_k$ are distinct, $(\bZ,\balpha,\bpi,\bU,\bSigma)$ is
identifiable up to coordinate-wise sign changes and permutation of component
labels. We estimate this model by a variational EM.
The final estimates are centered, whitened, and rotated to satisfy
\eqref{eq:gmm_identifiability}. Details are given in
Appendix~\ref{append:alg:gm}.

\paragraph{Score-based random effects.} As a third example, we model $\PP_{\bF}$ by a score-based stochastic differential
equation (SDE) random effect $\PP_{\bF,\btheta}$, whose time-dependent score network is
parameterized by $\btheta$ \citep{song2020score}. The likelihood is
\begin{equation}
\label{eq:sgm_likelihood}
    \mathcal{L}(\bV\mid\bZ,\balpha,\btheta)
    =
    \prod_{j=1}^m
    \int p_j(\bff)\,d\PP_{\bF,\btheta}(\bff).
\end{equation}
This specification allows $\PP_{\bF}$ to be multimodal, skewed, or otherwise
non-Gaussian without imposing a finite mixture form. For a score-based random effect,
the neural-network parameter $\btheta$ itself is not identifiable, because
different networks may induce the same distribution. The identifiable object is
the generated distribution $\PP_{\bF,\btheta}$. We use the covariance scale
functional and impose
\begin{equation}
\label{eq:sgm_identifiability}
    \E_{\PP_{\bF,\btheta}}(\bff)=\bzero_d,~~
    \operatorname{Cov}_{\PP_{\bF,\btheta}}(\bff)=\bI_d,~~
    \bZ^\top\bZ=\diag(\lambda_1,\ldots,\lambda_d),~
    \lambda_1>\cdots>\lambda_d>0 .
\end{equation}
Under the general identifiability condition in Section~\ref{sec:model}, this
identifies $\PP_{\bF,\btheta}$, $\bZ$, and $\balpha$ up to coordinate-wise sign
changes. Estimation is achieved by a variational EM algorithm with a likelihood-weighted denoising score matching under a
variance-preserving SDE \citep{song2020score}.  After training, the estimates are transformed to
satisfy \eqref{eq:sgm_identifiability}. Let
$\widehat{\bmu}=\E_{\widehat{\PP}_{\bF}}(\bff)$ and
$\widehat{\bSigma}=\operatorname{Cov}_{\widehat{\PP}_{\bF}}(\bff)$, estimated
by Monte Carlo samples from the learned random effect. Set
\(\alpha_{0i}=\widehat{\alpha}_i+\widehat{\bmu}^\top\widehat{\bz}_i\).
The normalized node embeddings are obtained by a rotation, and the normalized
random-effect distribution is obtained by a pushforward transformation of
\(\widehat{\mathbb{P}}_{\bF}\); both depend on the Monte Carlo estimates.
Details on the implementation for the score-based random effects are given in
Appendix~\ref{append:alg:score}.

\section{Experiments}\label{sec:numerics}
We evaluate the proposed methods on both synthetic and real-world hypergraphs. We refer to the three proposed hypergraph representation learning (HRL) methods as HRL-Cat, HRL-GM, and HRL-Score, corresponding respectively to categorical, Gaussian-mixture, and score-based random effects. We also include HRL-Fixed~\citep{wu2024general}, which can be viewed as a special case of HRL-Cat in which each observed hyperlink is assigned its own latent category similar to fixed effects.

\subsection{Simulated hypergraphs}\label{sec:sim}
We generate synthetic hypergraphs from the model in Section~\ref{sec:model},
considering two distributions for the hyperlink random effects
\(\{\bff_j\}_{j=1}^m\): a Gaussian mixture and a heavy-tailed \(t\)-mixture
with \(4\) degrees of freedom. We fix the number of nodes and vary the number
of hyperlinks. Additional simulation details and results are provided in
Appendix~\ref{append:simulation}.
We evaluate two aspects of algorithm performance. First, we measure
node-embedding recovery using the normalized Frobenius error
\(\|\widehat{\bZ}-\bZ^*\|_F/\sqrt{nd}\), with results shown in
Figure~\ref{fig:sim_combine}. Second, we evaluate latent group recovery using
the adjusted Rand index (ARI) and normalized mutual information (NMI) between
the true and estimated group assignments. Due to space constraints, the mean
ARI and NMI over 20 repetitions are reported in
Table~\ref{tab:simulation-clustering}, while more detailed tables with standard
errors are provided in Appendix~\ref{append:sim-continuous-random-effects}.

\begin{figure}[t]
\centering\includegraphics[width=5in]{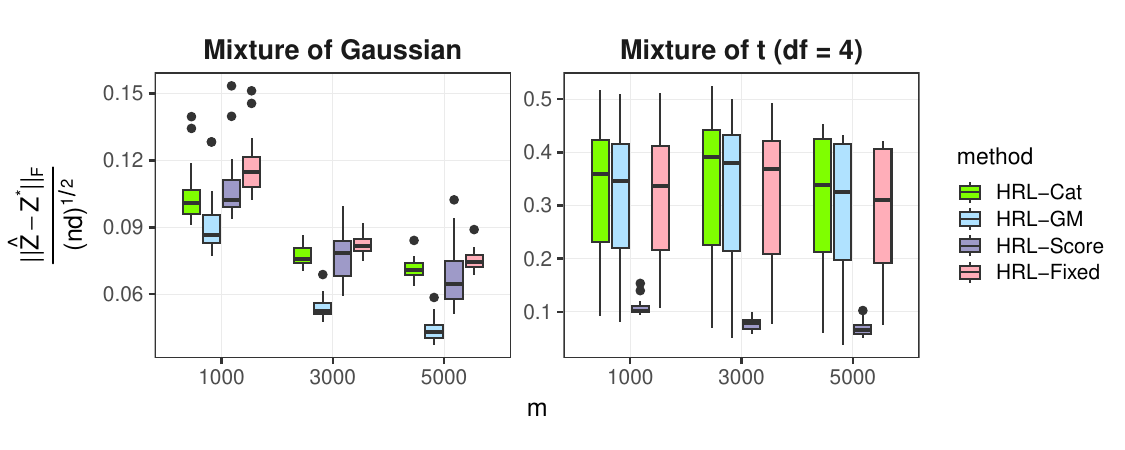}
	\caption{Recovery of node embeddings under Gaussian-mixture (left) and t-mixture (right), with $n=180$.}
    \label{fig:sim_combine}
\end{figure}

\begin{table}[t]
\centering
\small
\setlength{\tabcolsep}{3pt}
\caption{Clustering performance for Gaussian-mixture and heavy-tailed \(t\)-mixture designs. Detailed tables with standard errors are in Appendix~\ref{append:simulation}.}
\label{tab:simulation-clustering}
\begin{tabular}{lcccccccccccc}
\toprule
& \multicolumn{6}{c}{Gaussian mixture}
& \multicolumn{6}{c}{$t$-mixture, df = 4} \\
\cmidrule(lr){2-7} \cmidrule(lr){8-13}
Method
& \multicolumn{2}{c}{1000}
& \multicolumn{2}{c}{3000}
& \multicolumn{2}{c}{5000}
& \multicolumn{2}{c}{1000}
& \multicolumn{2}{c}{3000}
& \multicolumn{2}{c}{5000} \\
\cmidrule(lr){2-3} \cmidrule(lr){4-5} \cmidrule(lr){6-7}
\cmidrule(lr){8-9} \cmidrule(lr){10-11} \cmidrule(lr){12-13}
& ARI$\uparrow$ & NMI$\uparrow$
& ARI$\uparrow$ & NMI$\uparrow$
& ARI$\uparrow$ & NMI$\uparrow$
& ARI$\uparrow$ & NMI$\uparrow$
& ARI$\uparrow$ & NMI$\uparrow$
& ARI$\uparrow$ & NMI$\uparrow$ \\
\midrule
HRL-Cat
& 0.958 & 0.956 & 0.961 & 0.955 & 0.963 & 0.956
& 0.343 & 0.515 & 0.322 & 0.488 & 0.357 & 0.518 \\
HRL-GM
& 0.962 & 0.959 & \underline{0.962} & \underline{0.956} & 0.964 & \underline{0.956}
& \underline{0.343} & \underline{0.517} & \underline{0.344} & \underline{0.510} & \underline{0.358} & \underline{0.519} \\
HRL-Score
& \textbf{0.964} & \underline{0.960} & \textbf{0.965} & \textbf{0.959} & \textbf{0.966} & \textbf{0.959}
& \textbf{0.948} & \textbf{0.942} & \textbf{0.954} & \textbf{0.943} & \textbf{0.951} & \textbf{0.939} \\
HRL-Fixed
& \underline{0.962} & \textbf{0.960} & 0.962 & 0.956 & \underline{0.964} & \underline{0.956}
& 0.338 & 0.501 & 0.314 & 0.471 & 0.351 & 0.504 \\
\bottomrule
\end{tabular}
\end{table}

Under the Gaussian-mixture setup, HRL-GM is correctly specified and
consistently achieves the most accurate recovery of \(\bZ\). Meanwhile, all four
HRL variants recover the latent group structure well, with ARI and NMI values
above \(0.95\), supporting the validity of the algorithms and the proposed
identifiability normalization. Under the heavy-tailed \(t\)-mixture design,
HRL-Cat, HRL-GM, and HRL-Fixed fail to recover meaningful group assignments,
whereas HRL-Score continues to achieve both low embedding error and near-exact
group recovery. This robustness is consistent with its score-based formulation,
demonstrating its ability to adapt to more complex random-effect distributions.

\subsection{Real-world hypergraphs}\label{sec:real-data}

We now evaluate the utility of the learned hypergraph representations on
real-world hypergraphs. Specifically, we consider three downstream tasks:
row-disjoint cold-start experiment, multi-label classification, and node clustering.

\paragraph{Datasets.} We consider three real-world datasets from various domains. (i) The medical-record hypergraph built from MIMIC-III \cite{johnson2016mimic}, where nodes are International Classification of Diseases, Ninth Revision (ICD-9), and each patient record induces a hyperlink over the co-occurring clinical codes; (ii) The food-recipe hypergraph\footnote{\href{https://www.kaggle.com/datasets/hugodarwood/epirecipes/data}https://www.kaggle.com/datasets/hugodarwood/epirecipes/data}, where ingredients are nodes and each recipe forms a hyperlink containing its ingredients; (iii) The movie-rating hypergraph built from MovieLens\footnote{\href{https://grouplens.org/datasets/movielens/}{https://grouplens.org/datasets/movielens/}}, where movies are nodes and, for each user, the set of movies rated above 4 forms a hyperedge. Detailed construction and preprocessing procedures are provided in Appendix \ref{sec:real-data-details}.

\paragraph{Baselines.} We compare the four HRL-based methods (HRL-Cat, HRL-GM, HRL-Score, and
HRL-Fixed) with several baseline methods. For cold-start experiment, we compare
against NeuMF~\citep{he2017neural}, AllSet~\cite{chien2022allset}, HypeBoy~\citep{kim2024hypeboy}, MultiVAE~\citep{liang2018variational},
LightGCN~\citep{he2020lightgcn}, EHGNN~\citep{jo2021edge},
NNMF~\citep{dziugaite2015neural}, and GraphSAGE~\citep{hamilton2017inductive}. For node clustering, we exclude MultiVAE because
it does not support this task and compare only with the other four baseline methods.
For multi-label classification, we additionally include
Hyper-SAGNN~\citep{zhang2020hypersagnn} as a baseline.

\paragraph{Row-disjoint cold-start experiment.} In this task, we randomly hold out a subset of entire hyperlinks during training. For each hold-out hyperlink, we reveal a fraction $r_{\mathrm{obs}}$ of both present and absent entries
and predict node-inclusion probabilities for the remaining coordinates. We evaluate
predictive performance using the area under the receiver operating characteristic
curve (AUROC) and the area under the precision--recall curve (AUPRC), for which
larger values are better, as well as binary cross-entropy (BCE) and Brier score, for
which smaller values are better. As shown in Table~\ref{tab:cold-main}, our proposed HRL-GM and HRL-Score
occupy the top two positions across all metrics on all three datasets, outperforming HRL-Fixed and the other baselines. Even our simplest random-effect specification, HRL-Cat, outperforms HRL-Fixed across all metrics on Food Recipe and MIMIC-III, supporting the value of learning population-level hyperlink heterogeneity to improve prediction for new interactions with
limited observations.

\begin{table}[t]
\centering
\caption{
Cold-start experiment with $r_{obs}=0.2$. For each metric, the best result is
highlighted in bold and the second-best result is underlined. Detailed tables with standard errors are in Appendix~\ref{append:simulation}.
}
\label{tab:cold-main}
\setlength{\tabcolsep}{3pt}
\renewcommand{\arraystretch}{1.08}
\resizebox{\linewidth}{!}{%
\begin{tabular}{@{}l*{12}{c}@{}}
\toprule
& \multicolumn{4}{c}{Food Recipe}
& \multicolumn{4}{c}{MIMIC-III}
& \multicolumn{4}{c}{MovieLens} \\
\cmidrule(lr){2-5}
\cmidrule(lr){6-9}
\cmidrule(lr){10-13}
Method
& AUC $\uparrow$ & AUPRC $\uparrow$ & BCE $\downarrow$ & Brier $\downarrow$
& AUC $\uparrow$ & AUPRC $\uparrow$ & BCE $\downarrow$ & Brier $\downarrow$
& AUC $\uparrow$ & AUPRC $\uparrow$ & BCE $\downarrow$ & Brier $\downarrow$ \\
\midrule
HRL-Cat
& 0.787 & 0.160 & 0.132 & 0.031
& 0.771 & 0.396 & 0.334 & 0.097
& 0.728 & 0.644 & 0.601 & 0.205 \\

HRL-GM
& \textbf{0.827} & \underline{0.194}
& \underline{0.121} & \underline{0.030}
& \underline{0.795} & \textbf{0.437}
& \underline{0.315} & \textbf{0.093}
& \textbf{0.783} & \textbf{0.712}
& \textbf{0.545} & \textbf{0.184} \\

HRL-Score
& \underline{0.826} & \textbf{0.195}
& \textbf{0.120} & \textbf{0.029}
& \textbf{0.796} & \underline{0.436}
& \textbf{0.314} & \underline{0.093}
& \underline{0.776} & \underline{0.703}
& \underline{0.552} & \underline{0.187} \\

HRL-Fixed
& 0.769 & 0.130 & 0.206 & 0.041
& 0.745 & 0.337 & 0.506 & 0.111
& 0.767 & 0.682 & 0.617 & 0.199 \\

NeuMF
& 0.755 & 0.132 & 0.133 & 0.031
& 0.768 & 0.386 & 0.330 & 0.097
& 0.754 & 0.664 & 0.576 & 0.196 \\

AllSet
& 0.758 & 0.123 & 0.131 & 0.031
& 0.765 & 0.378 & 0.331 & 0.098
& 0.659 & 0.572 & 0.635 & 0.222 \\

HypeBoy
& 0.781 & 0.142 & 0.128 & 0.031
& 0.768 & 0.388 & 0.329 & 0.097
& 0.681 & 0.597 & 0.655 & 0.231 \\

MultiVAE
& 0.789 & 0.147 & 0.127 & 0.030
& 0.769 & 0.386 & 0.330 & 0.098
& 0.652 & 0.569 & 0.638 & 0.223 \\

LightGCN
& 0.528 & 0.038 & 0.147 & 0.032
& 0.658 & 0.237 & 0.369 & 0.109
& 0.698 & 0.610 & 0.612 & 0.212 \\

EHGNN
& 0.774 & 0.141 & 0.129 & 0.031
& 0.769 & 0.377 & 0.347 & 0.104
& 0.606 & 0.515 & 0.655 & 0.232 \\

NNMF
& 0.658 & 0.071 & 0.143 & 0.032
& 0.749 & 0.348 & 0.338 & 0.100
& 0.693 & 0.615 & 0.615 & 0.213 \\

GraphSAGE
& 0.776 & 0.128 & 0.131 & 0.031
& 0.768 & 0.383 & 0.339 & 0.101
& 0.664 & 0.579 & 0.645 & 0.226 \\
\bottomrule
\end{tabular}%
}
\end{table}

\paragraph{Multi-label classification.} In the second task, we evaluate the downstream utility of the learned node embeddings
for supervised prediction. MovieLens provides genre labels for movies, and each movie
may belong to multiple genres. We therefore perform multi-label classification by
training one logistic regression classifier per genre, using the learned movie embeddings
as predictors. The training proportion varies from \(10\%\) to \(90\%\). Following
\citet{zhang2020hypersagnn}, we report Micro-F1 and Macro-F1 scores, where
Micro-F1 emphasizes performance on popular genres, while Macro-F1 assigns equal
weight to each genre and is therefore more sensitive to rare labels.
As shown in Figure~\ref{fig:movie_small}, the HRL-based methods consistently
outperform the non-HRL baselines across all training proportions. HRL-GM, HRL-Score,
and HRL-Fixed form the top-performing group under both Micro-F1 and Macro-F1,
whereas HRL-Cat is weaker but still outperforms the neural baselines. The advantage
of the HRL-based methods on Macro-F1 indicates that the learned embeddings capture
information useful not only for dominant genres but also for less frequent ones.
HRL-Cat performs worse than the other HRL variants, suggesting that its
categorical latent-distribution assumption may be less expressive for the richer
structure of MovieLens. HRL-Fixed performs slightly better in some regimes, but
remains very close to HRL-GM and HRL-Score.

\begin{figure}[t]
\centering\includegraphics[width=5in]{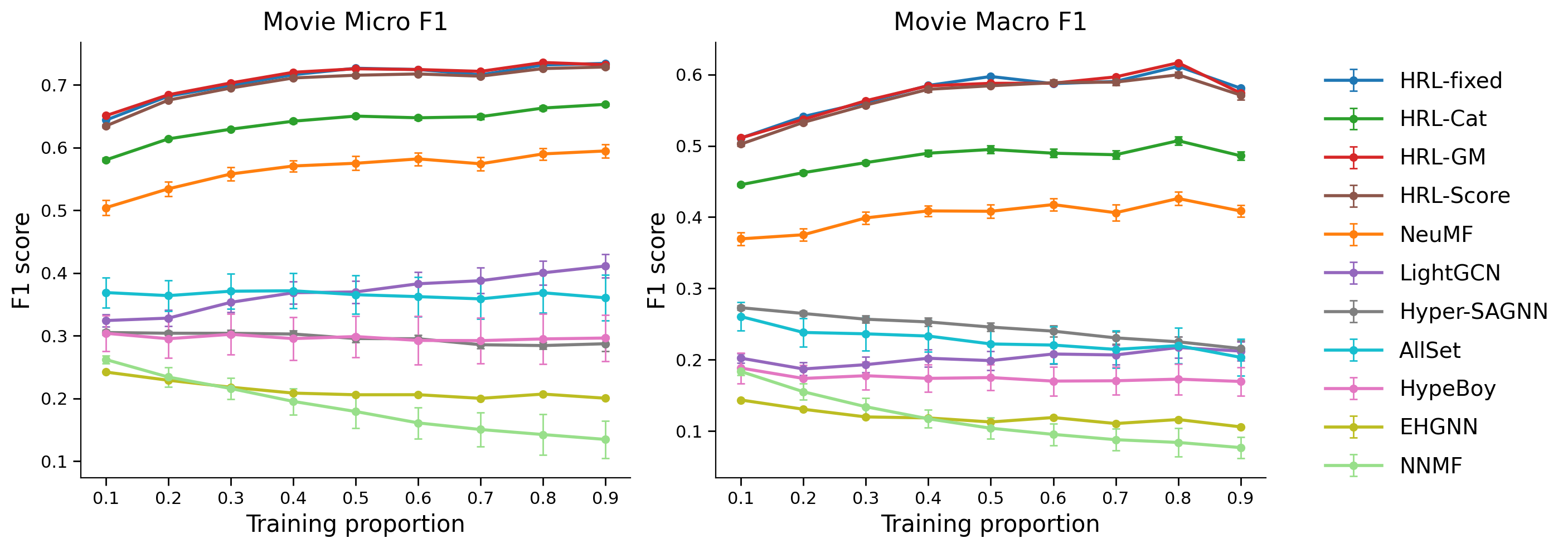}
	\caption{Performance of multi-label classification on Movie Lens datasets}
    \label{fig:movie_small}
\end{figure}

\paragraph{Node clustering.} 
Finally, we evaluate the learned representations on node clustering for the
medical-record and food-recipe hypergraphs. We present the results for the
medical-record hypergraph in the main text. In this hypergraph, nodes correspond to
ICD-9 codes, which encode information such as diagnoses, complications, and
care-related conditions. Thus, node clusters represent empirical clinical-code
phenotypes. For each method, we apply a soft clustering based on a Gaussian mixture model to the
learned embeddings and define core cluster members as those with posterior
membership probability at least \(0.9\). Since true labels are not observed, we
evaluate clustering quality through the semantic coherence of these core members.
Due to space constraints, we present here the results for HRL-Score with \(7\)
clusters, which yields the most interpretable and well-separated clustering structure
in Figure~\ref{fig:mimic_umap}. The model separates acute cardiac disease from
cardiometabolic comorbidities, and separates diabetes-specific complications from
general kidney dysfunction. Although the postoperative cluster is relatively large,
it accurately captures a broad, procedure-related care phenotype, reflecting the
diverse nature of ICD-9 annotations. In comparison, baseline embeddings tend to
exhibit less coherent clustering behavior. For example, LightGCN recovers some
local clinical-code patterns, but its clusters are either coarse when the number of
clusters is small or rather mixed when the number of clusters is large. Another
finding is that, while HRL-Fixed demonstrates strong performance in multi-label classification,
it produces a highly imbalanced clustering geometry. It separates several small
microclusters but leaves a large unseparated component even when the number of clusters increases.
Overall, HRL-Score learns embeddings with clear and meaningful cluster structures.
Detailed implementations and additional results are provided in
Appendix~\ref{sec:mimic}.

\begin{figure}[t]
    \centering
    \includegraphics[width=\linewidth]{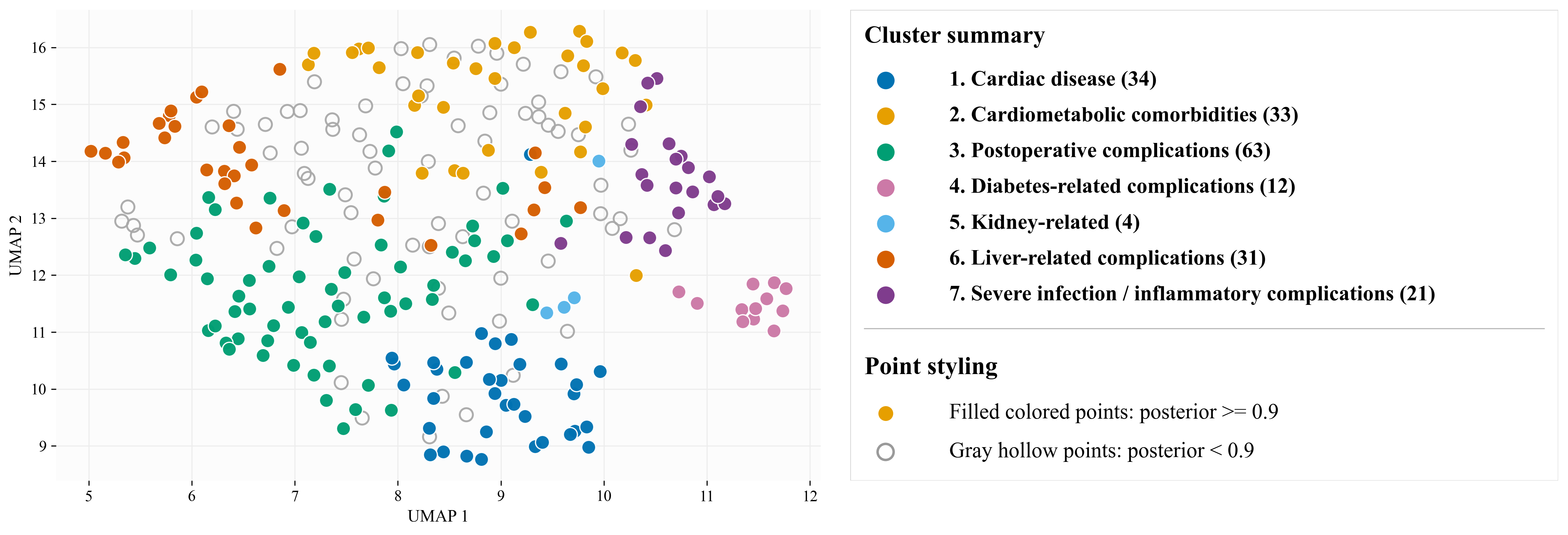}
    \caption{Uniform Manifold Approximation and Projection (UMAP) of the clustering results based on ICD-9 code embeddings
learned by HRL-Score with \(7\) clusters. Points with the same color belong to the
same group. Filled points represent core members with posterior membership
probability at least \(0.9\), and hollow points represent nodes with posterior
membership probability less than \(0.9\). The right panel reports the cluster names
and the sizes of the core-member sets.
    }
    \label{fig:mimic_umap}
\end{figure}

\paragraph{Discussion.}
The empirical results on real-world hypergraphs show that the HRL methods introduced
in this paper, especially HRL-GM and HRL-Score, provide strong and stable
performance across various downstream tasks. Different choices of hyperlink random
effects can lead to different empirical behavior and may be preferable for different
tasks and datasets. The choice is flexible and should be adapted to the application by
practitioners. In Appendix~\ref{append:select}, we also provide a data-driven strategy
for selecting the random-effect specification. Overall, these results suggest that the proposed hypergraph
representation learning framework learns meaningful representations and has broad
downstream utility.

\section{Conclusion}\label{sec:conclusion}

In this paper, we propose a general framework for hypergraph representation learning with hyperlink random effects. It represents each node by a latent embedding and a degree parameter and each hyperlink by a random-effect embedding, enabling direct modeling of non-uniform hypergraphs while capturing heterogeneous hyperlink formation and low-rank interaction structure. We establish identifiability of the model parameters and provide theoretical guarantees for representation-level recovery. We study categorical, Gaussian-mixture, and score-based random effects, develop corresponding estimation procedures, and demonstrate strong performance in simulations and real-data analyses.

We also acknowledge several limitations of the current work, which suggest directions for future research. The framework relies on a low-rank latent structure, so further study of model misspecification would be useful. The EM-based algorithms may be sensitive to initialization and local optima, and HRL-Score further depends on Monte Carlo approximations. Although the current initialization and implementation already produce good empirical results, further optimization of these components could improve practical applicability. Finally, the current framework focuses on a single observed hypergraph as a first step; extending the framework to dynamic, attributed, or multilayer hypergraphs would further broaden its applicability.

\bibliographystyle{plainnat}
\bibliography{ref}

@inproceedings{feng2019hypergraph,
  title={Hypergraph Neural Networks},
  author={Feng, Yifan and You, Haoxuan and Zhang, Zizhao and Ji, Rongrong and Gao, Yue},
  booktitle={Proceedings of the AAAI conference on artificial intelligence},
  volume={33},
  number={01},
  pages={3558--3565},
  year={2019}
}

@inproceedings{yadati2019hypergcn,
  title={Hypergcn: A new method for training graph convolutional networks on hypergraphs},
  author={Yadati, Naganand and Nimishakavi, Madhav and Yadav, Prateek and Nitin, Vikram and Louis, Anand and Talukdar, Partha},
  journal={Advances in neural information processing systems},
  volume={32},
  year={2019}
}

@inproceedings{zhang2020hypersagnn,
  title={Hyper-SAGNN: A Self-Attention Based Graph Neural Network for Hypergraphs},
  author={Zhang, Ruochi and Zou, Yuesong and Ma, Jian},
  booktitle={International Conference on Learning Representations},
  year={2020}
}

@article{dong2020hnhn,
  title={Hnhn: Hypergraph networks with hyperedge neurons},
  author={Dong, Yihe and Sawin, Will and Bengio, Yoshua},
  journal={arXiv preprint arXiv:2006.12278},
  year={2020}
}

@article{huang2021unignn,
  title={Unignn: a unified framework for graph and hypergraph neural networks},
  author={Huang, Jing and Yang, Jie},
  journal={arXiv preprint arXiv:2105.00956},
  year={2021}
}

@article{chien2022allset,
  title={You are allset: A multiset function framework for hypergraph neural networks},
  author={Chien, Eli and Pan, Chao and Peng, Jianhao and Milenkovic, Olgica},
  journal={arXiv preprint arXiv:2106.13264},
  year={2021}
}

@inproceedings{xia2021self,
  title={Self-supervised hypergraph convolutional networks for session-based recommendation},
  author={Xia, Xin and Yin, Hongzhi and Yu, Junliang and Wang, Qinyong and Cui, Lizhen and Zhang, Xiangliang},
  booktitle={Proceedings of the AAAI conference on artificial intelligence},
  volume={35},
  number={5},
  pages={4503--4511},
  year={2021}
}

@article{klamt2009hypergraphs,
  title={Hypergraphs and cellular networks},
  author={Klamt, Steffen and Haus, Utz-Uwe and Theis, Fabian},
  journal={PLoS computational biology},
  volume={5},
  number={5},
  pages={e1000385},
  year={2009},
  publisher={Public Library of Science San Francisco, USA}
}

@article{feng2021hypergraph,
  title={Hypergraph models of biological networks to identify genes critical to pathogenic viral response},
  author={Feng, Song and Heath, Emily and Jefferson, Brett and Joslyn, Cliff and Kvinge, Henry and Mitchell, Hugh D and Praggastis, Brenda and Eisfeld, Amie J and Sims, Amy C and Thackray, Larissa B and others},
  journal={BMC bioinformatics},
  volume={22},
  number={1},
  pages={287},
  year={2021},
  publisher={Springer}
}

@article{johnson2016mimic,
  title={MIMIC-{III}, a freely accessible critical care database},
  author={Johnson, Alistair EW and Pollard, Tom J and Shen, Lu and Lehman, Li-wei H and Feng, Mengling and Ghassemi, Mohammad and Moody, Benjamin and Szolovits, Peter and Anthony Celi, Leo and Mark, Roger G},
  journal={Scientific data},
  volume={3},
  number={1},
  pages={1--9},
  year={2016},
  publisher={Nature Publishing Group}
}

@article{johnson2023mimic,
  title={Mimic-{IV}},
  author={Johnson, Alistair and Bulgarelli, Lucas and Pollard, Tom and Horng, Steven and Celi, Leo Anthony and Mark, Roger},
  journal={PhysioNet. Available online at: https://physionet.org/content/mimiciv/1.0/(accessed August 23, 2021)},
  year={2020}
}

@article{benson2018simplicial,
  title={Simplicial closure and higher-order link prediction},
  author={Benson, Austin R and Abebe, Rediet and Schaub, Michael T and Jadbabaie, Ali and Kleinberg, Jon},
  journal={Proceedings of the National Academy of Sciences},
  volume={115},
  number={48},
  pages={E11221--E11230},
  year={2018},
  publisher={National Academy of Sciences}
}

@article{cencetti2021temporal,
  title={Temporal properties of higher-order interactions in social networks},
  author={Cencetti, Giulia and Battiston, Federico and Lepri, Bruno and Karsai, M{\'a}rton},
  journal={Scientific reports},
  volume={11},
  number={1},
  pages={7028},
  year={2021},
  publisher={Nature Publishing Group UK London}
}

@article{vahdat2021score,
  title={Score-based generative modeling in latent space},
  author={Vahdat, Arash and Kreis, Karsten and Kautz, Jan},
  journal={Advances in neural information processing systems},
  volume={34},
  pages={11287--11302},
  year={2021}
}

@article{thibeault2024low,
  title={The low-rank hypothesis of complex systems},
  author={Thibeault, Vincent and Allard, Antoine and Desrosiers, Patrick},
  journal={Nature Physics},
  volume={20},
  number={2},
  pages={294--302},
  year={2024},
  publisher={Nature Publishing Group UK London}
}

@article{ke2019community,
  title={Community detection for hypergraph networks via regularized tensor power iteration},
  author={Ke, Zheng Tracy and Shi, Feng and Xia, Dong},
  journal={arXiv preprint arXiv:1909.06503},
  year={2019}
}

@article{wu2024general,
  title={A general latent embedding approach for modeling non-uniform high-dimensional sparse hypergraphs with multiplicity},
  author={Wu, Shihao and Xu, Gongjun and Zhu, Ji},
  journal={arXiv preprint arXiv:2410.12108},
  year={2024}
}

@article{ji2022co,
  title={Co-citation and co-authorship networks of statisticians},
  author={Ji, Pengsheng and Jin, Jiashun and Ke, Zheng Tracy and Li, Wanshan},
  journal={Journal of Business \& Economic Statistics},
  volume={40},
  number={2},
  pages={469--485},
  year={2022},
  publisher={Taylor \& Francis}
}

@inproceedings{liang2018variational,
  title={Variational autoencoders for collaborative filtering},
  author={Liang, Dawen and Krishnan, Rahul G and Hoffman, Matthew D and Jebara, Tony},
  booktitle={Proceedings of the 2018 world wide web conference},
  pages={689--698},
  year={2018}
}

@inproceedings{he2020lightgcn,
  title={Lightgcn: Simplifying and powering graph convolution network for recommendation},
  author={He, Xiangnan and Deng, Kuan and Wang, Xiang and Li, Yan and Zhang, Yongdong and Wang, Meng},
  booktitle={Proceedings of the 43rd International ACM SIGIR conference on research and development in Information Retrieval},
  pages={639--648},
  year={2020}
}

@article{dziugaite2015neural,
  title={Neural network matrix factorization},
  author={Dziugaite, Gintare Karolina and Roy, Daniel M},
  journal={arXiv preprint arXiv:1511.06443},
  year={2015}
}

@inproceedings{he2017neural,
  title={Neural collaborative filtering},
  author={He, Xiangnan and Liao, Lizi and Zhang, Hanwang and Nie, Liqiang and Hu, Xia and Chua, Tat-Seng},
  booktitle={Proceedings of the 26th international conference on world wide web},
  pages={173--182},
  year={2017}
}

@article{chatterjee2015matrix,
  title={Matrix estimation by universal singular value thresholding},
  author={Chatterjee, Sourav},
  journal={The {A}nnals of {S}tatistics},
  pages={177--214},
  year={2015},
  publisher={JSTOR}
}

@article{kumar2018hypergraph,
  title={Hypergraph clustering: a modularity maximization approach},
  author={Kumar, Tarun and Vaidyanathan, Sankaran and Ananthapadmanabhan, Harini and Parthasarathy, Srinivasan and Ravindran, Balaraman},
  journal={arXiv preprint arXiv:1812.10869},
  year={2018}
}

@article{lee2020robust,
  title={Robust hypergraph clustering via convex relaxation of truncated MLE},
  author={Lee, Jeonghwan and Kim, Daesung and Chung, Hye Won},
  journal={IEEE Journal on Selected Areas in Information Theory},
  volume={1},
  number={3},
  pages={613--631},
  year={2020},
  publisher={IEEE}
}

@inproceedings{ghoshdastidar2015provable,
  title={A provable generalized tensor spectral method for uniform hypergraph partitioning},
  author={Ghoshdastidar, Debarghya and Dukkipati, Ambedkar},
  booktitle={International Conference on Machine Learning},
  pages={400--409},
  year={2015},
  organization={PMLR}
}

@article{newman2001structure,
  title={The structure of scientific collaboration networks},
  author={Newman, Mark EJ},
  journal={Proceedings of the National Academy of Sciences},
  volume={98},
  number={2},
  pages={404--409},
  year={2001},
  publisher={National Acad Sciences}
}

@article{ji2016coauthorship,
  title={Coauthorship and citation networks for statisticians},
  author={Ji, Pengsheng and Jin, Jiashun},
  journal={The Annals of Applied Statistics},
  volume ={10},
  pages={1779-1812},
  year={2016}
}

@article{fowler2006connecting,
  title={Connecting the congress: a study of cosponsorship networks},
  author={Fowler, James H},
  journal={Political Analysis},
  volume={14},
  number={4},
  pages={456--487},
  year={2006},
  publisher={Cambridge University Press}
}

@article{lee2017time,
  title={Time-dependent community structure in legislation cosponsorship networks in the Congress of the Republic of Peru},
  author={Lee, Sang Hoon and Magallanes, Jos{\'e} Manuel and Porter, Mason A},
  journal={Journal of Complex Networks},
  volume={5},
  number={1},
  pages={127--144},
  year={2017},
  publisher={OUP}
}

@article{ghoshdastidar2017consistency,
  title={Consistency of spectral hypergraph partitioning under planted partition model},
  author={Ghoshdastidar, Debarghya and Dukkipati, Ambedkar},
  journal={The Annals of Statistics},
  volume={45},
  pages={289-315},
  year={2017}
}

@article{yuan2021high,
  title={High-order joint embedding for multi-level link prediction},
  author={Yuan, Yubai and Qu, Annie},
  journal={Journal of the American Statistical Association},
  volume={118},
  pages={1692-1706},
  year={2021},
  publisher={Taylor \& Francis}
}

@article{yuan2022testing,
  title={Testing community structure for hypergraphs},
  author={Yuan, Mingao and Liu, Ruiqi and Feng, Yang and Shang, Zuofeng},
  journal={The Annals of Statistics},
  volume={50},
  number={1},
  pages={147--169},
  year={2022},
  publisher={Institute of Mathematical Statistics}
}

@article{jo2021edge,
  title={Edge representation learning with hypergraphs},
  author={Jo, Jaehyeong and Baek, Jinheon and Lee, Seul and Kim, Dongki and Kang, Minki and Hwang, Sung Ju},
  journal={Advances in Neural Information Processing Systems},
  volume={34},
  pages={7534--7546},
  year={2021}
}

@article{wu2025denoising,
  title={Denoising diffused embeddings: a generative approach for hypergraphs},
  author={Wu, Shihao and Yang, Junyi and Xu, Gongjun and Zhu, Ji},
  journal={arXiv preprint arXiv:2501.01541},
  year={2025}
}

@inproceedings{marelash,
  title={{ReLaSH}: Reconstructing Joint Latent Spaces for Efficient Generation of Synthetic Hypergraphs with Hyperlink Attributes},
  author={Ma, Feiyan and Wu, Shihao and Xu, Gongjun and Zhu, Ji},
  booktitle={The Fourteenth International Conference on Learning Representations},
    year={2026}
}

@inproceedings{sun2021heterogeneous,
  title={Heterogeneous hypergraph embedding for graph classification},
  author={Sun, Xiangguo and Yin, Hongzhi and Liu, Bo and Chen, Hongxu and Cao, Jiuxin and Shao, Yingxia and Viet Hung, Nguyen Quoc},
  booktitle={Proceedings of the 14th ACM international conference on web search and data mining},
  pages={725--733},
  year={2021}
}

@inproceedings{li2022heterogeneous,
  title={Heterogeneous hypergraph neural network for friend recommendation with human mobility},
  author={Li, Yongkang and Fan, Zipei and Zhang, Jixiao and Shi, Dengheng and Xu, Tianqi and Yin, Du and Deng, Jinliang and Song, Xuan},
  booktitle={Proceedings of the 31st ACM international conference on information \& knowledge management},
  pages={4209--4213},
  year={2022}
}

@article{lu2023neighborhood,
  title={Neighborhood overlap-aware heterogeneous hypergraph neural network for link prediction},
  author={Lu, Yifan and Gao, Mengzhou and Liu, Huan and Liu, Zehao and Yu, Wei and Li, Xiaoming and Jiao, Pengfei},
  journal={Pattern Recognition},
  volume={144},
  pages={109818},
  year={2023},
  publisher={Elsevier}
}

@article{zhen2023community,
  title={Community detection in general hypergraph via graph embedding},
  author={Zhen, Yaoming and Wang, Junhui},
  journal={Journal of the American Statistical Association},
  volume={118},
  number={543},
  pages={1620--1629},
  year={2023},
  publisher={Taylor \& Francis}
}

@article{cho2021gaussian,
   title={Gaussian variational estimation for multidimensional item response theory},
  author={Cho, April E and Wang, Chun and Zhang, Xue and Xu, Gongjun},
  journal={British Journal of Mathematical and Statistical Psychology},
  volume={74},
  pages={52--85},
  year={2021},
  publisher={Wiley Online Library}
}

@article{jaakkola2000bayesian,
  title={Bayesian parameter estimation via variational methods},
  author={Jaakkola, Tommi S and Jordan, Michael I},
  journal={Statistics and Computing},
  volume={10},
  number={1},
  pages={25--37},
  year={2000},
  publisher={Springer}
}

@article{song2020score,
  title={Score-based generative modeling through stochastic differential equations},
  author={Song, Yang and Sohl-Dickstein, Jascha and Kingma, Diederik P and Kumar, Abhishek and Ermon, Stefano and Poole, Ben},
  journal={arXiv preprint arXiv:2011.13456},
  year={2020}
}

@book{bishop2006pattern,
  title={Pattern recognition and machine learning},
  author={Bishop, Christopher M and Nasrabadi, Nasser M},
  volume={4},
  number={4},
  year={2006},
  publisher={Springer}
}

@article{kim2024hypeboy,
  title={Hypeboy: Generative self-supervised representation learning on hypergraphs},
  author={Kim, Sunwoo and Kang, Shinhwan and Bu, Fanchen and Lee, Soo Yong and Yoo, Jaemin and Shin, Kijung},
  journal={arXiv preprint arXiv:2404.00638},
  year={2024}
}

@book{gelman2007data,
  title={Data analysis using regression and multilevel/hierarchical models},
  author={Gelman, Andrew and Hill, Jennifer},
  year={2007},
  publisher={Cambridge university press}
}

@article{hamilton2017inductive,
  title={Inductive representation learning on large graphs},
  author={Hamilton, Will and Ying, Zhitao and Leskovec, Jure},
  journal={Advances in neural information processing systems},
  volume={30},
  year={2017}
}





\clearpage


\appendix
\etocdepthtag.toc{appendix}

\begingroup

\etocsettagdepth{main}{none}
\etocsettagdepth{appendix}{subsection}

\etocsettocstyle
  {%
  \section*{Appendix Contents}
   \vspace{0.4em}
   \noindent\rule{\linewidth}{0.4pt}\par
   \vspace{0.8em} 
  }
  {%
   \vspace{0.8em}
   \noindent\rule{\linewidth}{0.4pt}\par
  }

\tableofcontents

\endgroup

\section{Theoretical Analysis}
\subsection{Discussion on the Identifiability Conditions}\label{append:id}
In this section, we provide additional details on the identifiability conditions
introduced in Section~\ref{sec:model}. We focus on the inner-product kernel
$\cK(\bff,\bz_i)=\bff^\top\bz_i$. Under this kernel, the conditional
probability of node $i$ appearing in hyperlink $j$ depends on the parameters only
through
\[
    \bff_j^\top\bz_i+\alpha_i .
\]
For any $\bmu\in\RR^d$ and any nonsingular matrix $\bA\in\RR^{d\times d}$, let
\[
    \bff'=\bA^\top(\bff-\bmu),~~
    \bz_i'=\bA^{-1}\bz_i,~~
    \alpha_i'=\alpha_i+\bmu^\top\bz_i .
\]
Then
\[
    {\bff'}^\top\bz_i' + \alpha_i'
    =
    \bff^\top\bz_i+\alpha_i ,
\]
and  the conditional hyperlink distribution remains unchanged. This shows
that the location of the random-effect distribution and the scale and orientation
of the latent space cannot be identified without further normalization.

To address this, we impose the following identifiability conditions:
\[
    \E_{\PP_{\bF}}(\bff)=\bzero_d,~~
    \mathcal{S}(\PP_{\bF})=\bI_d,~~
    \bZ^\top\bZ=\diag(\lambda_1,\ldots,\lambda_d),
    ~~
    \lambda_1>\cdots>\lambda_d>0 .
\]
Here $\mathcal{S}(\PP_{\bF})$ is a positive-definite scale functional of the
random-effect distribution. We assume that it is location invariant and
equivariant under nonsingular linear transformations, in the sense that if
$\bF'\sim \bB(\bF-\ba)$ for a nonsingular matrix $\bB$, then
\[
    \mathcal{S}(\PP_{\bF'})
    =
    \bB\,\mathcal{S}(\PP_{\bF})\,\bB^\top .
\]
A common example is the covariance matrix when the second moment exists.

The condition $\E_{\PP_{\bF}}(\bff)=\bzero_d$ fixes the location of the
random-effect distribution. The condition
$\mathcal{S}(\PP_{\bF})=\bI_d$ fixes its scale and removes general invertible
linear transformations, leaving only orthogonal transformations. The diagonal
condition on $\bZ^\top\bZ$, together with the strict ordering
$\lambda_1>\cdots>\lambda_d$, fixes the remaining rotations up to
coordinate-wise sign changes. If the diagonal entries were not ordered, there
would also be a permutation among latent coordinates. If some
eigenvalues were repeated, rotations within the corresponding eigenspaces would
remain unidentified.

These identifiability conditions are not intended to impose restriction on the
hyperlink distribution. Instead, they choose one representative from each
equivalence class of parameterizations that induce the same hyperlink
distribution. To see this, consider an arbitrary parameterization
$(\PP_{\bF},\bZ,\balpha)$ with $\rank(\bZ)=d$ and positive-definite
$\mathcal{S}(\PP_{\bF})$. Let
\[
    \bmu_{\bF}=\E_{\PP_{\bF}}(\bff),
    ~~
    \bC=\mathcal{S}(\PP_{\bF}) .
\]
First define
\[
    \widetilde{\bff}
    =
    \bC^{-1/2}(\bff-\bmu_{\bF}),
    ~~
    \widetilde{\bz}_i
    =
    \bC^{1/2}\bz_i,
    ~~
    \widetilde{\alpha}_i
    =
    \alpha_i+\bmu_{\bF}^\top\bz_i .
\]
Then
\[
    \widetilde{\bff}^{\top}\widetilde{\bz}_i+\widetilde{\alpha}_i
    =
    \bff^\top\bz_i+\alpha_i ,
\]
so the hyperlink distribution is unchanged. Moreover,
$\E(\widetilde{\bff})=\bzero_d$ and
$\mathcal{S}(\PP_{\widetilde{\bF}})=\bI_d$. Next, let
\[
    \widetilde{\bZ}^{\top}\widetilde{\bZ}
    =
    \bGamma \diag(\lambda_1,\ldots,\lambda_d)\bGamma^\top
\]
be the eigendecomposition, where
$\lambda_1>\cdots>\lambda_d>0$ after ordering and $\bGamma$ is orthogonal. Define
\[
    \bff^\star=\bGamma^\top\widetilde{\bff},
    ~~
    \bz_i^\star=\bGamma^\top\widetilde{\bz}_i,
    ~~
    \alpha_i^\star=\widetilde{\alpha}_i .
\]
Then
\[
    {\bff^\star}^{\top}\bz_i^\star+\alpha_i^\star
    =
    \widetilde{\bff}^{\top}\widetilde{\bz}_i+\widetilde{\alpha}_i
    =
    \bff^\top\bz_i+\alpha_i ,
\]
and the induced hyperlink distribution is again unchanged. In addition,
$\E(\bff^\star)=\bzero_d$,
$\mathcal{S}(\PP_{\bF^\star})=\bI_d$, and
\[
    {\bZ^\star}^{\top}\bZ^\star
    =
    \diag(\lambda_1,\ldots,\lambda_d).
\]
Thus, any admissible parameterization can be transformed into one satisfying the
identifiability conditions above.

Finally, we describe the remaining coordinate-wise sign flip issue. Suppose two normalized
parameterizations are related through the transformation
\[
    \bff'=\bA^\top(\bff-\bmu),\qquad
    \bz_i'=\bA^{-1}\bz_i,\qquad
    \alpha_i'=\alpha_i+\bmu^\top\bz_i ,
\]
and both satisfy the stated identifiability conditions. Since both
$\PP_{\bF}$ and $\PP_{\bF'}$ have mean zero, we must have $\bmu=\bzero_d$.
Since both have scale functional equal to $\bI_d$, the equivariance of
$\mathcal{S}$ implies
\[
    \bI_d=\mathcal{S}(\PP_{\bF'})
    =
    \bA^\top \mathcal{S}(\PP_{\bF}) \bA
    =
    \bA^\top\bA ,
\]
so $\bA$ is orthogonal. Moreover, because both $\bZ^\top\bZ$ and
${\bZ'}^\top\bZ'$ are diagonal with strictly ordered diagonal entries, the only
orthogonal transformations preserving these conditions are coordinate-wise sign
changes. Therefore, the normalized parameters are identifiable up to
coordinate-wise sign flips. The signs can be fixed, if desired,
by fixing the sign of one nonzero entry in each column of $\bZ$. Compared with
the location and general linear-transformation non-identifiability, the remaining
coordinate-wise sign ambiguity is minor and can be easily fixed when prior
knowledge about one node is available, by assigning signs to the embedding
coordinates of that node; alternatively, one can choose a reference node with
pre-specified signs.

\subsection{Proof of Theorem \ref{thm:shared-main}}\label{sec:thm}
In this section, we give the regularity conditions and a proof of theorem \ref{thm:shared-main}.

Write $\vartheta=(\bZ,\balpha,\bgamma)$ and
$\vartheta_n^\star=(\bZ_n^\star,\balpha_n^\star,\bgamma^\star)$,
with the coordinate signs and any mixture labels fixed as in
Section~\ref{sec:model}. Define
\begin{equation}
 r_n(\vartheta_1,\vartheta_2)^2
 =\frac{\|\bZ_1-\bZ_2\|_F^2}{n}
  +\frac{\|\balpha_1-\balpha_2\|_2^2}{n}
  +\|\bgamma_1-\bgamma_2\|_2^2.
 \label{eq:shared-metric}
\end{equation}
This is the Euclidean distance between
$\bx_n(\vartheta)=
(n^{-1/2}\operatorname{vec}(\bZ)^\top,
 n^{-1/2}\balpha^\top,\bgamma^\top)^\top\in\RR^{p_n}$,
where $p_n=n(d+1)+s$ is an upper bound on the normalized parameter
dimension. Let
\begin{equation}
 h_{n,\vartheta}(\bv)=\frac1n\log q_{n,\vartheta}(\bv),
 \qquad M_n(\vartheta)=\E_\star h_{n,\vartheta}(\bv_1),
 \label{eq:shared-population-criterion}
\end{equation}
where $\E_\star$ denotes expectation under the true marginal law.
Then $\widehat M_{m,n}(\vartheta)
=m^{-1}\sum_{j=1}^m h_{n,\vartheta}(\bv_j)$.

\begin{assumption}[Compact normalized parameter space]
\label{ass:shared-space}
The dimensions $d,s$ are fixed. The parameter space $\cT_n$ is compact,
contains $\vartheta_n^\star$, and satisfies
\eqref{eq:sgm_identifiability} with coordinate signs fixed jointly for
the embeddings and the prior, and with mixture-label ordering fixed
when applicable. For constants $C_Z,C_\alpha<\infty$ and a compact
$\Gamma\subset\RR^s$, all independent of $n$, every
$\vartheta\in\cT_n$ satisfies
\[
 \max_{i\in[n]}\|\bz_i\|_2\le C_Z,
 \qquad \max_{i\in[n]}|\alpha_i|\le C_\alpha,
 \qquad \bgamma\in\Gamma.
\]
In particular, the prior uses the main-text normalization
$\E_{\PP_{\bF,\bgamma}}(\bff)=\bzero_d$ and
$\mathcal S(\PP_{\bF,\bgamma})=\bI_d$.
\end{assumption}

\begin{assumption}[Independent hyperlinks and marginal compatibility]
\label{ass:shared-compatibility}
The incidence vectors $\bv_1,\ldots,\bv_m$ are independent and
identically distributed with mass function $q_{n,\vartheta_n^\star}$.
No independence or conditional independence is required among the
coordinates of a single $\bv_j$.
\end{assumption}

\begin{assumption}[Integral identification]
\label{ass:shared-identification}
For every $n$ and $\vartheta\in\cT_n$,
$q_{n,\vartheta}(\bv)=q_{n,\vartheta_n^\star}(\bv)$ for all
$\bv\in\{0,1\}^n$ implies $\vartheta=\vartheta_n^\star$ under
the fixed sign and label conventions.
\end{assumption}

\begin{assumption}[Bounded and Lipschitz normalized likelihood]
\label{ass:shared-regularity}
The working masses are strictly positive. There are constants
$C,L<\infty$, independent of $n$, such that for every
$\bv\in\{0,1\}^n$ and $\vartheta,\vartheta_1,\vartheta_2\in\cT_n$,
\[
 -C\le h_{n,\vartheta}(\bv)\le0,
 \qquad
 |h_{n,\vartheta_1}(\bv)-h_{n,\vartheta_2}(\bv)|
 \le Lr_n(\vartheta_1,\vartheta_2).
\]
\end{assumption}

For $\varepsilon>0$, define the population criterion gap
\begin{equation}
 g_n(\varepsilon)
 =\inf_{\substack{\vartheta\in\cT_n:\
                  r_n(\vartheta,\vartheta_n^\star)\ge\varepsilon}}
 \{M_n(\vartheta_n^\star)-M_n(\vartheta)\},
 \label{eq:shared-population-gap}
\end{equation}
with $\inf\varnothing=+\infty$.

\begin{proof}[Proof of Theorem~\ref{thm:shared-main}]
Compactness and Assumption~\ref{ass:shared-regularity} ensure that
$\widehat M_{m,n}$ has a global maximizer.

By marginal compatibility, for every $\vartheta\in\cT_n$,
\begin{align}
 M_n(\vartheta_n^\star)-M_n(\vartheta)
 &=\frac1n\sum_{\bv\in\{0,1\}^n}
 q_{n,\vartheta_n^\star}(\bv)
 \log\frac{q_{n,\vartheta_n^\star}(\bv)}{q_{n,\vartheta}(\bv)}
 \nonumber\\
 &=\frac1n D_{\mathrm{KL}}
   (q_{n,\vartheta_n^\star}\|q_{n,\vartheta})\ge0,
 \label{eq:shared-kl-gap}
\end{align}
with the equality holds precisely when the two mass functions coincide, and
Assumption~\ref{ass:shared-identification} then gives
$\vartheta=\vartheta_n^\star$. Thus the population maximizer is unique.
For each fixed $n$, continuity and compactness show that
$g_n(\varepsilon)>0$ whenever the constraint set in
\eqref{eq:shared-population-gap} is nonempty, otherwise a zero minimum would
produce a distinct parameter with the same marginal law.

The scaled parameter set $\bx_n(\cT_n)$ lies in a Euclidean ball
of radius $R<\infty$ independent of $n$ by
Assumption~\ref{ass:shared-space}. It therefore has a $1/m$-net
$\{\vartheta_1,\ldots,\vartheta_N\}\subset\cT_n$ with
$N\le(1+2Rm)^{p_n}$. To obtain this bound, take a maximal
$1/m$-separated subset and compare the volumes of its disjoint
radius-$1/(2m)$ balls with the radius-$(R+1/(2m))$ ball containing them.
Set
\[
 \Delta_{m,n}
 =\sup_{\vartheta\in\cT_n}
  |\widehat M_{m,n}(\vartheta)-M_n(\vartheta)|.
\]
For any parameter and a nearest net point, the empirical and
population criteria each differ by at most $L/m$. Hence
\[
 \Delta_{m,n}\le
 \max_{1\le k\le N}
 |\widehat M_{m,n}(\vartheta_k)-M_n(\vartheta_k)|+\frac{2L}{m}.
\]
At each net point, the summands are independent and lie in $[-C,0]$.
Hoeffding's inequality and a union bound give, for every $t>0$,
\begin{equation}
 \PP\!\left(\Delta_{m,n}>\frac{2L}{m}+t\right)
 \le2\exp\!\left\{p_n\log(1+2Rm)-\frac{2mt^2}{C^2}\right\}.
 \label{eq:shared-uniform-bound}
\end{equation}
Taking
$t=C\sqrt{\{p_n\log(1+2Rm)+\log m\}/(2m)}$
makes the right-hand side $2/m$. Since $p_n\ge1$, this proves
\begin{equation}
 \Delta_{m,n}=O_{\PP}\!\left(\sqrt{\frac{p_n\log m}{m}}\right)
 =O_{\PP}(a_{m,n}).
 \label{eq:shared-ulln}
\end{equation}

Global maximality gives
$\widehat M_{m,n}(\widehat\vartheta)
\ge\widehat M_{m,n}(\vartheta_n^\star)$, and therefore
\begin{align*}
 0\le M_n(\vartheta_n^\star)-M_n(\widehat\vartheta)
 &\le
 |M_n(\vartheta_n^\star)-\widehat M_{m,n}(\vartheta_n^\star)|
 +|\widehat M_{m,n}(\widehat\vartheta)-M_n(\widehat\vartheta)|
 \\&\le2\Delta_{m,n}.
\end{align*}
Fix $\varepsilon>0$. If
$r_n(\widehat\vartheta,\vartheta_n^\star)\ge\varepsilon$, then
$g_n(\varepsilon)\le2\Delta_{m,n}$. When the constraint set is nonempty,
\[
 \PP\{r_n(\widehat\vartheta,\vartheta_n^\star)\ge\varepsilon\}
 \le\PP\!\left\{\frac{\Delta_{m,n}}{g_n(\varepsilon)}\ge\frac12\right\}
 \longrightarrow0,
\]
since \eqref{eq:shared-ulln} and the theorem's gap condition imply
$\Delta_{m,n}/g_n(\varepsilon)=o_{\PP}(1)$.

If the constraint set is empty, by the definition of the constraint, this would imply $ \PP\{r_n(\widehat\vartheta,\vartheta_n^\star)\ge\varepsilon\}=0$.
Combining these two cases gives
\[
 \frac{\|\widehat{\bZ}-\bZ_n^\star\|_F}{\sqrt n}
 +\frac{\|\widehat{\balpha}-\balpha_n^\star\|_2}{\sqrt n}
 +\|\widehat{\bgamma}-\bgamma^\star\|_2
 \le\sqrt3\,r_n(\widehat\vartheta,\vartheta_n^\star)
 \xrightarrow{\PP}0,
\]
which completes the proof.
\end{proof}

\section{Algorithm Details and Theoretical Derivation}\label{append:alg}
In this section, we provide full algorithm details, including the intuition behind, the exact updating formula in E-Step and M-Step for all the three special cases. We additionally provide initialization methods and cross-validation based methods to choose $K$ and $d$.
\subsection{Categorical Random Effect}
As shown in Section~\ref{sec:model}, when $\mathbb{P}_{\bF}$ is a categorical distribution with $K$ support points, each hyperlink is generated from one of $K$ latent types. Equivalently, each hyperlink $j\in[m]$ has an unobserved type label $k_j\in[K]$, with
$\mathbb{P}(k_j=k)=\pi_k,\bff_j=\bff_{k_j}$.

Conditional on the latent type $k_j=k$, the entries of the incidence vector $\bv_j=(v_{j1},\ldots,v_{jn})^\top$ are independent Bernoulli random variables with success probabilities $\{q_{ki}\}_{i=1}^n$. Considering the absence of information regarding the group affiliation of each hyperlink, a natural approach is to employ the EM algorithm. 

To derive the EM updates, we introduce the unobserved latent membership matrix
$\bDelta=(\Delta_{jk})\in\{0,1\}^{m\times K}$, where $\Delta_{jk}=1$ if hyperlink $j$ is generated from latent type $k$, and $\Delta_{jk}=0$ otherwise. If this auxiliary membership matrix were observed, the corresponding complete-data likelihood would be
\begin{equation}
\label{eq:cat-complete-likelihood}
    \mathcal{L}(\bV,\bDelta\mid\bF,\bZ,\balpha,\bpi)
    =
    \prod_{j=1}^m
    \prod_{k=1}^K
    \left\{
        \pi_k
        \prod_{i=1}^n
        q_{ki}^{v_{ji}}
        (1-q_{ki})^{1-v_{ji}}
    \right\}^{\Delta_{jk}} .
\end{equation}

The corresponding complete-data log-likelihood is
\begin{equation}
\label{eq:cat-complete-loglik}
    \ell(\bV,\bDelta\mid\bF,\bZ,\balpha,\bpi)
    =
    \sum_{j=1}^m
    \sum_{k=1}^K
    \Delta_{jk}
    \left[
        \log \pi_k
        +
        \sum_{i=1}^n
        \left\{
            v_{ji}\log q_{ki}
            +
            (1-v_{ji})\log(1-q_{ki})
        \right\}
    \right].
\end{equation}

For computational convenience, we first estimate the cluster-specific probability matrix
$\bQ=(q_{ki})\in(0,1)^{K\times n}$, where $\bQ=\mathrm{expit}(\bTheta)$ and $\bTheta$ is a $K\times n$ matrix with $\theta_{k,i}=\bff_k^{\top}\bz_i+\alpha_i$. Under this parameterization, the complete-data log-likelihood can be written as
\begin{equation}
\label{eq:cat-complete-loglik-Q}
    \ell(\bQ,\bpi\mid\bV,\bDelta)
    =
    \sum_{j=1}^m
    \sum_{k=1}^K
    \Delta_{jk}
    \left[
        \log \pi_k
        +
        \sum_{i=1}^n
        \left\{
            v_{ji}\log q_{ki}
            +
            (1-v_{ji})\log(1-q_{ki})
        \right\}
    \right].
\end{equation}
We then follow the standard EM algorithm. Given current estimates $(\bQ^{(t)},\bpi^{(t)})$, the E-step calculates 
\begin{equation}\begin{aligned}
\mathcal{Q}(\bQ,\bpi\mid\bQ^{(t)},\bpi^{(t)})
    &:=\E[l(\bQ,\bpi|\bV,\bDelta)|\bV,\bQ^{(t)},\bpi^{(t)}]\\
    &=
    \sum_{j=1}^m
    \sum_{k=1}^K
    \Delta_{jk}^{(t)}
    \left[
        \log \pi_k
        +
        \sum_{i=1}^n
        \left\{
            v_{ji}\log q_{ki}
            +
            (1-v_{ji})\log(1-q_{ki})
        \right\}
    \right]
\end{aligned}\label{eq:Estep}
\end{equation}
where 
$$\begin{aligned}\Delta_{jk}^{(t)}
    &:=
    \mathbb{E}\left[
        \Delta_{jk}
        \mid
        \bv_j,\bQ^{(t)},\bpi^{(t)}
    \right]
    =
    \mathbb{P}\left(
        k_j=k
        \mid
        \bv_j,\bQ^{(t)},\bpi^{(t)}
    \right)
    =
    \frac{
        \pi_k^{(t)} \ell_{jk}^{(t)}
    }{
        \sum_{s=1}^K
        \pi_s^{(t)} \ell_{js}^{(t)}
    }\\
    \ell_{jk}^{(t)}
   & =
    \prod_{i=1}^n
    \left(q_{ki}^{(t)}\right)^{v_{ji}}
    \left(1-q_{ki}^{(t)}\right)^{1-v_{ji}}
    \end{aligned}
$$
Then in the M-step, we maximizes $\mathcal{Q}(\bQ,\bpi\mid\bQ^{(t)},\bpi^{(t)})$ shown in equation \ref{eq:Estep} with respect to $(\bQ,\bpi)$. For the mixing weights, we maximize $\sum_{j=1}^m\sum_{k=1}^K \Delta_{jk}^{(t)}\log \pi_k$
subject to $\sum_{k=1}^K\pi_k=1$, which gives
\begin{equation}
\label{eq:EM-pi-update}
    \pi_k^{(t+1)}
    =
    \frac{1}{m}
    \sum_{j=1}^m
    \Delta_{jk}^{(t)},
    \qquad k\in[K].
\end{equation}
For each $q_{ki}$, the relevant part of the objective is $\sum_{j=1}^m
    \Delta_{jk}^{(t)}
    \left\{
        v_{ji}\log q_{ki}
        +
        (1-v_{ji})\log(1-q_{ki})
    \right\}$, taking the derivative with respect to $q_{ki}$ and setting it equal to zero yields
\[
    \sum_{j=1}^m
    \Delta_{jk}^{(t)}
    \left(
        \frac{v_{ji}}{q_{ki}}
        -
        \frac{1-v_{ji}}{1-q_{ki}}
    \right)
    =
    0,
\]
and hence
\begin{equation}
\label{eq:EM-q-update}
    q_{ki}^{(t+1)}
    =
    \frac{
        \sum_{j=1}^m
        \Delta_{jk}^{(t)} v_{ji}
    }{
        \sum_{j=1}^m
        \Delta_{jk}^{(t)}
    },
    \qquad k\in[K],\ i\in[n].
\end{equation}
Thus the EM algorithm alternates between computing the posterior membership probabilities in \eqref{eq:Estep} and updating the cluster weights and probability matrix according to \eqref{eq:EM-pi-update} and \eqref{eq:EM-q-update}.

In principle, one could directly maximize the likelihood over
$(\bF,\bZ,\balpha,\bpi)$ using a projected gradient method, as in
\citep{wu2024general}. In our implementation, however, we adopt a more computationally efficient two-step strategy. We first estimate the probability matrix $\bQ$ and the mixing weights $\bpi$ by EM procedure discussed above. After obtaining $\widehat{\bQ}$, we recover estimates of $(\bF,\bZ,\balpha)$ through a low-rank logistic factorization.

Specifically, after the EM algorithm returns $\widehat{\bQ}$, we recover $\bTheta$ via the logit function, i.e., set $\hat\theta_{ki}=\log\frac{\hat q_{ki}}{1-\hat q_{ki}},\forall k\in[K],i\in[n]$. We then calculate the column means of $\hat\bTheta$ as pre-normalized estimate of the node-specific intercepts: $\widehat{\alpha}_i=\frac{1}{K}\sum_{k=1}^K \widehat{\theta}_{ki},i\in[n]$. After centering $\hat\Theta$ through $\widehat{\bG}=\widehat{\bTheta}-\bone_K\widehat{\balpha}^{\top}$,
we compute a rank-$d$ singular value decomposition of $\widehat{\bG}$ to obtain preliminary estimates of $\bF$ and $\bZ$. A final normalization step is then applied to ensure that the resulting latent variables satisfy the identifiability constraints described earlier. The full procedure is summarized in Algorithm~\ref{alg:EM}.
\subsection{Gaussian Mixture Random Effects}\label{append:alg:gm}
We next consider the case where $\mathbb{P}_{\bF}$ is a Gaussian mixture:
\[
    \mathbb{P}_{\bF}
    =
    \sum_{k=1}^K
    \pi_k \mathcal{N}(\bmu_k,\bSigma),
\]
where $\bmu_k\in\mathbb{R}^d$ is the mean of component $k$ and $\bSigma$ is a common positive-definite covariance matrix. 

The corresponding observed-data likelihood is
\begin{equation}
\label{eq:gmm-observed-likelihood}
    \mathcal{L}(\bV\mid\bZ,\balpha,\bpi,\bU)
    =
    \prod_{j=1}^m
    \sum_{k=1}^K
    \pi_k
    \int
    p_j(\bff)
    \phi_d(\bff;\bmu_k,\bSigma_d)
    d\bff ,
\end{equation}
where
\[
    p_j(\bff)
    =
    \prod_{i=1}^n
    \sigma(\bff^\top\bz_i+\alpha_i)^{v_{ji}}
    \{1-\sigma(\bff^\top\bz_i+\alpha_i)\}^{1-v_{ji}},
\]
and $\phi_d(\cdot;\bmu,\bSigma)$ denotes the $d$-dimensional Gaussian density with mean $\bmu$ and covariance $\bI_d$. 

Since all components share the same covariance matrix $\bSigma$, consider $\tilde \bF=\bF\bSigma^{-1/2}$, $\tilde \bF$ can be viewed as generated from a Gaussian mixture $\sum_{k=1}^K\pi_k\cN(\bU\bSigma^{-1/2},\bI_d)$. Let $\tilde\bZ=\bZ\bSigma^{1/2}$, and $\tilde \bU=\bU\bSigma^{-1/2}$ then
$$\cL(\bV|\bZ,\balpha,\bpi,\bU)=\cL(\bV|\tilde\bZ,\tilde\balpha,\bpi,\tilde\bU)$$

Thus from the likelihood function, we can't identify $\tilde\bU,\tilde\bZ$ from $\bU,\bZ$. 
To enhance computational efficiency, we fit $\bSigma=\bI_d$ in the estimating stage, thereby eliminating the need for its optimization. Then after the final estimates are derived, we normalize $\bU,\bZ,\balpha$ s.t. they satisfy the regularity condition proposed in \eqref{eq:gmm_identifiability}.

Intuitively, we can directly adjust the idea proposed in the categorical random effect case, choosing both latent group labels $k_j$ and $\{\bff_j\}_{j\in[m]}$ as latent variables, then the complete-data log-likelihood would be
\begin{equation}
\label{eq:gmm-complete-loglik}
\begin{aligned}
    \ell_c
    &:= \ell(\bpi,\bU,\bZ,\balpha|\{k_j\}_{j\in[m]},\{\bff_j\}_{j\in[m]})\\
    &=
    \sum_{j=1}^m
    \left[
        \log\pi_{k_j}
        +
        \log \phi_d(\bff_j;\bmu_{k_j},\bI_d)
        +
        \sum_{i=1}^n
        \left\{
            v_{ji} x_{ji}
            -
            \log(1+e^{x_{ji}})
        \right\}
    \right]
\end{aligned}
\end{equation}
where we introduce $x_{ji}:=\bff_j^\top\bz_i+\alpha_i$ for notational simplicity. 

However, directly computing E-step based on \eqref{eq:gmm-complete-loglik} requires the posterior distribution of $(k_j,\bff_j)$, given the observed hyperlink $\bv_j$. Under the current parameter values
$(\bU,\bZ,\balpha,\bpi)$, this posterior is proportional to
\begin{equation}
\label{eq:gmm-exact-posterior}
    p(k_j=k,\bff_j=\bff\mid\bv_j)
    \propto
    \pi_k
    \phi_d(\bff;\bmu_k,\bI_d)
    \prod_{i=1}^n
    \sigma(\bff^\top\bz_i+\alpha_i)^{v_{ji}}
    \{1-\sigma(\bff^\top\bz_i+\alpha_i)\}^{1-v_{ji}}
\end{equation}
which is not analytically tractable since the logistic likelihood is not conjugate to
the Gaussian random-effect distribution.

We therefore adopt a variational EM approach. Let
\[
    \bpsi=(\bpi,\bU,\bZ,\balpha)
\]
collect the model parameters. For any candidate distribution
$q_j(k,\bff)$ over the latent variables $(k_j,\bff_j)$, we have
\begin{equation}
\begin{aligned}
    \log p_{\bpsi}(\bv_j)
    &=\int_{\bff_j}\sum_{k=1}^K\log p_{\bpsi}(\bv_j)\cdot q_j(k,\bff_j)d\bff_j\\
    &=\int_{\bff_j}\sum_{k_j=1}^K\log\frac{p_{\bpsi}(\bv_j,\bff_j,k_j)}{p_{\bpsi}(\bff_j,k_j|\bv_j)}\cdot q_j(k,\bff_j)d\bff_j\\
    &=\int_{\bff_j}\sum_{k_j=1}^K\log\frac{p_{\bpsi}(\bv_j,\bff_j,k_j)}{q_j(k,\bff_j)}\cdot q_j(k,\bff_j)d\bff_j+\operatorname{KL}
    \left\{
        q_j(k,\bff)
        \,\middle\|\,
        p_{\bpsi}(k_j=k,\bff_j=\bff\mid\bv_j)
    \right\}\\
    &= \mathcal{F}_j(q_j;\bpsi)+\operatorname{KL}
    \left\{
        q_j(k,\bff)
        \,\middle\|\,
        p_{\bpsi}(k,\bff\mid\bv_j)
    \right\}
\end{aligned}\label{eq: gmm-main}
\end{equation}
where
\begin{equation}
    \mathcal{F}_j(q_j;\bpsi)
    =
    \mathbb{E}_{q_j}
    \left[
        \log p_{\bpsi}(\bv_j,\bff_j,k_j)
    \right]
    -
    \mathbb{E}_{q_j}
    \left[
        \log q_j(k_j,\bff_j)
    \right]\label{eq:ELBO}
\end{equation}
is the evidence lower bound. Since the KL divergence is nonnegative,
$\mathcal{F}_j(q_j;\bpsi)$ is a lower bound for $\log p_{\bpsi}(\bv_j)$.

In the variational EM framework, in the E-step, we update the candidate function by minimizing $\operatorname{KL}
    \left\{
        q_j(k,\bff)
        \,\middle\|\,
        p_{\bpsi}(k_j=k,\bff_j=\bff\mid\bv_j)
    \right\}$; then in the M-step, we treat the variational posterior $q_j(k,\bff)$ as fixed, update parameters $\bpsi$ by maximizing the evidence lower bound $\cF_j(q_j;\bpsi)$. Moreover, notice that the second term in \eqref{eq:ELBO} does not depend on the model parameters, thus M-step is equivalent to maximizing $\mathbb{E}_{q_j}
    \left[
        \log p_{\bpsi}(\bv_j,\bff_j,k_j)
    \right]$. Thus, the variational EM procedure alternates between approximating the
posterior distribution of the latent variables and updating the model
parameters using expectations under this approximate posterior.

As mentioned in~\citep{cho2021gaussian}, when the number of nodes $n$ becomes reasonably large, the posterior distribution $p(\bff_j|k_j,\bv_j,\bpsi)$ can be well approximated by a Gaussian distribution. Motivated by this, we restrict the candidate posterior to the
Gaussian-mixture variational family,
namely
\begin{equation}
    q_j(k,\bff)=r_{jk}q_{jk}(\bff),
    \qquad
    q_{jk}(\bff)=\phi_d(\bff;\bm_{jk},\bC_{j})\label{eq:gmm-family}
\end{equation}
Here we assume $q_{jk}$ share the same $\bC_j$ to improve computational efficiency.

However, even this variational EM objective is not directly tractable. The
term
\[
    \log(1+\exp(\bff_j^\top\bz_i+\alpha_i))
\]
appears both in the E-step and M-step, making both steps intractable directly. To address this
obstacle, we utilize the local variational lower bound on the logistic sigmoid function originally proposed by \cite{jaakkola2000bayesian}
\begin{equation}
    \begin{aligned}
        \frac{e^x}{1+e^x}&=\max_{\xi}\frac{e^\xi}{1+e^\xi}\exp \{\frac{(x-\xi)}{2}-\eta (\xi)(x^2-\xi^2)\}\\
        &\geq \frac{e^\xi}{1+e^\xi}\exp \{\frac{(x-\xi)}{2}-\eta (\xi)(x^2-\xi^2)\}
    \end{aligned}\label{variational-bound}
\end{equation}
To be more specific, we introduce variational parameter $\xi_{jik}$ for $\bff_j^{\top}\bz_i+\alpha_i$, and iteratively update $\{\xi_{jik}\}$ in the M-step. This gives a pointwise lower bound on the logistic
log-likelihood and yields the lower-bounded complete-data objective
$\widetilde{\ell}$ defined as:
\begin{equation}
\label{eq:gmm-elbo-after-bound}
\begin{aligned}
    \log P(\bv_j,\bff_j,k_j|\pi,\bZ,\balpha,\bU)&=\Sigma_{i=1}^n\{v_{ji}(\bff_j^{\top}\bz_z+\alpha_i)-\log [1+\exp (\bff_j^{\top}\bz_i+\alpha_i)] \}\\
    & \ \ \ +\log p(\bff_j|k_j,\pi,\bZ,\balpha,\bU)+\log p(k_j|\bpi)\\
    &\geq \Sigma_{i=1}^n\{\log\frac{e^{\xi_{jik_j}}}{1+e^{\xi_{jik_j}}}+(v_{ji}-1/2)(\bff_j^{\top}\bz_i+\alpha_i)-\frac{1}{2}\xi_{jik_j}\\
    &\qquad -\eta(\xi_{jik_j})[(\bff_j^{\top}\bz_i+\alpha_i)^2-\xi_{jik_j}] \}\\
    &\qquad+\log p(\bff_j|k_j,\pi,\bZ,\balpha,\bU)+\log p(k_j|\pi)\\
    &:=\tilde l(\bxi_j,\bpi,\bZ,\balpha,\bU|\bff_j,k_j,\bv_j)
\end{aligned}
\end{equation}
where $\bxi_j=\{\xi_{jik}\}_{i\in[n],k\in[K]}$.

Then in the E-step, notice that once $\bpsi,\bXi:=\{\xi_{jik}\}$ are fixed,
$$\begin{aligned}
    &\quad\text{minimize} \operatorname{KL}
    \left\{
        q_j(k,\bff)
        \,\middle\|\,
        p_{\bpsi}(k,\bff\mid\bv_j)
    \right\}\\
    &\Leftrightarrow \text{minimize}\ \E_{q_j}[\log q_j(k,\bff)]-\E_{q_j}[\log P(\bv_j,\bff_j,k_j|\pi,\bZ,\balpha,\bU)]
\end{aligned}$$

Utilizing \eqref{eq:gmm-elbo-after-bound}, instead of directly minimizing 
$$\E_{q_j}[\log q_j(k,\bff)]-\E_{q_j}[\log P(\bv_j,\bff_j,k_j|\pi,\bZ,\balpha,\bU)]$$
we minimize its upper bound 
$$
\begin{aligned}
    &\quad\E_{q_j}[\log q_j(k,\bff)]-\E_{q_j}[\tilde l(\bxi_j,\bpi,\bZ,\balpha,\bU|\bff_j,k_j,\bv_j)]\\
    &= \sum_{k=1}^kr_{jk}\{-\frac{1}{2}\log|\bC_j|-\sum_{i\in[n]}(v_{ji}-\frac{1}{2})\bm_{jk}^{\top}\bz_i\\
    &\quad +\sum_{i\in[n]}\eta(\xi_{jki})[\bz_i^{\top}(\bm_{jk\bm_{jk}^{\top}+\bC_j})\bz_i] +2\alpha_i\bm_{jk}^{\top}\bz_i\}\\
    &\quad +\frac{1}{2}\sum_{k\in[K]}r_{jk}[\Tr(\bm_{jk}\bm_{jk}^{\top}+\bC_j)-2\bm_{jk}^{\top}\bmu_k]+C
\end{aligned}$$
where $C$ is a constant does not depend on $\{\bm_{jk},\bC_j\}$.

Let $\eta_{jik}:=\eta(\xi_{jik})$, for fixed global parameters $(\bU,\bZ,\balpha,\bpi)$ and local parameters $\{\xi_{jik}\}$, through direct calculation, E-step updates variational posterior through
\begin{equation}
\label{eq:gmm-e-cov}
\begin{aligned}
    \mathbf{C}_{j}^{-1} &=\bI_d+2\sum_{k=1}^K\sum_{i=1}^n
    r_{jk}\eta_{jik}\bz_i\bz_i^\top \\
    \mathbf{m}_{jk}&=(\bI_d+2\sum_{i=1}^n\eta_{jik}\bz_i\bz_i^\top)^{-1}
   [
        \bmu_k
        +
        \sum_{i=1}^n
        \{
            v_{ji}
            -
            \frac12
            -
            2\eta_{jik}\alpha_i
        \}
        \bz_i].\\
        r_{jk}&=\frac{h_jk}{\sum_{t\in[K]}h_{jt}}
    \end{aligned}
\end{equation}
where 
$$\begin{aligned}
    h_{jk}&:=\exp(g_{jK}-g_{jk})\\
    g_{jk}&=-\log\pi_k-\frac{1}{2}\bm_{jk}^{\top}\bm_{jk}+\frac{1}{2}\bmu_k^{\top}\bmu_k\\
    &\quad -\Sigma_{i=1}^n[\log \frac{e^{\xi_{jik}}}{1+e^{\xi_{jik}}}-\frac{1}{2}\xi_{jik}-\eta(\xi_{jik})(\alpha_i^2-\xi_{jik}^2)]+\sum_{i=1}^n\eta(\xi_{jik})\bz_i^{\top}(\bC_j-\bm_{jk}\bm_{jk}^{\top})\bz_i
\end{aligned}$$

Similarly, in the M-step, we optimizes the tractable lower bound
 \label{eq:gmm-expected-lower-bound}
\begin{equation}
\begin{aligned}
\widetilde{\mathcal{F}}_j(q_j,\bxi_j;\bpsi)
    &=
    \mathbb{E}_{q_j}
    \left[
        \widetilde{\ell}
        (\bxi_j,\bpi,\bZ,\balpha,\bU
        \mid \bff_j,k_j,\bv_j)
    \right]\\
    &=\sum_{j=1}^m
    \sum_{k=1}^K
    r_{jk}
    \int
    \widetilde{\ell}(\bxi_j,\bpi,\bZ,\balpha,\bU|\bff_j,k,\bv_j)q_{jk}(\bff)d\bff \\         &=\sum_{j=1}^m\sum_{k=1}^K\sum_{i=1}^nr_{jk}\log\frac{e^{\xi_{jik}}}{1+e^{\xi_{jik}}}+\sum_{j=1}^m
    \sum_{k=1}^K
    \sum_{i=1}^n
    r_{jk}
    \left(v_{ji}-\frac12\right)
    \left(
        \bm_{jk}^{\top}\bz_i+\alpha_i
    \right)\\
    &\quad
    -
    \frac12
    \sum_{j=1}^m
    \sum_{k=1}^K
    \sum_{i=1}^n
    r_{jk}\xi_{jik}\\
    &\quad -
    \sum_{j=1}^m
    \sum_{k=1}^K
    \sum_{i=1}^n
    r_{jk}\eta(\xi_{jik})
    \left[
        \bz_i^{\top}\bS_{jk}\bz_i
        +
        2\alpha_i\bm_{jk}^{\top}\bz_i
        +
        \alpha_i^2
        -
        \xi_{jik}^2
    \right]  \\                    
    &\quad
    -
    \frac12
    \sum_{j=1}^m
    \sum_{k=1}^K
    r_{jk}
    \left[
        \operatorname{tr}(\bS_{jk})
        -
        2\bm_{jk}^{\top}\bmu_k
        +
        \bmu_k^{\top}\bmu_k
    \right]                        
    +
    \sum_{j=1}^m
    \sum_{k=1}^K
    r_{jk}\log\pi_k .
\end{aligned}
\end{equation}

Through direct computation, the parameters can be updated using closed-form expressions presented below: $\forall j\in[m],i\in[n],k\in[K]$ 
\begin{align}
    \bz_i
    &=
    \frac{1}{2}
[
        \sum_{j=1}^m
        \sum_{k=1}^K
        \eta(\xi_{jik})
        r_{jk}
        \left(
            \bC_{jk}
            +
            \bm_{jk}\bm_{jk}^{\top}
        \right)]^{-1}
    \sum_{j=1}^m
    \sum_{k=1}^K
    r_{jk}
    [(v_{ji}-\frac12)
        -
        2\eta(\xi_{jik})\alpha_i]
    \bm_{jk}
    \label{eq:gmm-update-z} \\
    \alpha_i
    &=
    \frac{
        \sum_{j=1}^m
        \sum_{k=1}^K
        r_{jk}
        [
            v_{ji}
            -
            \frac12
            -
            2\eta(\xi_{jik})
            \bm_{jk}^{\top}\bz_i]
    }{
        2
        \sum_{j=1}^m
        \sum_{k=1}^K
        \eta(\xi_{jik})r_{jk}
    },
    \label{eq:gmm-update-alpha} \\
    \pi_k
    &=
    \frac{1}{m}
    \sum_{j=1}^m
    r_{jk},
    \qquad k\in[K],
    \label{eq:gmm-update-pi} \\[2mm]
    \bmu_k
    &=
    \frac{
        \sum_{j=1}^m
        r_{jk}\bm_{jk}
    }{
        \sum_{j=1}^m
        r_{jk}
    }
    \label{eq:gmm-update-mu} \\
    \xi_{jik}^{2}
    &=
    \bz_i^{\top}
    \bm_{jk}\bm_{jk}^{\top}
    \bz_i
    +
    \bz_i^{\top}
    \bC_{jk}
    \bz_i
    +
    2\alpha_i
    \bm_{jk}^{\top}\bz_i
    +
    \alpha_i^2
    \label{eq:gmm-update-xi}
\end{align}
The resulting variational EM procedure is summarized in Algorithm~\ref{alg:MGVEM}.

\begin{algorithm}[t]
\caption{Variational EM for Gaussian-mixture hyperlink random effects}
\label{alg:MGVEM}
\small
\begin{algorithmic}[1]
\Require $\bV$; component number $K$; embedding dimension $d$; initialization $\bZ^{(0)},\balpha^{(0)},\bpi^{(0)},\bU^{(0)},\bXi^{(0)}$; tolerance $\epsilon$
\Ensure $\widetilde{\bU},\widetilde{\bZ},\widetilde{\balpha},\hat{\bpi}$
\Repeat
\State E-step: update $r_{jk}^{(t)}$, $\bm_{jk}^{(t)}$, and $\bC_{jk}^{(t)}$ for $j\in[m]$ and $k\in[K]$ using the current parameters, according to \eqref{eq:gmm-e-cov}.
\State M-step: update $\bZ$, $\balpha$, $\bpi$, $\bU$, and $\bXi$ by maximizing the variational lower bound, using \eqref{eq:gmm-update-z}-\eqref{eq:gmm-update-xi}.
\Until{$\|\bZ^{(t+1)}-\bZ^{(t)}\|_F\le\epsilon$}
\State Set $\widehat{\bZ}$, $\widehat{\balpha}$, $\widehat{\bpi}$, $\widehat{\bPi}$, $\widehat{\bU}$, and $\widehat{\bSigma}$ to the last iterates.
\State Center: set $\bar\bmu=\widehat{\bU}^\top\widehat{\bpi}$, $\bU_0=\widehat{\bU}-\bone_K\bar\bmu^\top$, and $\alpha_{0i}=\widehat{\alpha}_i+\bar\bmu^\top\widehat{\bz}_i$ for $i\in[n]$.
\State Whiten: 
set $\widehat{\bSigma}   =
    \bI_d
    +
    \sum_{k=1}^K
    \widehat{\pi}_k
    (\widehat{\bmu}_k-\bar{\bmu})
    (\widehat{\bmu}_k-\bar{\bmu})^{\top}
    =
    \bI_d+\bU_0^{\top}\widehat{\bPi}\bU_0$, let $
    \bU_1
    =
    \bU_0\widehat{\bSigma}^{-1/2}, \bZ_1=
    \widehat{\bZ}\widehat{\bSigma}^{1/2}.
$
\State Rotate: eigendecompose $\hat\bZ_1^\top\hat\bZ_1=\bQ_0\bLambda\bQ_0^\top$ with decreasing diagonal entries in $\bLambda$; return $\widetilde{\bU}=\bU_1\bQ_0$, $\widetilde{\bZ}=\bZ_1\bQ_0$, $\widetilde{\balpha}=(\alpha_{01},\ldots,\alpha_{0n})^\top$, and $\hat{\bpi}$.
\end{algorithmic}
\end{algorithm}
\subsection{Score-based Random Effects}\label{append:alg:score}
We finally consider a flexible specification of the random-effect distribution
$\mathbb{P}_{\bF}$. Instead of assuming a categorical distribution or a
Gaussian mixture, we model $\mathbb{P}_{\bF}$ by a score-based generative
model with parameter $\btheta$, denoted by $\mathbb{P}_{\bF,\btheta}$. Under
this model, each hyperlink $j\in[m]$ has a latent random effect
$\bff_j\in\mathbb{R}^d$, where
\[
    \bff_j\sim \mathbb{P}_{\bF,\btheta}.
\]
The observed-data likelihood is
\begin{equation}
\label{eq:score-observed-likelihood}
    \mathcal{L}(\bV\mid\bZ,\balpha,\btheta)
    =
    \prod_{j=1}^m
    \int
    p_j(\bff)\,
    d\mathbb{P}_{\bF,\btheta}(\bff).
\end{equation}

This formulation is mainly motivated by latent score-based generative modeling
\citep{vahdat2021score}. In latent score-based generative models, a diffusion
model is trained on the latent variables produced by a variational autoencoder,
with an encoder mapping data to latent space and a decoder mapping latent
variables back to data space. Our setting differs in two important respects.
First, the  variables $\{\bff_j\}_{j=1}^m$ are already part of the
statistical model, so no separate neural encoder is needed. Posterior
inference for $\bff_j$ can be performed using the likelihood of the observed
incidence vector $\bv_j$. Second, the observation model is the logistic
hyperlink model specified above, so no separate neural decoder is required.
Thus, we only borrow the score-based latent-prior training machinery, while
the encoder and decoder are replaced by model-based posterior inference and
the logistic incidence likelihood, respectively.

Let
\[
    \bF=(\bff_1,\ldots,\bff_m)^\top\in\mathbb{R}^{m\times d}
\]
denote the matrix of hyperlink random effects. 

Similar to what we have mentioned in the Gaussian mixture case, we still adopt the variational EM framework, treating $\bF$ as latent variables. Then the complete log-likelihood is
\begin{equation}
\label{eq:score-complete-loglik}
\begin{aligned}
    \ell(\bZ,\balpha,\btheta\mid\bV,\bF)
    =
    \sum_{j=1}^m
    \sum_{i=1}^n
    \left[
        v_{ji}(\bff_j^\top\bz_i+\alpha_i)
        -
        \log\{1+\exp(\bff_j^\top\bz_i+\alpha_i)\}
    \right]
    +
    \sum_{j=1}^m
    \log p_{\btheta}(\bff_j),
\end{aligned}
\end{equation}
where $p_{\btheta}$ denotes the density associated with
$\mathbb{P}_{\bF,\btheta}$ when it exists.

Notice that for different $j\in[m]$, even though those $\{\bff_j\}$ are still conditionally independent given $\bphi$, the posterior distributions $q(\bff_j|\bV,\phi)=q(\bff_j|\bv_j,\phi)$ are different. Thus we denote the posterior distribution for $\bff_j$ as $q^j_{\bphi}(\cdot|\bv)$ for notational simplicity. And we have
\begin{equation}
\label{eq:score-exact-posterior}
    q^j_{\bphi^{(t)}}(\bff_j\mid\bv_j)
    \propto
    p_{\btheta^{(t)}}(\bff_j)
    \prod_{i=1}^n
    \sigma(x_{ji}^{(t)})^{v_{ji}}
    \{1-\sigma(x_{ji}^{(t)})\}^{1-v_{ji}},
    \qquad
    x_{ji}^{(t)}
    =
    \bff_j^\top\bz_i^{(t)}+\alpha_i^{(t)}.
\end{equation}

Following the variational EM framework, we introduce candidate densities $q_j(\bff)$ to approximate $q_{\bphi}^j(\bff|\bv)$. And since the logistic term still brings challenges, we utilize the local variational lower bound shown in \eqref{variational-bound} again, by introducing variational parameters $\xi_{ji}$ for $\bff_j^{\top}\bz_i+\alpha_i$. Then the complete log-likelihood in \eqref{eq:score-complete-loglik} can be lower bounded pointwise by
\begin{equation}
\label{eq:score-lower-complete}
\begin{aligned}
    \widetilde{\ell}
    (\bZ,\balpha,\btheta,\bXi\mid\bV,\bF)
    &:=
    \sum_{j=1}^m
    \sum_{i=1}^n
        [\log\sigma(\xi_{ji})
        +
        \left(v_{ji}-\frac12\right)
        (\bff_j^\top\bz_i+\alpha_i)
        -
        \frac{\xi_{ji}}{2}]  \\
    &\qquad-\sum_{j=1}^m\sum_{i=1}^n\eta(\xi_{ji})
        \left\{
            (\bff_j^\top\bz_i+\alpha_i)^2
            -
            \xi_{ji}^2
        \right\}
    +
    \sum_{j=1}^m
    \log p_{\btheta}(\bff_j).
\end{aligned}
\end{equation}

In the E-step, given parameters $\bphi =(\bZ,\balpha,\btheta)$, we need to update $q_j(\bff)$. By the fundamental result in the theory of variational inference \citep{bishop2006pattern}, the approximate posterior $q_k(z_k)$ that maximizes the lower bound on the log-marginal likelihood (i.e. ELBO) takes a specific form
$$\log q_k^*(z_k)=\E_{j\neq k}[\log p(\bx,\bz)]+\text{constant}$$
for latent variable $\bz$. In our case, we consider maximizing the surrogate ELBO by replacing ${\ell}
    (\bZ,\balpha,\btheta,\bXi\mid\bV,\bF)$ with $\tilde \ell$ defined in \eqref{eq:score-lower-complete}
Then following the theory, the best $q_j(\bff)$ would satisfy
$$\begin{aligned}
    \log q_j^*(\bff)&=\E_{q_{j'\neq j}}\big\{\sum_{j=1}^m
    \sum_{i=1}^n
        [\log\sigma(\xi_{ji})
        +
        \left(v_{ji}-\frac12\right)
        (\bff_j^\top\bz_i+\alpha_i)
        -
        \frac{\xi_{ji}}{2}]  \\
    &\qquad-\sum_{j=1}^m\sum_{i=1}^n\eta(\xi_{ji})
        \left\{
            (\bff_j^\top\bz_i+\alpha_i)^2
            -
            \xi_{ji}^2
        \right\}
    +
    \sum_{j=1}^m
    \log p_{\btheta}(\bff_j)\big\}+C\\
    &=\sum_{i=1}^n(v_{ji}-\frac{1}{2})(\bff_j^{\top}\bz_i)-\sum_{i=1}^n\eta(\xi_{ji})(\bff_j^{\top}\bz_i+\alpha_i)^2+\log p_{\btheta}(\bff_i)+C
\end{aligned}$$
which indicates that 
\begin{equation}
\label{eq:score-tilted-posterior}
    q_j^*(\bff)
    \propto
    p_{\btheta}(\bff)
    \exp
    \left\{
        \bb_j^\top\bff
        -
        \frac12\bff^\top\bA_j\bff
    \right\},
\end{equation}
where
\begin{equation}
\label{eq:score-A-b}
    \bA_j
    =
    2\sum_{i=1}^n
    \eta(\xi_{ji})\bz_i\bz_i^\top,
    \qquad
    \bb_j
    =
    \sum_{i=1}^n
    \left[
        v_{ji}
        -
        \frac12
        -
        2\eta(\xi_{ji})\alpha_i
    \right]\bz_i .
\end{equation}

We thus set $q_j(\bff)$ to be $q^*_j(\bff)$, of the form shown in \eqref{eq:score-tilted-posterior} in the E-step. As we will show later in \eqref{eq:score_qfunction}, $q_j(\bff)$ would influence the M-step only through $$\bm_j:=\E_{q_j}[\bff_j], \bS_j:=\E[\bff_j\bff_j^{\top}]$$

\textbf{Thus in the E-step}, we only update $\bm_j,\bS_j$ by drawing samples from $q_j(\cdot)$ and then estimate it through sample mean and covariance, which can be achieved by adopting the prior-sampling reweighting methods. Specifically, for $\forall j\in[m]$, we draw samples from the diffusion prior $$\bff^{(1)},\dots,\bff^{(B)}\sim p_{\btheta}(\bff)$$
then we compute the weights 
    $$\log w_{jb}=\bb_i^{\top}\bff^{(b)}-\frac{1}{2}\bff^{(b)\top}A_i\bff^{(b)}$$
    and normalize the weights using 
    \begin{equation}
    \bar w_{jb}=\frac{\exp(\log w_{jb})}{\sum_{l=1}^B\exp(\log w_{jl})}\label{eq:sample-weight}
    \end{equation}
 We finally estimate $\bm_j$ and $\bS_j$ as 
    \begin{equation}
        \hat\bm_j=\sum_{b=1}^B \bar{w}_{jb} \bff^{(b)}, \quad \hat{\bS}_j=\sum_{b=1}^B\bar{w}_{jb} \bff^{(b)} \bff^{(b) \top}\label{eq:score-estep}
    \end{equation}

\textbf{Then in the M-step:}

Same as in section \ref{append:alg:gm}, we optimizes the tractable lower bound, i.e. the expectation of $\tilde l(\bZ,\balpha,\btheta,\bXi|\bV,\bF)$ shown in \eqref{eq:score-lower-complete}, with expectation taken w.r.t the approximate posterior. This would give us 
\begin{equation}
    \begin{aligned}
\tilde Q(\bZ,\balpha,\btheta,\bXi)&:=\sum_{j=1}^m\sum_{i=1}^n\{\log\sigma(\xi_{ji})+(v_{ji}-\frac{1}{2})(\bm_j^{\top}\bz_i+\alpha_i)-\frac{1}{2}\xi_{ji}\}\\
&\quad-\sum_{j=1}^m\sum_{i=1}^n\eta(\xi_{ji})[\bz_i^{\top}(\bS_j+\bm_j\bm_j^{\top})\bz_i+\alpha_i^2+2\alpha_i\bm_j^{\top}\bz_i-\xi_{ji}^2]\}\\
&\quad+\sum_{j=1}^m\E_{q_j{(\bff_j)}}[\log p_{\btheta}(\bff_j)]
    \end{aligned}\label{eq:score_qfunction}
\end{equation}

With $\{\hat\bm_j,\hat\bS_j\}_{j\in[m]}$, $(\bZ\,\balpha,\bXi)$ can be updated directly by taking derivatives w.r.t \eqref{eq:score_qfunction}, which gives the following closed-form coordinate updates:
\begin{align}
    \bz_i
    &=
    \frac12
    \left[
        \sum_{j=1}^m
        \eta(\xi_{ji})\widehat{\bS}_j
    \right]^{-1}
    \sum_{j=1}^m
    \left[
        v_{ji}
        -
        \frac12
        -
        2\eta(\xi_{ji})\alpha_i
    \right]
    \widehat{\bm}_j,
    \qquad i\in[n],
    \label{eq:score-update-z}
    \\[1mm]
    \alpha_i
    &=
    \frac{
        \sum_{j=1}^m
        \left[
            v_{ji}
            -
            \frac12
            -
            2\eta(\xi_{ji})
            \widehat{\bm}_j^\top\bz_i
        \right]
    }{
        2\sum_{j=1}^m\eta(\xi_{ji})
    },
    \qquad i\in[n],
    \label{eq:score-update-alpha}
    \\[1mm]
    \xi_{ji}^2
    &=
    \bz_i^\top\widehat{\bS}_j\bz_i
    +
    2\alpha_i\widehat{\bm}_j^\top\bz_i
    +
    \alpha_i^2,
    \qquad j\in[m],\ i\in[n].
    \label{eq:score-update-xi}
\end{align}
Equivalently, one may update $(\bz_i,\alpha_i)$ jointly. Let
\[
    \widetilde{\bw}_i
    =
    \begin{pmatrix}
        \bz_i\\
        \alpha_i
    \end{pmatrix},
    \qquad
    \widetilde{\bm}_j
    =
    \begin{pmatrix}
        \widehat{\bm}_j\\
        1
    \end{pmatrix},
    \qquad
    \widetilde{\bS}_j
    =
    \begin{pmatrix}
        \widehat{\bS}_j & \widehat{\bm}_j\\
        \widehat{\bm}_j^\top & 1
    \end{pmatrix}.
\]
Then
\begin{equation}
\label{eq:score-joint-node-update}
    \widetilde{\bw}_i
    =
    \left[
        2\sum_{j=1}^m
        \eta(\xi_{ji})\widetilde{\bS}_j
    \right]^{-1}
    \left[
        \sum_{j=1}^m
        \left(v_{ji}-\frac12\right)
        \widetilde{\bm}_j
    \right],
    \qquad i\in[n].
\end{equation}
This joint update is often more stable than alternating between
\eqref{eq:score-update-z} and \eqref{eq:score-update-alpha}.

For the score-based prior parameter $\btheta$, we 
$$\text{minimize }\sum_{i=j}^m\E_{q_j{\bff_j}}[-\log p_{\btheta}(\bff_i)]$$
Since the density $p_{\btheta}$ is generally not evaluated in closed form, we do not
maximize $\sum_j\mathbb{E}_{q_j}\log p_{\btheta}(\bff_j)$ directly. Instead, at current stage $t$,
we train the score model using samples from approximate posterior
$\{q_j^{(t)}(\cdot)\}_{j=1}^m$. These samples are obtained by sampling-importance-resampling, same as we did in the E-step, for each $j$, we resample from
$\{\bff^{(b)}\}_{b=1}^B$ according to the normalized weights
$\{\bar\omega_{jb}\}_{b=1}^B$ shown in \eqref{eq:sample-weight}.

Given posterior samples $\bff_{0,j}$, we can directly refine the Likelihood training with importance sampling algorithm (Algorithm 1) in~\citep{vahdat2021score} to update $\btheta$, using a variance-preserving SDE (VPSDE~\citep{song2020score}). This step is summarized in Algorithm \ref{alg:LSGM}.

\begin{algorithm}[t]
\caption{Likelihood-weighted training of the score-based random-effect prior}
\label{alg:LSGM}
\small
\begin{algorithmic}[1]
\Require $\bV$; current $\bZ^{(t)},\balpha^{(t)}$; current posterior approximates $\{q_{_j}^{(t)}(\bff\mid\bv_j)\}_{j=1}^m$; current score parameter $\btheta^{(t)}$
\Ensure Updated score parameter $\btheta$
\State Draw $\bff_0^j\sim q_{j}^{(t)}(\bff\mid\bv_j)$ independently for $j\in[m]$.
\State Draw $\tau\sim r_{\mathrm{ll}}(\tau)$, compute the VPSDE mean $\bmu_\tau(\bff_0^j)$ and variance $\sigma_\tau^2$, and draw $\bff_\tau^j=\bmu_\tau(\bff_0^j)+\sigma_\tau\bepsilon^j$ with $\bepsilon^j\overset{i.i.d.}{\sim}\mathcal{N}(\bzero_d,\bI_d)$.
\State Update $\btheta$ by minimizing $\sum_{j=1}^m\{w_{\mathrm{ll}}(\tau)/(2r_{\mathrm{ll}}(\tau))\}\|\bepsilon^j-\bepsilon_{\btheta}(\bff_\tau^j,\tau)\|_2^2$ using stochastic gradient descent.
\end{algorithmic}
\end{algorithm}

To be more specific, we apply the standard latent score-matching objective. For a variance-preserving diffusion process, let
\[
    \bff_t
    =
    \ba_{\tau}(\bff_{0,j})
    +
    \sigma_t\bepsilon,
    \qquad
    \bepsilon\sim\mathcal{N}(\bzero_d,\bI_d),
\]
where $\ba_{\tau}$ and $\sigma_t^2$ are determined by the VPSDE schedule. Then we update $\btheta$ by minimizing
\begin{equation}
\label{eq:score-dsm-loss}
    \mathcal{L}_{\mathrm{DSM}}(\btheta)
    =
    \mathbb{E}
    \left[
        \frac{w_{\mathrm{ll}}(t)}{2r_{\mathrm{ll}}(t)}
        \left\|
            \bepsilon
            -
            \bepsilon_{\btheta}(\bff_t,t)
        \right\|_2^2
    \right],
\end{equation}
where $t\sim r_{\mathrm{ll}}(t)$ and $w_{\mathrm{ll}}(t)$ is the likelihood
weighting function. In practice, we adopt score-based model for this density and update $\btheta$ through a
denoising score-matching objective using samples from the approximate
posterior distribution of the latent random effects.

This is the same score-training mechanism used for latent
score-based generative models~\cite{vahdat2021score}, except that the training samples here are
drawn from the model-based posterior distribution of the hyperlink random
effects rather than from a VAE encoder. In the coding implementation, we adjusted the code from \citep{vahdat2021score}\footnote{\href{https://nvlabs.github.io/LSGM/}https://nvlabs.github.io/LSGM/} to match our Algorithm \ref{alg:HRLScore}.

The full variational SDE algorithm for Score-based random effects is summarized in Algorithm \ref{alg:HRLScore}.
\begin{algorithm}[t]
\caption{Monte Carlo variational EM for score-based hyperlink random effects}
\label{alg:HRLScore}
\small
\begin{algorithmic}[1]
\Require $\bV\in\{0,1\}^{m\times n}$; latent dimension $d$;
initial values $\bZ^{(0)}$, $\balpha^{(0)}$, $\bXi^{(0)}$, and
$\btheta^{(0)}$; posterior Monte Carlo size $B$; tolerance $\epsilon$.
\Ensure $\widehat{\bZ}$, $\widehat{\balpha}$, and $\widehat{\btheta}$.

\Repeat
    \State \textbf{E-Step:} \begin{itemize}
        \item For each hyperlink $j\in[m]$, compute $\bA_j^{(t)}$ and $\bb_j^{(t)}$ using
    \eqref{eq:score-A-b}.
    \item Draw prior samples $\bff^{(1)},\ldots,\bff^{(B)}\sim\mathbb{P}_{\bF,\btheta^{(t)}}$ using the reverse diffusion sampler.
    \item For each $j\in[m]$, compute normalized importance weights
    $\{\omega_{jb}^{(t)}\}_{b=1}^B$ and estimate
    $\widehat{\bm}_j^{(t)}$ and $\widehat{\bS}_j^{(t)}$ using
    \eqref{eq:score-estep}.
    \end{itemize}

    \State \textbf{M-Step:} 
    \begin{enumerate}
        \item For each node $i\in[n]$, update
    $(\bz_i^{(t+1)},\alpha_i^{(t+1)})$ using the joint update
    \eqref{eq:score-joint-node-update}, or equivalently the coordinate
    updates \eqref{eq:score-update-z} and \eqref{eq:score-update-alpha}.
    \item Update
    $\xi_{ji}^{(t+1)}$ for all $j\in[m]$ and $i\in[n]$ using \eqref{eq:score-update-xi}.
    \item Update $\btheta^{(t+1)}$ following Algorithm \ref{alg:LSGM}
    \end{enumerate}
\Until{$\|\bZ^{(t+1)}-\bZ^{(t)}\|_F\leq \epsilon$}

\State Set $\widehat{\bZ}$, $\widehat{\balpha}$, and $\widehat{\btheta}$ to
the final iterates.

\State Center and whiten: draw samples
$\bff^{(1)},\ldots,\bff^{(B_0)}\sim\mathbb{P}_{\bF,\widehat{\btheta}}$ and
estimate
\[
    \widehat{\bm}_{\bF}
    =
    \frac{1}{B_0}
    \sum_{b=1}^{B_0}
    \bff^{(b)},
    \qquad
    \widehat{\bSigma}_{\bF}
    =
    \frac{1}{B_0}
    \sum_{b=1}^{B_0}
    \left(\bff^{(b)}-\widehat{\bm}_{\bF}\right)
    \left(\bff^{(b)}-\widehat{\bm}_{\bF}\right)^\top .
\]
Set $\alpha_{0i} = \widehat{\alpha}_i
    +
    \widehat{\bm}_{\bF}^{\top}\widehat{\bz}_i,
    \bZ_1
    =
    \widehat{\bZ}\widehat{\bSigma}_{\bF}^{1/2}.$
\State Rotate: eigendecompose $\bZ_1^\top\bZ_1=\bQ_0\bLambda\bQ_0^\top$ 
with decreasing diagonal entries in $\bLambda$, then set \ \State Return $\widetilde{\bZ}
    =
    \bZ_1\bQ_0,
    \widetilde{\balpha}
    =
    (\alpha_{01},\ldots,\alpha_{0n})^\top$
\end{algorithmic}
\end{algorithm}

\subsection{Initialization}\label{sec:initialization}
In general, EM algorithm is sensitive to initial parameter values. We therefore propose initialization schemes for all three algorithms, mainly based on Universal Singular Value Thresholding (USVT) \citep{chatterjee2015matrix}, followed by $K$-means clustering. 

\paragraph{Initialization for categorical random effects} In words, the initialization procedure first estimates an $m\times n$ probability matrix for the observed incidence matrix without imposing the discrete latent-type structure. After applying the logit transformation, we remove node-specific intercepts and compute a rank-$d$ approximation. This gives preliminary embeddings for all $m$ hyperlinks and all $n$ nodes. We then apply kmeans to the hyperlink embeddings, treating the resulting cluster centers as initial estimates of the $K$ support points $\{\bff_k\}_{k=1}^K$. This kmeans step also allows for the derivation of an initial estimate of the $K\times n$ probability matrix $\bQ^0$ by setting $q_{ki}^0=\mathrm{expit}(\bff_k^{0\top}\bz_i^0+\alpha_i^0),\forall k\in[K], i\in [n]$. The whole initialization procedure is summarized in Algorithm \ref{alg:EM_initial}.

\begin{algorithm}[t]
\caption{Initialization of Algorithm \ref{alg:EM} based on USVT and K-means}
\label{alg:EM_initial}
\small
\begin{algorithmic}[1]
\Require Data: $\bV$; total number of groups: $K$; dimension of the latent variable: $d$; threshold: $\tau$; lower bound: $a$; upper bound: $b$
\Ensure $\bF^0,\bZ^0,\balpha^0,\bpi^0$
\State Let $\tilde \bP=\sum_{s_j\geq \tau}s_j\bu_j\bv_j^{\top}$, where $\sum s_j\bu_j\bv_j^{\top}$ is the SVD of $\bV$. Elementwisely project $\tilde\bP$ to the interval [a,b] to obtain $\hat\bP$. Let $\hat\bTheta=\text{logit}(\hat\bP)$.
\State Let $\balpha_0=\argmin_{\balpha}\left\|\hat\bTheta-\bone_m\balpha^{\top}\right\|_F^2$.
\State Let $\hat\bG=\hat\bTheta-\bone_m\balpha_0^{\top},\bU^0=\bU_d\bD_d^{1/2},\bZ^0=\bV_d\bD_d^{1/2}$, where $\bU_d\bD_d\bV_d^{\top}$ is the top-$d$ singular value component of $\hat\bG$.
\State Run k-means with group number $M$ on the rows of $\bU^0$ to get $M$ centers, and the centers are considered as $\bF^0$, the membership distribution are considered as $\bpi^0$.
\State Normalize $\bF^0,\bZ^0,\balpha^0$ in exactly the way as stated in Algorithm \ref{alg:EM}.
\State Let $\bQ^0=\mathrm{expit}(\bF^0\bZ^{0\top}+\bone_K\balpha^{0\top})$.
\end{algorithmic}
\end{algorithm}

\paragraph{Initialization for Gaussian mixture} For the Gaussian mixture case, the initialization steps is quite similar to that in the categorical, which is presented in Algorithm \ref{alg:MGVEM_initial}. The only differences are that now we additionally set initial values for $\{r_{jk}\}$ as the posterior probability that hyperlink $j$ belongs to group $k$, and we initialize $\xi_{jik}$ by drawing samples from the $k$-th initial Gaussian component, trying to follow the idea that $\xi_{jik}$ is introduced for $\bff_j^{\top}\bz_i+\alpha_i|k_j=k$.

\begin{algorithm}[t]
\caption{Initialization of Algorithm \ref{alg:MGVEM}}
\label{alg:MGVEM_initial}
\small
\begin{algorithmic}[1]
\Require Data: $\bV\in\mathbb{R}^{m\times n}$; total number of groups: $K$; dimension of the latent variable: $d$; threshold: $\tau$; lower bound: $a$; upper bound: $b$
\Ensure $\bZ^0,\balpha^0,\bpi^0,\bU^{(0)},\{r_{jk}^0\},\bXi^0$

\State Let $\tilde \bP=\sum_{s_j\geq \tau}s_j\bu_j\bv_j^{\top}$,
where $\sum_j s_j\bu_j\bv_j^{\top}$ is the SVD of $\bV$. Elementwisely project $\tilde\bP$ to the interval $[a,b]$ to obtain $\hat\bP$. Let $\hat\bTheta=\operatorname{logit}(\hat\bP)$.

\State Set $\balpha_0
=
\argmin_{\balpha\in\mathbb{R}^n}
\left\|
\hat\bTheta-\bone_m\balpha^{\top}
\right\|_F^2$.

\State Let $\hat\bG=\hat\bTheta-\bone_m\balpha_0^{\top}, \bQ^0=\bU_d\bD_d^{1/2}, \bZ^0=\bV_d\bD_d^{1/2}$,
where $\bU_d\bD_d\bV_d^{\top}$ is the top-$d$ singular value component of $\hat\bG$.

\State Run K-means with group number $K$ on the rows of $\bQ^0$ to obtain clusters $\mathcal{C}_1,\ldots,\mathcal{C}_K$. Let $\bq_j^0$ denote the $j$-th row of $\bQ^0$, for $j\in[m]$.

\State For each $k\in[K]$, set $\pi_k^0=\frac{|\mathcal{C}_k|}{m}, \bmu_k^0
=
\frac{1}{|\mathcal{C}_k|}
\sum_{i\in\mathcal{C}_k}\bq_j^0$
and $\bSigma_k^0
=
\frac{1}{|\mathcal{C}_k|}
\sum_{j\in\mathcal{C}_k}
(\bq_j^0-\bmu_k^0)(\bq_j^0-\bmu_k^0)^{\top}$.

\State Normalize $\bU^{0}, \bZ^0, \balpha^0$ in the same way as stated in Algorithm~\ref{alg:MGVEM}.

\State For each $k\in[K]$, randomly draw $m$ samples from $\mathcal{N}(\bmu_k^0,\bSigma_k^0)$, denoted as $\{\bff^0_{jk}\}$, and set $\xi^0_{jik}={\bff_{kj}^0}^{\top}\bz_i^0+\alpha_i^0$
\State For each $i\in[m]$ and $k\in[K]$, set
$r_{jk}^0
=
\frac{
\pi_k^0\,
\phi_d(\bq_j^0;\bmu_k^0,\bSigma_k^0)
}{
\sum_{\ell=1}^K
\pi_\ell^0\,
\phi_d(\bq_j^0;\bmu_\ell^0,\bSigma_\ell^0)
}$. 

\end{algorithmic}
\end{algorithm}

\paragraph{Initialization for Score-based random effects} To initialize $\bZ^0,\balpha^0$, we directly follow the first six steps described in Algorithm \ref{alg:MGVEM_initial}. We then consider the Gaussian mixture distribution with parameters obtained from the first six steps as initial distribution for $\bff_j$. Then we draw $m$ samples $\{\bff_j^0\}_{j\in[m]}$ from this initial distribution and set $\xi_{ji}^0={\bff_j^0}^{\top}\bz_i+\alpha_i$. At last, in order to initialize $\btheta$, we pretrain the score network on $\{\bff_j^0\}_{j\in[m]}$ for a specified number of steps, following the same steps as shown in Algorithm \ref{alg:LSGM}, with the only difference being now in the first step, we draw $B$ samples from $\{\bff_j^0\}_{j\in [m]}$ with replacement rather than draw samples from the posterior proposals.

\paragraph{Choices of parameters in the initialization steps} In all conducted experiments, we choose $a=\exp(-4)/2,b=1-10^{-3},\tau=\sqrt{\sum_{ji}v_{ji}/p}$ and for the k-means step, we use $100$ random seeds. For the Score-based initialization, we use the default value $B=256$, while treating the pre-training epochs as a hyper-parameters to be tuned.

\subsection{Model Selection}\label{append:select}
All of the three newly proposed Algorithm~\ref{alg:EM} takes the latent dimension $d$ as an input. In practice, however, this dimension is usually unknown and must be selected from the data. We therefore consider validation-based procedures for choosing $d$. The general principle is to split the observed incidence matrix into a training part and a validation part, fit the model on the training data for a range of candidate dimensions, and select the value of $d$ that gives the best predictive performance on the validation data.

\subsubsection{Model Selection for Categorical Random Effects}
First consider the categorical random effects, we propose two validation schemes. The first scheme is based on the ability of the fitted model to predict missing entries of the incidence matrix. Let $\Omega_{\mathrm{val}}\subset[m]\times[n]$ denote a randomly selected set of held-out entries, sampled according to a prespecified dropout rate, and let
$\Omega_{\mathrm{tr}}
    =
    ([m]\times[n])\setminus \Omega_{\mathrm{val}}$
be the set of observed training entries. The resulting partially observed incidence matrix is denoted by $\widetilde{\bV}$. For initialization, we impute the missing entries in $\widetilde{\bV}$ by zeros and apply the initialization procedure described above. During the subsequent EM iterations, however, only the observed training entries are used. Subsequently, we adjust the EM algorithm. We now update $\bDelta$ using:
\begin{equation}
\label{eq:missing-entry-estep}
    \Delta_{jk}^{(t)}
    =
    \frac{
        \pi_k^{(t)}
        \prod_{i:(j,i)\in\Omega_{\mathrm{tr}}}
        \left(q_{ki}^{(t)}\right)^{v_{ji}}
        \left(1-q_{ki}^{(t)}\right)^{1-v_{ji}}
    }{
        \sum_{\ell=1}^K
        \pi_\ell^{(t)}
        \prod_{i:(j,i)\in\Omega_{\mathrm{tr}}}
        \left(q_{\ell i}^{(t)}\right)^{v_{ji}}
        \left(1-q_{\ell i}^{(t)}\right)^{1-v_{ji}}
    } ,
    \qquad
    j\in[m],\ k\in[K].
\end{equation}
In the M-step, the cluster-specific probability matrix is updated entrywise by
\begin{equation}
\label{eq:missing-entry-mstep}
    \widehat q_{ki}
    =
    \frac{
        \sum_{j:(j,i)\in\Omega_{\mathrm{tr}}}
        \Delta_{jk} v_{ji}
    }{
        \sum_{j:(j,i)\in\Omega_{\mathrm{tr}}}
        \Delta_{jk}
    },
    \qquad
    k\in[K],\ i\in[n],
\end{equation}

After fitting the model for a candidate dimension $d$, we evaluate its prediction error on the held-out entries. For each $(j,i)\in\Omega_{\mathrm{val}}$, the predicted probability is set as 
\[
    \widehat p_{ji}^{(d)}
    =
    \sigma\left(
        \widehat{\bff}_{\widehat k_j}^{\top}
        \widehat{\bz}_i
        +
        \widehat\alpha_i
    \right),
    \qquad
    \widehat k_j
    =
    \arg\max_{k\in[K]} \widehat \Delta_{jk},
\]

Alternatively, we can also adopt a soft assignment approach, set the predicted probability as
\[
    \widehat p_{ji}^{(d)}
    =
    \sum_{k=1}^K
    \widehat\Delta_{jk}
    \sigma\left\{
        \left(\widehat{\bff}_k^{(d)}\right)^\top
        \widehat{\bz}_i^{(d)}
        +
        \widehat\alpha_i^{(d)}
    \right\}.
\]
We then select the latent dimension that minimizes the Brier Score $\frac{1}{|\Omega_{\mathrm{val}}|}
    \sum_{(j,i)\in\Omega_{\mathrm{val}}}
    \left(
        \widehat p^{(d)}_{ji}-v_{ji}
    \right)^2$.
    
The second scheme is based on the ability of the model to predict entire held-out hyperlinks. Let $\mathcal{J}_{\mathrm{val}}\subset[m]$ denote a randomly selected set of held-out hyperlinks, and let $\mathcal{J}_{\mathrm{tr}} =[m]\setminus \mathcal{J}_{\mathrm{val}}$
be the training hyperlinks. We fit the model using the submatrix $\bV^{\mathrm{tr}}=\{v_{ji}: j\in\mathcal{J}_{\mathrm{tr}},\ i\in[n]\}$ for each candidate dimension $d$. Denote the held-out submatrix by $\bV^{\mathrm{val}} =\{v_{ji}: j\in\mathcal{J}_{\mathrm{val}},\ i\in[n]\}$. After fitting the model on the training hyperlinks, we assign each held-out hyperlink $j\in\mathcal{J}_{\mathrm{val}}$ to its most likely latent type. This is achieved by finding $k_j$ that maximizes
\begin{equation}
\label{eq:heldout-row-posterior}
    \widehat \Delta_{jk}^{\mathrm{val}}
    =
    \frac{
        \widehat\pi_k
        \prod_{i=1}^n
        \widehat q_{ki}^{v_{ji}}
        \left(1-\widehat q_{ki}\right)^{1-v_{ji}}
    }{
        \sum_{\ell=1}^K
        \widehat\pi_\ell
        \prod_{i=1}^n
        \widehat q_{\ell i}^{v_{ji}}
        \left(1-\widehat q_{\ell i}\right)^{1-v_{ji}}
    },
    \qquad
    j\in\mathcal{J}_{\mathrm{val}}
\end{equation}
We then compute the predicted probability matrix for those held-out hyperlinks as
$\widehat q_{ji}^{\mathrm{val}}
    =
    \widehat q_{\widehat k_j i}
    =
    \sigma\left(
        \widehat{\bff}_{\widehat k_j}^{\top}
        \widehat{\bz}_i
        +
        \widehat\alpha_i
    \right),
    \qquad
    j\in\mathcal{J}_{\mathrm{val}},\ i\in[n]$. Finally, we compare the normalized Frobenius loss $
    \mathrm{Err}_{\mathrm{row}}(d)
    =
    \frac{
        \left\|
            \widehat{\bQ}^{\mathrm{val}}
            -
            \bV^{\mathrm{val}}
        \right\|_F^2
    }{
        \left\|
            \bV^{\mathrm{val}}
        \right\|_F^2
    }$ and choose the latent dimension such that this value is the smallest.

\subsubsection{Model Selection for Gaussian Mixture and Score-based Random Effects}
For Gaussian-mixture case and Score-based case, the same validation principle also applies, where we randomly mask entries of $\bV$, fit the corresponding model using only the observed training entries, and set the posterior predictive probability as
\begin{equation}
    p_{ji}^{pred}
    =\E_{\hat p(\bff_j|obs)}
    \left[
        \sigma\left(
            \bff_j^\top\widehat\bz_i+\widehat\alpha_i
        \right)
    \right]\label{eq:prediction}
\end{equation}

Here for both cases, we can't obtain a closed form expression for \eqref{eq:prediction}. We thus propose methods to estimate it.

\textbf{For the Gaussian-mixture case}, let's consider the final estimates obtained before the final normalization step, this does not effect $p_{ji}^{pred}$ by change of variables, but can simplify the estimating procedure. We just write $\tilde \bff_j,\tilde\bz_i$ as $\bff_j,\bz_i$ for notational simplicity.
\begin{equation}
    p_{ji}^{pred}=\sum_{k=1}^K\underbrace{P(k_j=k|\text{observed})}_{:=\tau_{jk}}\underbrace{\E_{\bff_j|k_j,\text{observed}}{[\sigma(\bff_j^{\top}\bz_i+\alpha_i)]}}_{:q_{ji}^k}
\end{equation}
where 
$$p(\bff_j|k_j=k,\text{observed})\propto\Pi_{j:(j,i)\in\Omega_{tr}}\sigma_{ji}^{v_{ji}}(1-\sigma_{ji})^{1-v_{ji}}\phi_d(\bff_j;\bmu_k,\bI_d)$$
We define the log value of the above RHS (up to a constant does not depend on $\bff_j$) as
\begin{equation}
    \label{eq:l_jk}
    \ell_{jk}(\bff):=\sum_{j:(j,i)\in\Omega_{tr}}[v_{ji}(\bff_i^{\top}\bz_j+\alpha_j)-\log(1+\exp(\bff_i^{\top}\bz_j+\alpha_j))]-\frac{1}{2}\|\bmu_k-\bff_i\|^2
\end{equation}

Then by Laplacian approximation, we have 
\begin{equation}
\begin{aligned}
    \label{laplacian}
    \bff_j|k_j,\text{obs}&\approx \cN(\hat \bff_{jk},\bS_{jk}), \text{ with } \hat \bff_{jk}:=\argmax \ell_{jk}(\bff), \bS_{jk}:=[-\nabla^2\ell_{jk}(\bff_{jk})]^{-1}\\
    \Rightarrow \bff_j^{\top}\bz_i+\alpha_i|k_j,\text{obs}&\approx \cN(\hat \bff_{jk}^{\top}\bz_i+\alpha_i,\bz_i^{\top}\bS_{jk}\bz_i)\\
    q_{ji}^k&\approx \E_{\bx\sim \cN(0,\bI_d)}[\sigma(\hat\bff_{jk}^{\top}\bz_i+\alpha_i+\sqrt{\bz_i^{\top}\bS_{jk}\bz_i}\bx)]\\
    &\approx \frac{1}{\sqrt{\pi}}\sum_{r=1}^Rw_r\sigma\big (\hat\bff_{jk}^{\top}\bz_i+\alpha_i+\sqrt{2\bz_i^{\top}\bS_{jk}\bz_i}\bx_r\big )
\end{aligned}
\end{equation}
where the last approximation utilizes Gaussian-Hermite approximation, with $w_r,\bx_r$ representing Hermite weights and nodes respectively.

For $\tau_{jk}$, since in Algorithm \ref{alg:MGVEM}, $r_{jk}$ can be viewed as an approximation for $P(k_j=k|obs)$, we can directly use the $\{r_{jk}^{(T)}\}$ in the final EM round $T$ to approximate $\tau_{jk}$.

Thus it remains to estimate $\hat\bff_{jk}$ defined in \eqref{laplacian}, which can be achieved by using Newton's method. To be more specific, we set $\bff_{jk}^{(0)}=\bmu_m$, and then iteratively update $\bff_{jk}$ using
$$\bff_{jk}^{(t+1)}=\bff_{jk}^{(t)}+(-\nabla^2\ell_{jk})^{-1}\nabla\ell_{jk}(\bff_{jk}^{(t)})$$
until convergence.

Putting all together, $\forall j\in[m]$, we follow the procedure summarized below to obtain $\hat p_{ji}^{pred}$.
\begin{itemize}
    \item Compute $\hat\bff_{jk}$ using Newton's method.
    \item Compute $\bS_{jk}=[-\nabla^2\ell_{jk}(\bff_{jk})]^{-1}$
    \item Compute $\hat q_{ji}^k:=\frac{1}{\sqrt{\pi}}\sum_{r=1}^Rw_r\sigma\big (\hat\bff_{jk}^{\top}\bz_i+\alpha_i+\sqrt{2\bz_i^{\top}\bS_{jk}\bz_i}\bx_r$
    \item Set $\hat p_{ji}=\sum_{k=1}^Kr_{jk}^T\hat q_{ji}^k$
\end{itemize}

Notice that we can also approximate $p_{ji}^{pred}$ by adopting a Monte Carlo method, drawing samples from $\hat p(\bff_j|obs)$.

\textbf{For the Score-based case}, we directly approximate \eqref{eq:prediction} through $\E_{q_j(\bff)}[\sigma(\bff_j^{\top}\hat\bz_i+\hat\alpha_i)]$, which can further be estimated using a Monte Carlo method, drawing samples from the final approximate posterior distribution using an importance-sampling procedure, as described in section \ref{append:alg:score}.

\paragraph{Remark on extensions} The same validation criteria can also be used to select the number of latent
components $K$ for the categorical and Gaussian-mixture random-effect models.
In that case, one evaluates a grid of candidate pairs $(K,d)$ and selects the
pair with the smallest validation error. For the score-based random-effect
model, there is no finite mixture size $K$, so the same procedure is used only for selecting $d$ and other implementation hyperparameters.

\section{Supplementary for Empirical Results}\label{append:numerics}
\paragraph{Computational resources.}
All experiments were run on a local MacBook Pro with an Apple M3 Pro chip, an 11-core CPU, and 18 GB of unified memory. The experiments were CPU-only; no GPU acceleration, cloud computing, cluster scheduler, or distributed computation was used. HRL-Cat and HRL-GM were implemented in R and executed as local CPU processes. HRL-Score was implemented in Python and also executed locally on CPU.

For reference, on a synthetic hypergraph with \(m=5000\) hyperedges and \(p=180\) nodes, a single full fit of HRL-Cat takes less than one minute, a single full fit of HRL-GM takes approximately eight minutes, and a single full fit of HRL-Score takes approximately three minutes for 700 epochs. These timings are intended as representative wall-clock runtimes for one model fit; repeated experimental runs and hyperparameter searches scale accordingly.

The implementation code is available in the anonymized repository\footnote{\href{https://anonymous.4open.science/r/Hypergraph-Representation-Learning-7463}{https://anonymous.4open.science/r/Hypergraph-Representation-Learning-7463}} 
\subsection{Additional Simulation Results}\label{append:simulation}
This section provides additional simulation results for the proposed HRL methods. We first study the categorical random-effects model, which corresponds to HRL-Cat, and evaluate parameter recovery under three settings: balanced, unbalanced, and larger-component. We then examine the empirical behavior of the validation procedure for selecting the latent embedding dimension. Lastly, we give the full simulation details for the Gaussian-mixture and heavy-tailed random-effect designs used in Section~\ref{sec:sim}, together with additional recovery results.

\subsubsection{Categorical Random Effects}
\label{append:sim-cat}

In this whole subsection, we generate hypergraphs from the categorical random-effects model introduced in Section~\ref{sec:estimation}. In this setting, the random-effect distribution is
\[
    \mathbb P_{\mathbf F}^*
    =
    \sum_{k=1}^K \pi_k^* \delta_{\mathbf f_k^*},
    \qquad
    \pi_k^*>0,\qquad
    \sum_{k=1}^K \pi_k^*=1,
\]
where \(K\) is the number of latent hyperlink types and \(\mathbf f_k^*\in\mathbb R^d\) is the type-specific random effect. 

For each hyperlink \(j\in[m]\), we first sample a latent type
\[
    g_j \sim \mathrm{Categorical}(\boldsymbol\pi^*),
\]
and then generate the incidence vector \(\mathbf v_j=(v_{j1},\ldots,v_{jn})^\top\) independently across nodes:
\[
    v_{ji}\mid g_j=k
    \sim
    \mathrm{Bernoulli}(q_{ki}^*),
    \qquad
    q_{ki}^*
    =
    \sigma\!\left\{
    \mathbf f_k^{*\top}\mathbf z_i^*+\alpha_i^*
    \right\}.
\]

We denote the full type-by-node probability matrix by
\[
    \mathbf Q^*
    =
    (q_{ki}^*)_{k\in[K],\,i\in[n]}
    \in [0,1]^{K\times n}.
\]

For each configuration, we generate 100 independent hypergraphs. The final estimates produced by HRL-Cat are normalized to satisfy the identifiability constraints in \eqref{eq:cat_identifiability}, and the true parameters are generated and normalized to satisfy the same constraints. Therefore, before computing estimation errors, we only need to account for the remaining indeterminacies: permutation of the \(K\) latent hyperlink types and coordinate-wise sign flips of the \(d\)-dimensional embeddings.

We first correct for label switching. Let \(\mathcal S_K\) denote the set of permutations of \([K]\). We choose the permutation
\[
    \widehat{\rho}
    =
    \arg\min_{\rho\in\mathcal S_K}
    \left\|
    \widehat{\mathbf Q}_{\rho(\cdot),:}
    -
    \mathbf Q^*
    \right\|_F ,
\]
where \(\widehat{\mathbf Q}_{\rho(\cdot),:}\) denotes the matrix obtained by permuting the rows of \(\widehat{\mathbf Q}\) according to \(\rho\). We then apply the same permutation to the estimated component-specific quantities:
\[
    \widehat{\mathbf Q}
    \leftarrow
    \widehat{\mathbf Q}_{\widehat{\rho}(\cdot),:},
    \qquad
    \widehat{\mathbf F}
    \leftarrow
    \widehat{\mathbf F}_{\widehat{\rho}(\cdot),:},
    \qquad
    \widehat{\boldsymbol\pi}
    \leftarrow
    \widehat{\boldsymbol\pi}_{\widehat{\rho}(\cdot)}.
\]

After aligning the component labels, we correct the remaining coordinate-wise sign ambiguity. For each latent coordinate \(l\in[d]\), we choose
\[
    s_l
    =
    \arg\min_{s\in\{-1,1\}}
    \left\|
    s\,\widehat{\mathbf Z}_{[:,l]}
    -
    \mathbf Z^*_{[:,l]}
    \right\|_2 .
\]
Let \(\mathbf S=\mathrm{diag}(s_1,\ldots,s_d)\). We then replace
\[
    \widehat{\mathbf Z}
    \leftarrow
    \widehat{\mathbf Z}\mathbf S,
    \qquad
    \widehat{\mathbf F}
    \leftarrow
    \widehat{\mathbf F}\mathbf S .
\]
This sign correction preserves the fitted linear predictors
\[
    \widehat{\mathbf f}_k^\top \widehat{\mathbf z}_i+\widehat{\alpha}_i
\]
and hence preserves the fitted incidence probabilities. After these alignment steps, we evaluate parameter recovery using the normalized Frobenius errors
\[
    \mathrm{Err}_{Q}
    =
    \frac{\|\widehat{\mathbf Q}-\mathbf Q^*\|_F}{\sqrt{Kn}},
    \qquad
    \mathrm{Err}_{F}
    =
    \frac{\|\widehat{\mathbf F}-\mathbf F^*\|_F}{\sqrt{Kd}},
    \qquad
    \mathrm{Err}_{Z}
    =
    \frac{\|\widehat{\mathbf Z}-\mathbf Z^*\|_F}{\sqrt{nd}}.
\]
where $\bF^*=(\bff_1^*,\dots,\bff_K^*)^{\top}\in\RR^{K,d}$.
Unless otherwise stated, the intercepts are generated independently as
\[
    \alpha_i^* \overset{i.i.d.}{\sim} \mathcal N(0,0.3),
    \qquad i\in[n].
\]
\textbf{Case 1: balanced categorical design}

We first consider a balanced categorical design with \(K=8\) latent hyperlink types and embedding dimension \(d=3\). Before normalization, the support matrix is set as
\small{\[
\mathbf F^*
=
\begin{pmatrix}
 1.5 &  1.0 &  0.5 \\
-1.5 &  1.0 &  0.5 \\
-1.5 & -1.0 &  0.5 \\
 1.5 & -1.0 &  0.5 \\
 1.5 &  1.0 & -0.5 \\
-1.5 &  1.0 & -0.5 \\
-1.5 & -1.0 & -0.5 \\
 1.5 & -1.0 & -0.5
\end{pmatrix}.
\]}
\normalsize

We set the mixing probabilities to be uniform, i.e. \(\pi_1^*=\cdots=\pi_8^*=1/8\), and set the node embedding matrix \(\mathbf Z^*\in\mathbb R^{n\times d}\) s.t it has a block-sparse structure. Specifically, for each coordinate \(t\in[d]\), only nodes in the corresponding block
\[
    \left\{ \frac{(t-1)n}{d}+1,\ldots,\frac{tn}{d} \right\}
\]
have nonzero entries in coordinate \(t\), and these nonzero entries are independently generated from \(\mathrm{Uniform}(1,3)\). We then normalize $\bF^*,\bZ^*,\balpha^*$ to match the regularity condition \eqref{eq:cat_identifiability}.

We vary the number of hyperlinks \(m\in\{500,1000,1500,2000\}\) and the number of nodes \(n\in\{30,90,270,810\}\). Table~\ref{tab:sim1} and Figure~\ref{fig:sim1} summarize the estimation errors over 100 Monte Carlo repetitions.

\begin{table}[t]
\centering
\caption{Parameter recovery under the balanced categorical design. Values are averaged over 100 Monte Carlo repetitions.}
\label{tab:sim1}
\small
\renewcommand{\arraystretch}{1.15}
\setlength{\tabcolsep}{4pt}
\resizebox{1.0\textwidth}{!}{
\begin{tabular}{c|cccc|cccc|cccc}
\toprule
& \multicolumn{4}{c|}{$\|\hat{\bQ}-\bQ^*\|_F/\sqrt{K n}$}
& \multicolumn{4}{c|}{$\|\hat{\bF}-\bF^*\|_F/\sqrt{K d}$}
& \multicolumn{4}{c}{$\|\hat{\bZ}-\bZ^*\|_F/\sqrt{n d}$} \\
\cline{2-13}
& $n=30$ & $n=90$ & $n=270$ & $n=810$
& $n=30$ & $n=90$ & $n=270$ & $n=810$
& $n=30$ & $n=90$ & $n=270$ & $n=810$ \\
\midrule
$m=500$  & 0.057 & 0.039 & 0.037 & 0.037 & 0.297 & 0.181 & 0.156 & 0.152 & 0.347 & 0.282 & 0.269 & 0.274 \\
$m=1000$ & 0.032 & 0.026 & 0.025 & 0.025 & 0.148 & 0.097 & 0.093 & 0.091 & 0.204 & 0.175 & 0.169 & 0.171 \\
$m=1500$ & 0.024 & 0.021 & 0.020 & 0.020 & 0.107 & 0.077 & 0.075 & 0.069 & 0.150 & 0.135 & 0.133 & 0.133 \\
$m=2000$ & 0.021 & 0.018 & 0.017 & 0.017 & 0.087 & 0.066 & 0.063 & 0.059 & 0.123 & 0.113 & 0.113 & 0.113 \\
\bottomrule
\end{tabular}
}
\end{table}

\begin{figure}[t]
\centering\includegraphics[width=5in]{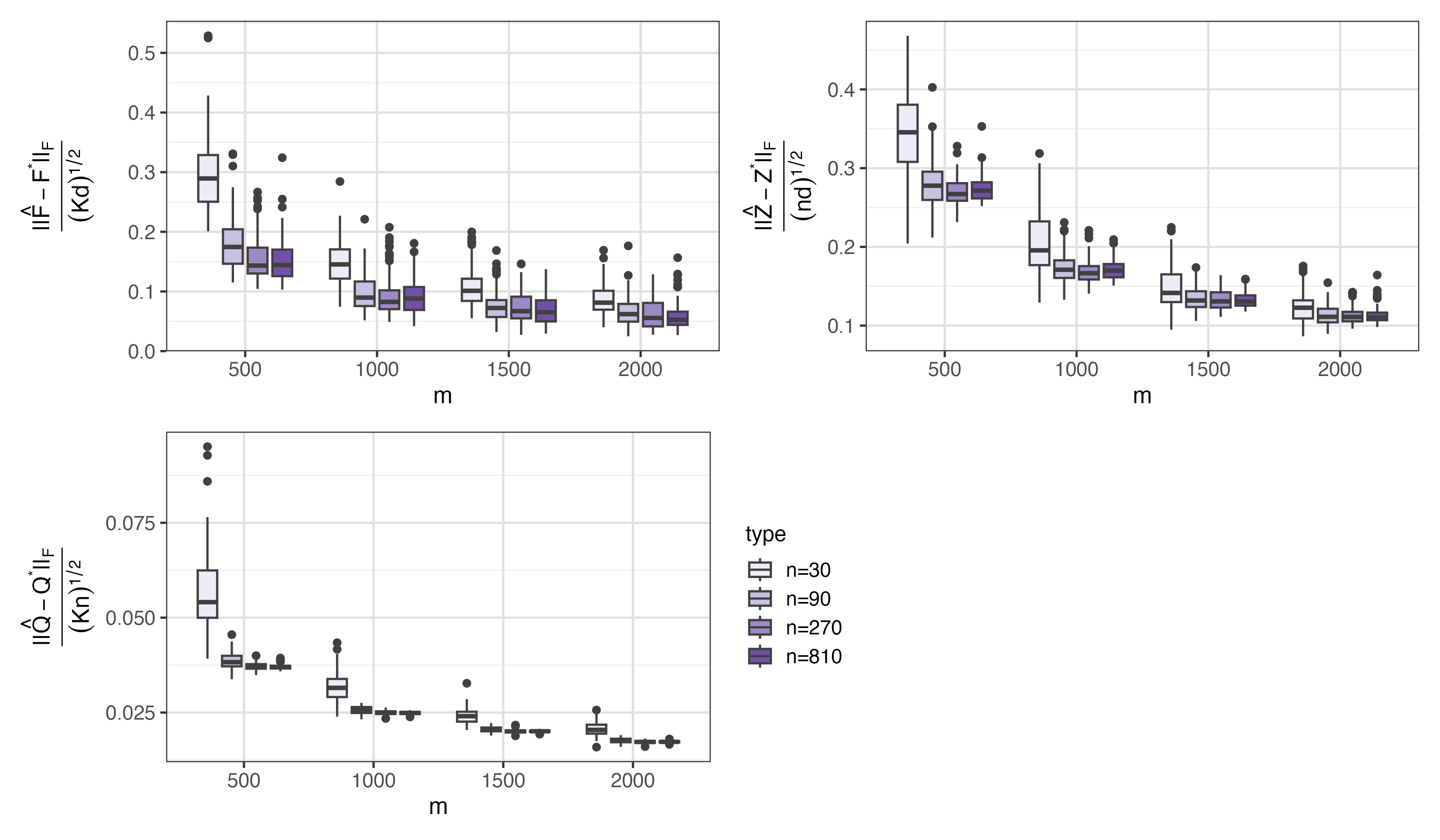}
	\caption{Monte Carlo distribution of normalized estimation errors under the balanced categorical design. The number of hyperlinks \(m\) varies along the horizontal axis, and different box colors correspond to different node sizes \(n\).}
    \label{fig:sim1}
\end{figure}

\textbf{Case 2: unbalanced categorical design}
\label{append:sim-cat-unbalanced}

We next examine the effect of unequal mixing probabilities. The unnormalized support points \(\mathbf F^*\), node embeddings \(\mathbf Z^*\), and intercepts \(\boldsymbol\alpha^*\) are generated as in the balanced design, while the mixing probabilities satisfy
\[
    \pi_1^*:\pi_2^*:\cdots:\pi_8^*
    =
    1:1:2:2:3:3:4:4.
\]
We then apply the same normalization step as in the balanced design. This setting tests whether HRL-Cat remains stable when some latent hyperlink types are substantially less frequent than others. For this case, we use the exactly the same grid of \((m,n)\) values as in the balanced design. The results are reported in Table~\ref{tab:sim2} and Figure~\ref{fig:sim2}.

\begin{table}[t]
\centering
\caption{Parameter recovery under the unbalanced categorical design. The mixing probabilities satisfy \(\pi_1^*:\cdots:\pi_8^*=1:1:2:2:3:3:4:4\). Values are averaged over 100 Monte Carlo repetitions.}
\label{tab:sim2}
\small
\renewcommand{\arraystretch}{1.15}
\setlength{\tabcolsep}{4pt}
\resizebox{1.0\textwidth}{!}{
\begin{tabular}{c|cccc|cccc|cccc}
\toprule
& \multicolumn{4}{c|}{$\|\hat{\bQ}-\bQ^*\|_F/\sqrt{K n}$}
& \multicolumn{4}{c|}{$\|\hat{\bF}-\bF^*\|_F/\sqrt{K d}$}
& \multicolumn{4}{c}{$\|\hat{\bZ}-\bZ^*\|_F/\sqrt{n d}$} \\
\cline{2-13}
& $n=30$ & $n=90$ & $n=270$ & $n=810$
& $n=30$ & $n=90$ & $n=270$ & $n=810$
& $n=30$ & $n=90$ & $n=270$ & $n=810$ \\
\midrule
$m=500$  & 0.083 & 0.054 & 0.050 & 0.049 & 0.452 & 0.267 & 0.239 & 0.234 & 0.446 & 0.351 & 0.341 & 0.341 \\
$m=1000$ & 0.048 & 0.032 & 0.031 & 0.031 & 0.241 & 0.129 & 0.120 & 0.118 & 0.293 & 0.223 & 0.217 & 0.218 \\
$m=1500$ & 0.034 & 0.025 & 0.024 & 0.024 & 0.152 & 0.093 & 0.083 & 0.080 & 0.211 & 0.168 & 0.162 & 0.163 \\
$m=2000$ & 0.027 & 0.021 & 0.020 & 0.020 & 0.115 & 0.068 & 0.064 & 0.062 & 0.166 & 0.132 & 0.132 & 0.134 \\
\bottomrule
\end{tabular}
}
\end{table}

\begin{figure}[t]
\centering\includegraphics[width=5in]{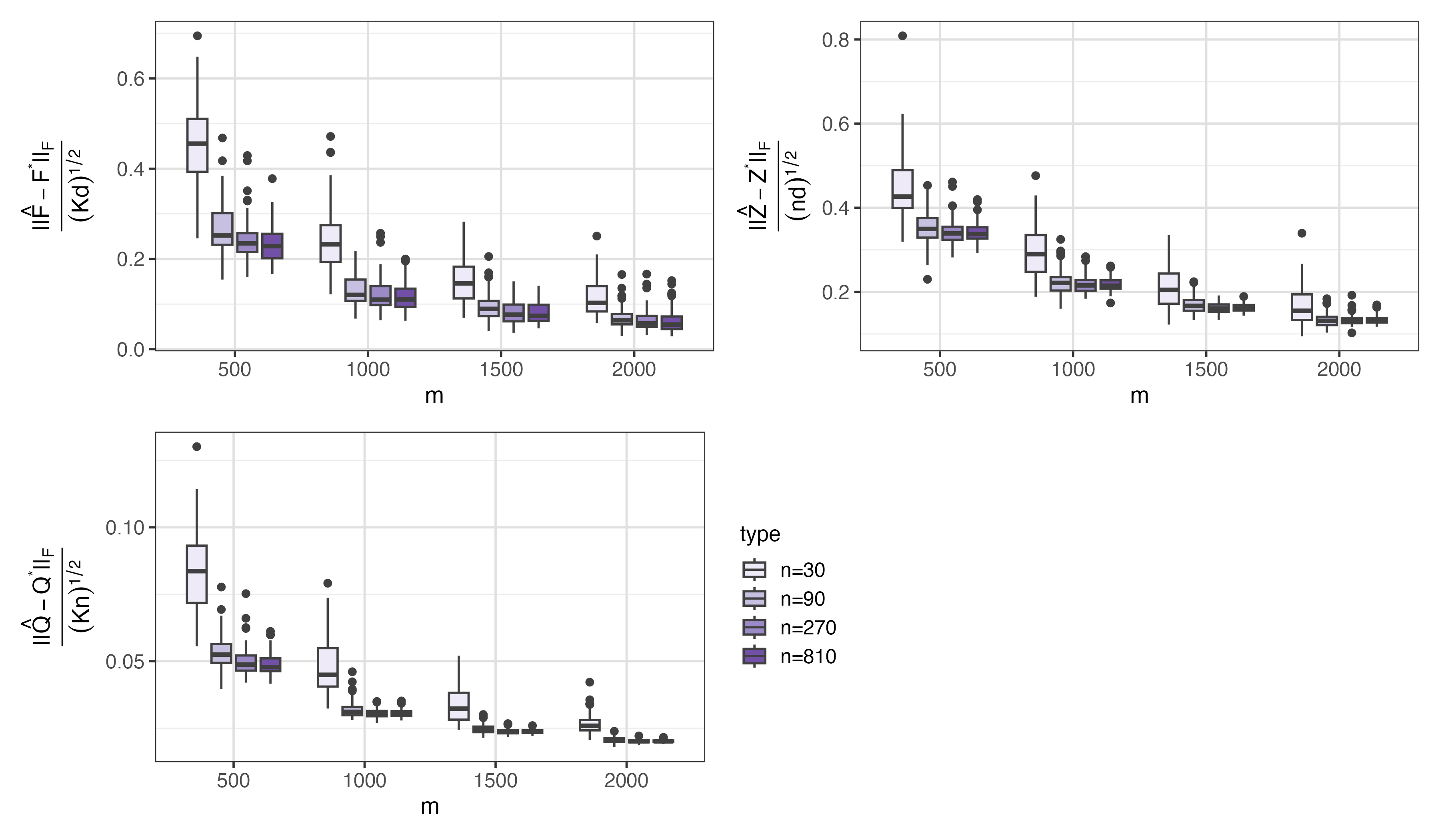}
	\caption{Monte Carlo distribution of normalized estimation errors under the unbalanced categorical design. The number of hyperlinks \(m\) varies along the horizontal axis, and different box colors correspond to different node sizes \(n\).}
    \label{fig:sim2}
\end{figure}

\textbf{Case 3: larger number of latent hyperlink types}
\label{append:sim-cat-large}

We further consider a more challenging setting with \(K=32\) latent hyperlink types and embedding dimension \(d=5\). The rows of \(\mathbf F^*\) are constructed from all \(2^5=32\) sign vectors in \(\{-1,1\}^5\), followed by coordinate-wise scaling with coefficients
\[
    (0.8,\ 0.7,\ 0.6,\ 0.5,\ 0.4).
\]
The node embedding matrix \(\mathbf Z^*\) is generated using the same block-sparse construction as above, now with \(d=5\) and the mixing probabilities are uniform over the \(32\) latent types. Then we again adopt a normalization step for these true parameters.

For this case, we vary \(m\in\{1000,2000,3000,4000\}\) and \(n\in\{50,250,500,1000\}\). Results are summarized in Table~\ref{tab:sim3}.

\begin{table}[t]
\centering
\caption{Parameter recovery under the larger categorical design with \(K=32\) latent hyperlink types and embedding dimension \(d=5\). Values are averaged over 100 Monte Carlo repetitions.}
\label{tab:sim3}
\small
\renewcommand{\arraystretch}{1.15}
\setlength{\tabcolsep}{4pt}
\resizebox{0.92\textwidth}{!}{
\begin{tabular}{c|cccc|cccc|cccc}
\toprule
& \multicolumn{4}{c|}{$\|\hat{\bQ}-\bQ^*\|_F/\sqrt{K n}$}
& \multicolumn{4}{c|}{$\|\hat{\bF}-\bF^*\|_F/\sqrt{K d}$}
& \multicolumn{4}{c}{$\|\hat{\bZ}-\bZ^*\|_F/\sqrt{n d}$} \\
\cline{2-13}
& $n=50$ & $n=250$ & $n=500$ & $n=1000$
& $n=50$ & $n=250$ & $n=500$ & $n=1000$
& $n=50$ & $n=250$ & $n=500$ & $n=1000$ \\
\midrule
$m=1000$ & 0.112 & 0.056 & 0.042 & 0.041 & 0.421 & 0.189 & 0.156 & 0.160 & 0.451 & 0.313 & 0.286 & 0.288 \\
$m=2000$ & 0.061 & 0.027 & 0.025 & 0.025 & 0.219 & 0.119 & 0.116 & 0.114 & 0.291 & 0.205 & 0.205 & 0.206 \\
$m=3000$ & 0.037 & 0.020 & 0.020 & 0.020 & 0.144 & 0.100 & 0.096 & 0.094 & 0.209 & 0.174 & 0.170 & 0.168 \\
$m=4000$ & 0.027 & 0.017 & 0.017 & 0.017 & 0.115 & 0.083 & 0.084 & 0.080 & 0.173 & 0.146 & 0.149 & 0.145 \\
\bottomrule
\end{tabular}
}
\end{table}

\paragraph{Summary of categorical simulations.}
Across all the three categorical designs, HRL-Cat accurately recovers the latent structure when the categorical random-effects model is correctly specified. For fixed node size \(n\), the errors decrease as the number of hyperlinks \(m\) increases, reflecting the increased information indeed helps for estimating the latent group distribution and node embeddings. For fixed \(m\), increasing \(n\) also improves estimation in small- and moderate-dimensional settings, although the improvement becomes less pronounced once \(n\) is sufficiently large. The unbalanced design leads to larger errors than the balanced design, as expected because rare hyperlink types are observed less frequently, but the errors still decrease steadily with \(m\). The larger design with \(K=32\) and \(d=5\) is more challenging, yet the same decreasing-error pattern remains. These results provide empirical support for the identifiability normalization and show that the categorical HRL model can recover both the type-specific incidence probabilities and the underlying node embeddings across a range of simulation regimes.

\subsubsection{Selection of the Latent Embedding Dimension}
\label{append:sim-select-d}

We next evaluate the validation procedure proposed in Appendix~\ref{append:select} for selecting the latent embedding dimension \(d\). We consider two simulation settings based on the balanced categorical and larger categorical designs. In the first setting, the true dimension is \(d^*=3\), with \(K=8\), \(m=1000\), and \(n=120\). In the second setting, the true dimension is \(d^*=5\), with \(K=32\), \(m=2000\), and \(n=500\).

For each setting and each candidate dimension, we repeatedly mask a subset of incidence entries, refit the model to the partially observed hypergraph, and evaluate the validation criterion. We repeat this procedure 50 times for each configuration. The results are shown in Figures~\ref{fig:validation-1} and~\ref{fig:validation-3}.

\begin{figure}[t]
\centering\includegraphics[width=5in]{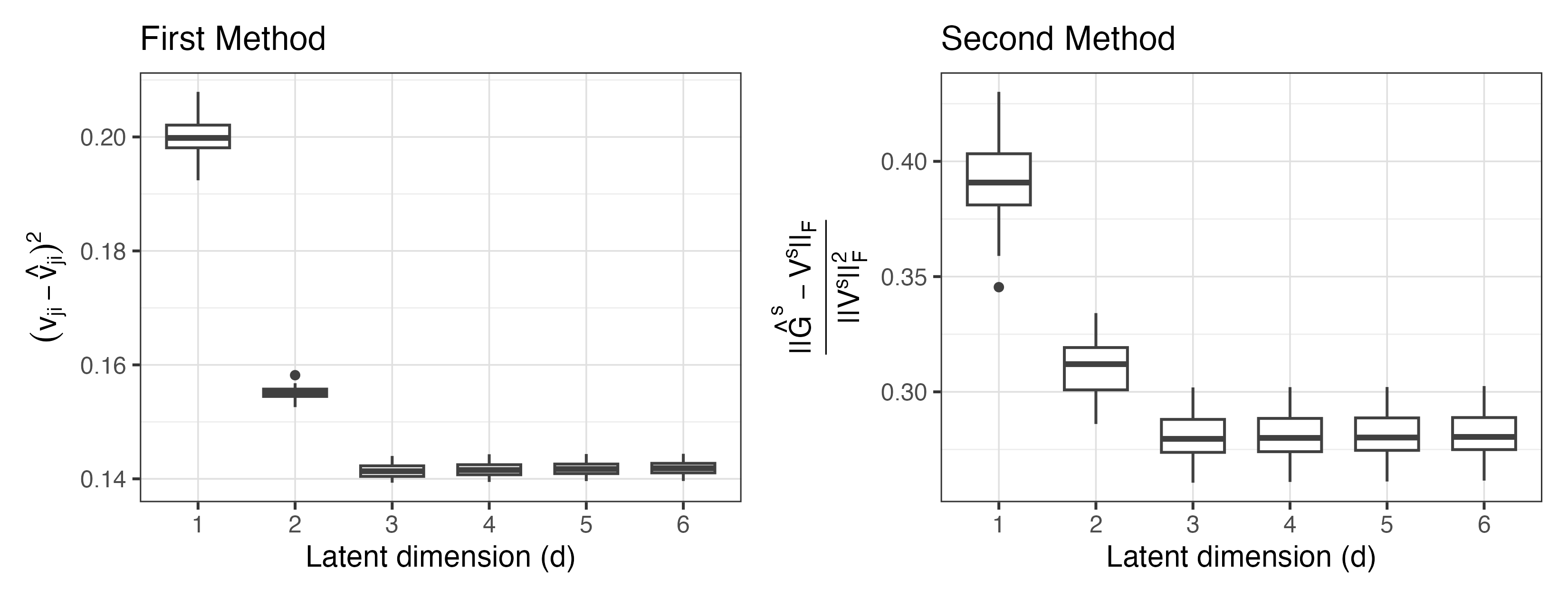}
	\caption{Validation curves for selecting the latent embedding dimension \(d\) under the balanced categorical design. The true embedding dimension is \(d^*=3\).}
    \label{fig:validation-1}
\end{figure}

\begin{figure}[t]
\centering\includegraphics[width=5in]{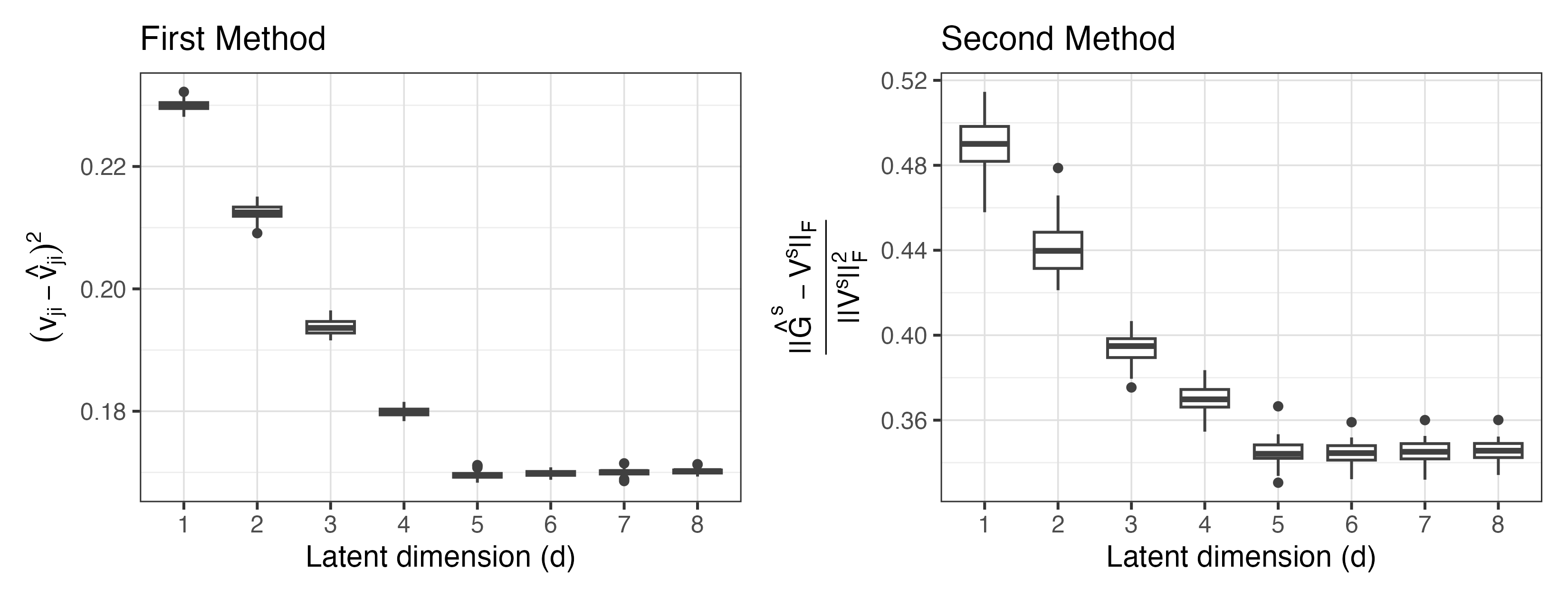}
	\caption{Validation curves for selecting the latent embedding dimension \(d\) under the larger categorical design. The true embedding dimension is \(d^*=5\)}
    \label{fig:validation-3}
\end{figure}

In both settings, the validation error decreases sharply when the candidate dimension is smaller than the true dimension, and then stabilizes once the candidate dimension reaches or exceeds the true value. This pattern suggests that underfitting the latent dimension leads to clearly worse predictive behavior, whereas overfitting the dimension produces little additional improvement. Therefore, in real-data applications, we select the smallest dimension whose validation performance is close to the best observed performance.

\subsubsection{Gaussian-mixture and Heavy-tailed Random-effect Designs}
\label{append:sim-continuous-random-effects}

We now describe the continuous random-effect simulations used in Section~\ref{sec:sim}. These simulations complement the categorical experiments by comparing the proposed methods under a well-specified Gaussian-mixture design and a misspecified heavy-tailed design. Throughout this subsection, data are generated from the inner-product incidence model
\[
    v_{ji}\mid \mathbf f_j^*,\mathbf z_i^*,\alpha_i^*
    \sim
    \mathrm{Bernoulli}
    \left[
    \sigma\!\left(
    \mathbf f_j^{*\top}\mathbf z_i^*+\alpha_i^*
    \right)
    \right],
    \qquad j\in[m],\ i\in[n],
\]
independently across node--hyperlink pairs conditional on the latent parameters.

We fix the number of nodes at \(n=180\), the latent dimension at \(d=3\), and the number of latent mixture components at \(K=8\). We vary the number of hyperlinks over
\[
    m\in\{1000,3000,5000\}.
\]
For each configuration, we generate 20 independent hypergraphs.

\paragraph{}
For each hyperlink \(j\), we first sample a latent component label
\[
    g_j\sim \mathrm{Categorical}(\boldsymbol\pi^*),
    \qquad
    \pi_1^*=\cdots=\pi_8^*=\frac{1}{8}.
\]
Thus, the eight mixture components are equally represented in expectation. These labels are used only for data generation and for evaluating latent group recovery; they are not provided to any estimation method.

\paragraph{Mixture labels and component centers.}
For each hyperlink \(j\), we sample a latent component label
\[
    g_j\sim \mathrm{Categorical}(\boldsymbol\pi^*),
    \qquad
    \pi_1^*=\cdots=\pi_8^*=\frac{1}{8}.
\]
The component centers are set as the eight vertices of a three-dimensional cube:
\[
    \boldsymbol\mu_k^*
    \in
    \{-3.5,3.5\}^3,
    \qquad k=1,\ldots,8.
\]

\paragraph{Gaussian-mixture design.}
In the Gaussian-mixture design, the hyperlink random effects are generated as
\[
    \mathbf f_j^*\mid g_j=k
    \sim
    \mathcal N(\boldsymbol\mu_k^*,\mathbf I_d),
    \qquad
    d=3.
\]
Equivalently, the population random-effect distribution is
\[
    \mathbb P_{\mathbf F}^*
    =
    \frac{1}{K}
    \sum_{k=1}^K
    \mathcal N(\boldsymbol\mu_k^*,\mathbf I_d).
\]
This setting is correctly specified for HRL-GM, since HRL-GM explicitly models the hyperlink random effects through a Gaussian mixture.

\paragraph{Heavy-tailed \(t\)-mixture random design.}
In the heavy-tailed design, we keep the same mixture labels, component centers, node embeddings, and intercepts, but replace each Gaussian component by a scaled multivariate \(t\)-distribution with \(4\) degrees of freedom. Specifically,
\[
    \mathbf f_j^*\mid g_j=k
    \sim
    t_{\nu}
    \left(
    \boldsymbol\mu_k^*,
    \frac{\nu-2}{\nu}\mathbf I_d
    \right),
    \qquad
    \nu=4.
\]
This scaling gives each component covariance \(\mathbf I_d\). Thus, the Gaussian-mixture and \(t\)-mixture designs have comparable component variances, and the main difference is the heavier tail behavior of the latter. The \(t\)-mixture setting is intentionally misspecified for HRL-GM and is used to evaluate robustness to non-Gaussian random effects.

\paragraph{Node embeddings and intercepts.}
We design the true node embedding matrix
\[
    \mathbf Z^*
    =
    (\mathbf z_1^*,\ldots,\mathbf z_n^*)^\top
    \in\mathbb R^{n\times d}
\]
in a way s.t. it has a block-sparse structure. We divide the nodes into three equal blocks. For nodes in the first block, only the first coordinate is nonzero:
\[
    z_{i1}^*
    \overset{i.i.d.}{\sim}
    \mathrm{Uniform}\!\left(\frac{3}{7},\frac{4}{7}\right),
    \qquad
    i=1,\ldots,\frac{n}{3}.
\]
For nodes in the second block, only the second coordinate is nonzero:
\[
    z_{i2}^*
    \overset{i.i.d.}{\sim}
    \mathrm{Uniform}\!\left(\frac{2}{7},\frac{3}{7}\right),
    \qquad
    i=\frac{n}{3}+1,\ldots,\frac{2n}{3}.
\]
For nodes in the third block, only the third coordinate is nonzero:
\[
    z_{i3}^*
    \overset{i.i.d.}{\sim}
    \mathrm{Uniform}\!\left(\frac{1}{7},\frac{2}{7}\right),
    \qquad
    i=\frac{2n}{3}+1,\ldots,n.
\]
All remaining entries of \(\mathbf Z^*\) are set to zero. The node intercepts are generated independently as
\[
    \alpha_i^*
    \overset{i.i.d.}{\sim}
    \mathcal N(0,0.3),
    \qquad
    i=1,\ldots,n.
\]

\paragraph{Hypergraph generation.}
Given \(\{\mathbf f_j^*\}_{j=1}^m\), \(\{\mathbf z_i^*\}_{i=1}^n\), and \(\{\alpha_i^*\}_{i=1}^n\), we generate the incidence matrix
\[
    \mathbf V=(v_{ji})\in\{0,1\}^{m\times n}
\]
entrywise according to
\[
    v_{ji}
    \sim
    \mathrm{Bernoulli}
    \left[
    \sigma
    \left(
    \mathbf f_j^{*\top}\mathbf z_i^*+\alpha_i^*
    \right)
    \right].
\]

\paragraph{HRL-Score implementation.}
For HRL-Score, hyperparameters are selected using a validation missing-value prediction task described in Appendix~\ref{append:missing-value-implementation}. Specifically, we mask \(20\%\) of incidence entries for validation, train on the remaining \(80\%\), and select the configuration with the highest validation AUPRC. We tune the number of pretraining steps in \(\{100,200,400\}\), the hidden dimension in \(\{128,256\}\), and the learning rate in \(\{5\times 10^{-4},10^{-3},2\times 10^{-3}\}\). We fix \(\beta_0=0.1\), \(\beta_1=20\), \(\sigma_0=0\), and batch size \(B=256\). Each model is trained for 700 epochs, and final estimates are computed using the median over checkpoints from epochs 600--700.

\paragraph{Normalization and comparison.}
Because the inner-product model is identifiable only up to the normalization constraints described in Section~\ref{sec:model}, the raw generating parameters and raw fitted parameters are not directly comparable. Therefore, before computing estimation errors, we normalize both the true parameters and the estimated parameters using the normalization step corresponding to each fitted algorithm. For HRL-Cat, HRL-GM, and HRL-Score, we follow the corresponding final normalization steps in Algorithms~\ref{alg:EM}, \ref{alg:MGVEM}, and~\ref{alg:HRLScore}. For HRL-Fixed and for the true generating parameters, we use the same moment-normalization principle as in HRL-Score, replacing the estimated random-effect mean and covariance by their empirical or population counterparts. This normalization preserves the linear predictors
\[
    \mathbf f_j^\top\mathbf z_i+\alpha_i
\]
while placing all parameters in a comparable identifiable scale.

Node-embedding recovery is then evaluated using the normalized Frobenius error $\mathrm{Err}_{Z}=\frac{
    \|\widehat{\mathbf Z}-\mathbf Z^*\|_F
    }{
    \sqrt{nd}
    }$, which gives Figure~\ref{fig:sim_combine} shown in the main text. Additionally, we also report the recovery of $\balpha$ through $\|\hat\balpha-\balpha^*\|_2/\sqrt{n}$, shown in Figure~\ref{fig:alpha}.

\begin{figure}[t]
\centering\includegraphics[width=5in]{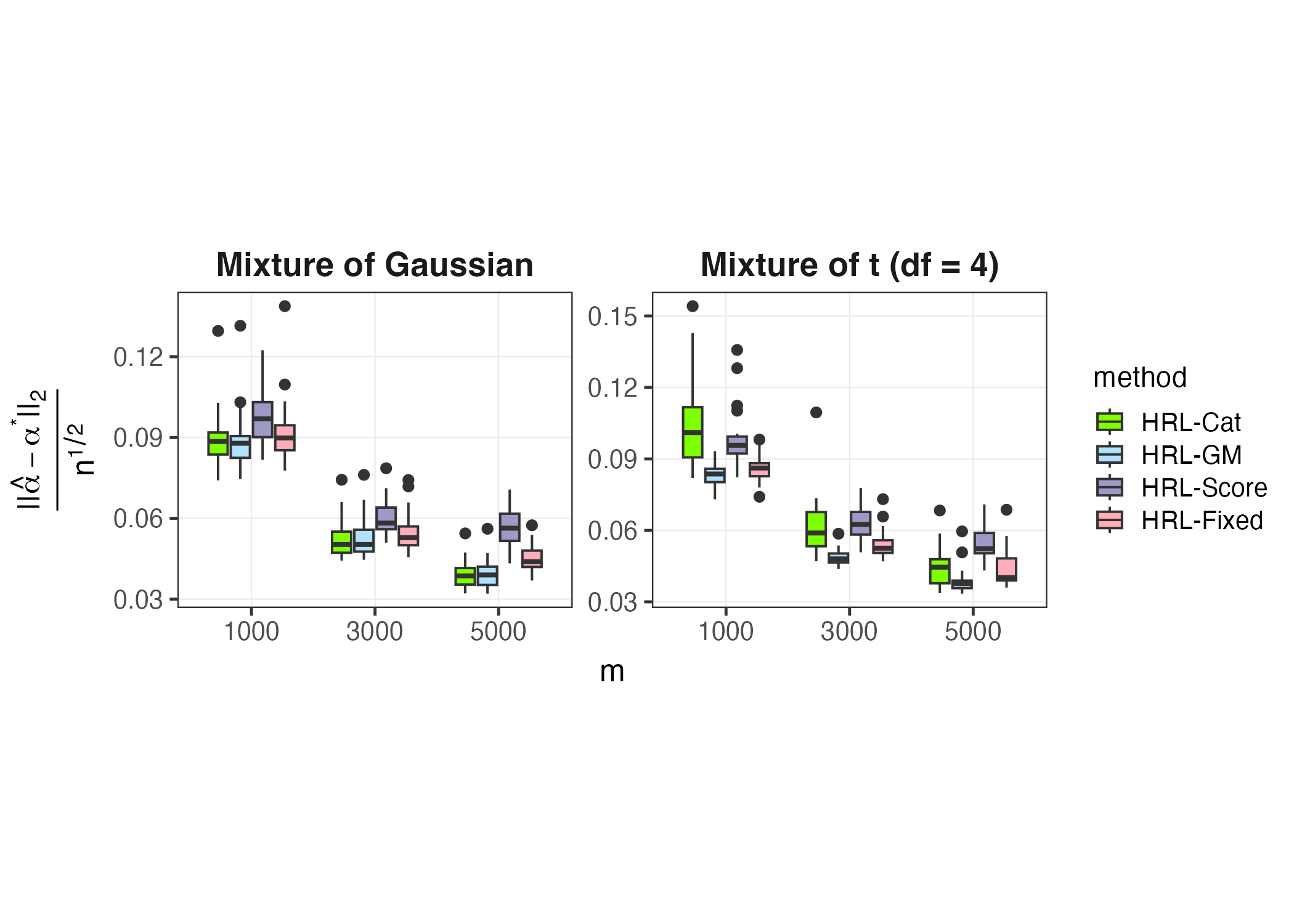}
	\caption{Recovery of node degree parameters $\balpha$ under Gaussian-mixture (left) and t-mixture (right), with $n=180$.}
    \label{fig:alpha}
\end{figure}
We also evaluate latent group recovery using adjusted Rand index (ARI) and normalized mutual information (NMI). For HRL-Cat and HRL-GM, the estimated label of hyperlink \(j\) is obtained by the largest posterior membership probability, using \(\widehat\Delta_{jk}^{(T)}\) and \(\widehat r_{jk}^{(T)}\), respectively. For HRL-Score, we draw \(L=4096\) prior particles from the learned distribution \(p_{\boldsymbol\theta}(\mathbf f)\), compute the approximate posterior mean \(\widehat{\mathbf m}_j\) defined in \eqref{eq:score-estep}, and apply \(K\)-means clustering to \(\{\widehat{\mathbf m}_j\}_{j=1}^m\) with \(K=8\). For HRL-Fixed, we apply \(K\)-means clustering to the estimated hyperlink-specific effects \(\{\widehat{\mathbf f}_j\}_{j=1}^m\), again with \(K=8\). The ARI and NMI values reported in the main text are averaged over 20 repetitions, we report their standard errors in Tables~\ref{tab:sim-cluster-GM} and~\ref{tab:sim-cluster-t}.

\begin{table}[t]
\centering
\small
\caption{Clustering performance for Gaussian-mixture design, mean(standard error), based on 20 repetitions}
\label{tab:sim-cluster-GM}
\setlength{\tabcolsep}{3.6pt}
\begin{tabular}{lcccccc}
\toprule
& \multicolumn{6}{c}{Gaussian mixture} \\
\cmidrule(lr){2-7}
Method
& \multicolumn{2}{c}{1000}
& \multicolumn{2}{c}{3000}
& \multicolumn{2}{c}{5000} \\
\cmidrule(lr){2-3} \cmidrule(lr){4-5} \cmidrule(lr){6-7}
& ARI$\uparrow$ & NMI$\uparrow$
& ARI$\uparrow$ & NMI$\uparrow$
& ARI$\uparrow$ & NMI$\uparrow$ \\
\midrule
HRL-Cat
& 0.958(0.0027) & 0.956(0.0026)
& 0.961(0.0014) & 0.955(0.0014)
& 0.963(0.0010) & 0.956(0.0009) \\

HRL-GM
& 0.962(0.0025) & 0.959(0.0025)
& \underline{0.962(0.0013)} & \underline{0.956(0.0013)}
& 0.964(0.0009) & \underline{0.956(0.0009)} \\

HRL-Score
& \textbf{0.964(0.0015)} & 0.960(0.0015)
& \textbf{0.965(0.0012)} & \textbf{0.959(0.0013)}
& \textbf{0.966(0.0007)} & \textbf{0.959(0.0008)} \\

HRL-Fixed
& \underline{0.962(0.0024)} & \textbf{0.960(0.0024)}
& 0.962(0.0013) & 0.956(0.0013)
& \underline{0.964(0.0010)} & \underline{0.956(0.0009)} \\
\bottomrule
\end{tabular}
\end{table}

\begin{table}[t]
\centering
\small
\caption{Clustering performance for t-mixture design, mean(standard error), based on 20 repetitions}
\label{tab:sim-cluster-t}
\setlength{\tabcolsep}{3.6pt}
\begin{tabular}{lcccccc}
\toprule
& \multicolumn{6}{c}{$t$-mixture, df = 4} \\
\cmidrule(lr){2-7}
Method
& \multicolumn{2}{c}{1000}
& \multicolumn{2}{c}{3000}
& \multicolumn{2}{c}{5000} \\
\cmidrule(lr){2-3} \cmidrule(lr){4-5} \cmidrule(lr){6-7}
& ARI$\uparrow$ & NMI$\uparrow$
& ARI$\uparrow$ & NMI$\uparrow$
& ARI$\uparrow$ & NMI$\uparrow$ \\
\midrule
HRL-Cat
& 0.343(0.0135) & 0.515(0.0105)
& 0.322(0.0157) & 0.488(0.0146)
& 0.357(0.0117) & 0.518(0.0101) \\

HRL-GM
& \underline{0.343(0.0135)} & \underline{0.517(0.0103)}
& \underline{0.344(0.0195)} & \underline{0.510(0.0111)}
& \underline{0.358(0.0118)} & \underline{0.519(0.0101)} \\

HRL-Score
& \textbf{0.948(0.0028)} & \textbf{0.942(0.0030)}
& \textbf{0.954(0.0013)} & \textbf{0.943(0.0014)}
& \textbf{0.951(0.0010)} & \textbf{0.939(0.0012)} \\

HRL-Fixed
& 0.338(0.0136) & 0.501(0.0123)
& 0.314(0.0171) & 0.471(0.0164)
& 0.351(0.0130) & 0.504(0.0117) \\
\bottomrule
\end{tabular}
\end{table}

\subsection{Systematic Stability Study for HRL-Score}\label{app:stability}
In this subsection, we use the heavy-tailed $t_4$-mixture design in Appendix~\ref{append:sim-continuous-random-effects} to study how sensitivity HRL-Score is to initialization, Monte Carlo sample size and score-network hyperparameters. Throughout this subsection, we fix $m=1000$, $n=360$, and use five repetitions for all experiments. The sensitivity study is conducted in a one-factor-at-a-time way, with the baseline having 512 Monte Carlo particles, hidden width $H=128$, learning rate $2\times10^{-3}$, and 400 pretraining steps. Unless explicitly noted below, every fit uses $1000$ training epochs. We report $
\mathrm{Err}_Z=\frac{\|\widehat Z-Z\|_F}{\sqrt{nd}}$,$
\mathrm{Err}_F=\frac{\|\widehat F-F\|_F}{\sqrt{md}}$, ARI and NMI as measurements of parameter recovery results, and report running time as an additional criteria.
\paragraph{Initialization.}
We compare the USVT--GMM initialization proposed in Appendix~\ref{sec:initialization}, the same USVT $(Z^{(0)},\alpha^{(0)})$ with an untrained $N(0,I_d)$ prior, and a fully random $Z^{(0)}$ with degree-based $\alpha^{(0)}$ and an $N(0,I_d)$ prior. The results are shown in Table \ref{tab:score-init}.

\begin{table}[h]
\centering
\scriptsize
\caption{HRL-Score{} sensitivity to initialization. Values are mean (standard error) over five repetitions.}
\label{tab:score-init}
\begin{tabularx}{\textwidth}{Xccccc}
\toprule
Initialization & $\mathrm{Err}_Z\downarrow$ & $\mathrm{Err}_F\downarrow$ & ARI$\uparrow$ & NMI$\uparrow$ & Runtime (s) \\
\midrule
USVT--GMM
& \mc{0.1123}{0.0077} & \mc{0.1384}{0.0079}
& \mc{0.9591}{0.0015} & \mc{0.9529}{0.0020}
& \mc{164.9}{4.5} \\
USVT $(Z,\alpha)$ + $N(0,I_d)$ prior
& \mc{0.1124}{0.0083} & \mc{0.1288}{0.0056}
& \mc{0.9560}{0.0014} & \mc{0.9508}{0.0014}
& \mc{152.7}{3.9} \\
Random $Z$ + degree $\alpha$ + $N(0,I_d)$ prior
& \mc{0.7070}{0.1839} & \mc{0.5733}{0.1354}
& \mc{0.8450}{0.0796} & \mc{0.8745}{0.0515}
& \mc{157.6}{4.5} \\
\bottomrule
\end{tabularx}
\end{table}
The initialization sensitivity study shows that the structured USVT initialization of $(Z,\alpha)$ is the important component. Replace only the pretrained prior by $N(0,I_d)$ changes little, whereas fully random $Z$ produces substantially larger errors, lower clustering accuracy, and much higher variability. The epoch traces also show that the fully random run becomes unstable after roughly the middle of training, while both USVT variants settle to stable low-error trajectories. This shows that our proposed initialization method for $(Z,\alpha)$ is indeed quite useful and important.
\paragraph{Monte Carlo sample size.} We range Monte Carlo sample size in $B\in\{64,128,256,512,1024\}$. As shown in Table \ref{tab:score-particles}, in this design, there is a clear stability threshold in this design: $64$--$256$ particles yield large errors and unstable traces, whereas $512$ and $1024$ are stable. Increasing from $512$ to $1024$ gives only a modest improvement in ARI/NMI and latent recovery while increasing runtime by about $50\%$. We therefore retain $512$ as the default.

\begin{table}[tbp]
\centering
\scriptsize
\caption{HRL-Score{} sensitivity to the number of Monte Carlo particles.}
\label{tab:score-particles}
\begin{tabular}{@{}rccccc@{}}
\toprule
Particles & $\mathrm{Err}_Z\downarrow$ & $\mathrm{Err}_F\downarrow$ & ARI$\uparrow$ & NMI$\uparrow$ & Runtime (s) \\
\midrule
64   & \mc{0.8559}{0.1847} & \mc{0.7584}{0.0263} & \mc{0.5607}{0.0233} & \mc{0.6788}{0.0202} & \mc{94.7}{0.9} \\
128  & \mc{0.7264}{0.3129} & \mc{0.4787}{0.0904} & \mc{0.8539}{0.0285} & \mc{0.8803}{0.0171} & \mc{106.6}{2.0} \\
256  & \mc{0.8919}{0.7597} & \mc{0.3196}{0.1329} & \mc{0.8440}{0.0825} & \mc{0.8708}{0.0584} & \mc{121.0}{2.1} \\
512  & \mc{0.1123}{0.0077} & \mc{0.1384}{0.0079} & \mc{0.9591}{0.0015} & \mc{0.9529}{0.0020} & \mc{164.9}{4.5} \\
1024 & \mc{0.1087}{0.0065} & \mc{0.1108}{0.0070} & \mc{0.9645}{0.0014} & \mc{0.9587}{0.0013} & \mc{248.0}{2.4} \\
\bottomrule
\end{tabular}
\end{table}
\paragraph{Score-network hyperparameters.}
We assess the sensitivity of HRL-Score to pretraining steps, hidden width,
and learning rate. The training trajectories of $\mathrm{Err}_Z$ indicate
that narrower networks require more epochs to stabilize. Accordingly, we
extend training beyond $1000$ epochs for $H=32$ and $H=64$, whereas
$H=128$ stabilizes within $1000$ epochs. Table~\ref{tab:score-hyper}
summarizes the results.

\begin{table}[tbp]
\centering
\scriptsize
\setlength{\tabcolsep}{3.0pt}
\caption{HRL-Score{} sensitivity to pretraining, hidden width, and learning rate. Widths $H=64$ and $H=32$ were extended beyond $1000$ epochs. All other settings use $1000$ epochs.}
\label{tab:score-hyper}
\begin{tabular}{@{}llccccc@{}}
\toprule
Factor & Setting / evaluation window
& $\mathrm{Err}_Z\downarrow$ & $\mathrm{Err}_F\downarrow$
& ARI$\uparrow$ & NMI$\uparrow$ & Runtime (s) \\
\midrule
Pretrain & 200
& \mc{0.1284}{0.0206} & \mc{0.1899}{0.0597}
& \mc{0.9570}{0.0015} & \mc{0.9511}{0.0017} & \mc{165.8}{1.8} \\
& 400
& \mc{0.1123}{0.0077} & \mc{0.1384}{0.0079}
& \mc{0.9591}{0.0015} & \mc{0.9529}{0.0020} & \mc{164.9}{4.5} \\
& 600
& \mc{0.1124}{0.0076} & \mc{0.1358}{0.0071}
& \mc{0.9562}{0.0022} & \mc{0.9516}{0.0017} & \mc{171.7}{2.0} \\
& 800
& \mc{0.1201}{0.0077} & \mc{0.1569}{0.0249}
& \mc{0.9582}{0.0013} & \mc{0.9526}{0.0015} & \mc{163.3}{6.1} \\
\midrule
Width & $H=32$ (epochs 1200--1400)
& \mc{0.1178}{0.0069} & \mc{0.1397}{0.0100}
& \mc{0.9596}{0.0019} & \mc{0.9546}{0.0019} & \mc{160.9}{4.2} \\
& $H=64$ (epochs 1000--1200)
& \mc{0.1075}{0.0045} & \mc{0.1242}{0.0051}
& \mc{0.9592}{0.0021} & \mc{0.9538}{0.0022} & \mc{156.8}{2.8} \\
& $H=128$ (epochs 800--1000)
& \mc{0.1123}{0.0077} & \mc{0.1384}{0.0079}
& \mc{0.9591}{0.0015} & \mc{0.9529}{0.0020} & \mc{164.9}{4.5} \\
\midrule
Learning rate & $5\times10^{-4}$
& \mc{0.1706}{0.0145} & \mc{0.2519}{0.0455}
& \mc{0.9574}{0.0018} & \mc{0.9516}{0.0023} & \mc{161.4}{4.6} \\
& $10^{-3}$
& \mc{0.1200}{0.0067} & \mc{0.1397}{0.0047}
& \mc{0.9560}{0.0031} & \mc{0.9511}{0.0027} & \mc{150.2}{2.1} \\
& $2\times10^{-3}$
& \mc{0.1123}{0.0077} & \mc{0.1384}{0.0079}
& \mc{0.9591}{0.0015} & \mc{0.9529}{0.0020} & \mc{164.9}{4.5} \\
\bottomrule
\end{tabular}
\end{table}

We find that the final performance is insensitive to $200$--$800$ pretraining steps. Widths $32$--$128$ also reach essentially the same ARI/NMI, although narrower networks need more epochs to settle; $H=64$ gives the lowest average latent errors. Learning rates $10^{-3}$ and $2\times10^{-3}$ are both stable, while $5\times10^{-4}$ converges more slowly and gives larger latent errors. Together with the particle study, these results indicate that HRL-Score{} is robust to moderate architecture and pretraining changes once it has a structured initialization and a sufficiently large Monte Carlo sample.

\subsection{Additional Real Data Analysis Results}\label{sec:real-data-details}
This section supplements the real-data experiments in Section~\ref{sec:real-data}. We provide task-specific protocols, implementation details, extended empirical results, and additional baseline comparisons. 

Across all real-data experiments, the latent dimension \(d\) is selected using the procedure in Appendix~\ref{append:select}. For HRL methods, the number of latent components \(K\), when applicable, is selected jointly with \(d\). After selection, \(d\) is fixed for each dataset and used for all methods so that downstream comparisons are based on representations of the same dimension. For non-HRL baselines, \(d\) refers to the dimension of the final learned node embedding or the output dimension of the final representation layer. 

\subsubsection{Implementation Details for the Row-disjoint Cold-start Experiment}
\label{append:missing-value-implementation}

Let \(\bV=(v_{ji})\in\{0,1\}^{m\times n}\) denote the incidence matrix,
where \(v_{ji}=1\) if node \(i\) belongs to hyperlink \(j\).  The aim of
this experiment is to predict the existence of unobserved nodes of previously unseen
hyperlinks, rather than to impute entries of hyperlinks that were already used for model fitting. For each of five repetitions, we
randomly partition the hyperlink indices into
disjoint training, validation, and test sets,
\(\mathcal J_{\mathrm{tr}}\), \(\mathcal J_{\mathrm{val}}\), and
\(\mathcal J_{\mathrm{te}}\), containing \(70\%\), \(10\%\), and \(20\%\)
of the rows, respectively.  All entries of the training hyperlinks are
observed. For every validation or test hyperlink \(j\), we reveal a fraction
\(r_{\mathrm{obs}}\) of its coordinates and predict the remainder, and denote
\(\mathcal O_j\), \(\mathcal H_j=[n]\setminus\mathcal O_j\) for the
revealed and hidden coordinates of hyperlink \(j\), respectively.  The revealed entries are stratified by the binary incidence value. Within a
repetition, every method uses exactly the same row split and the same revealed
and hidden coordinates.

\paragraph{HRL-family methods.}
HRL-Cat, HRL-GM, HRL-Score, and HRL-Fixed are fitted to the complete submatrix
\(\{v_{ji}:j\in\mathcal J_{\mathrm{tr}},i\in[n]\}\).  At validation and test
time, their global parameters are frozen, and a new hyperlink is inferred
using only \(\{v_{ji}:i\in\mathcal O_j\}\). We compute the predicted probability \(\widehat p_{ji}\) for each held-out entry using the posterior predictive procedure described in Appendix~\ref{append:select}, calculating \eqref{eq:prediction}. For HRL-GM, the Gaussian--Hermite approximation uses 20 quadrature nodes and weights. For HRL-Score, hyperparameters are selected according to validation AUPRC, and the selected configuration is then refit on the non-test hyperlinks before evaluating on test hyperlinks.

For HRL-Fixed, the factors learned for the training hyperlinks cannot be used
for a new validation or test hyperlink.  We therefore freeze the fitted global
quantities \(\{\widehat{\bz}_i,\widehat\alpha_i\}_{i=1}^n\) and estimate a new
local factor by maximizing the partial Bernoulli log-likelihood
\[
  \widehat{\bff}_j
  =\arg\max_{\bff\in\mathbb R^K}
  \sum_{i\in\mathcal O_j}
  \left[
    v_{ji}(\bff^\top\widehat{\bz}_i+\widehat\alpha_i)
    -\log\{1+\exp(\bff^\top\widehat{\bz}_i+\widehat\alpha_i)\}
  \right].
\]
We initialize this optimization at the fitted training-hyperlink factor having
the largest revealed-coordinate likelihood and use damped Newton updates with
backtracking.  The final probability is
\[
  \widehat p_{ji}
  =\sigma\!\left(
      \widehat{\bff}_j^\top\widehat{\bz}_i+\widehat\alpha_i
    \right),
  \qquad i\in\mathcal H_j.
\]

\paragraph{Cold-start adaptation of non-HRL baselines.} Here we distinguish models containing a
training-row-specific latent factor (i.e. HRL-Fixed like) from shared encoders that can directly process a partially observed new hyperlink.

NNMF and NeuMF contain two hyperlink-specific \(K\)-dimensional embedding
tables, whereas LightGCN contains one.  For each new validation or test
hyperlink, we freeze all node/item embeddings and network parameters and
optimize only the corresponding new local hyperlink vector(s) on the revealed
coordinates. The resulting model logit for a hidden coordinate is used as its raw score.
MultiVAE has a shared row encoder and therefore requires no local optimization. The revealed positive coordinates are supplied as a partial incidence vector,
and the corresponding decoder logits are used for the hidden coordinates.

EHGNN, AllSet, HypeBoy, and GraphSAGE are adapted as shared inductive encoders.
Only revealed positive incidences are inserted into the partial hypergraph, while revealed zeros are treated as observed nonedges. For these four methods, node and hyperlink representations are connected by a
bilinear incidence decoder.  For a training pair, its logit has the form
\[
  s_{ji}=\bff_j^\top\bW\bz_i+a_i+b_j+c.
\]
The encoder and decoder are trained jointly on the complete training incidence
matrix using binary cross-entropy with logits.  During validation/testing, since a held-out hyperlink has no learned training-row bias, we set it to the mean value of learned training hyperlink bias. 

\paragraph{Hyperparameter selection and probability calibration.}
For every hyperparameter candidate configuration, we train the model on $\cJ_{\text{train}}$ and compute AUPRC following previously described procedure using validation hyperlinks. We then select hyerparameter based on validation AUPRC. After hyperparameter selection and final training, each non-HRL method produces
an uncalibrated validation raw score \(s_{ji}\).  We then fit a one-dimensional
Platt model using only the hidden validation coordinates,
\[
  (\widehat\gamma_0,\widehat\gamma_1)
  =\arg\min_{\gamma_0,\gamma_1}
  \sum_{j\in\mathcal J_{\mathrm{val}}}
  \sum_{i\in\mathcal H_j}
  \mathrm{BCE}\!\left(
    \sigma(\gamma_0+\gamma_1s_{ji}),v_{ji}
  \right).
\]
Then we apply the same fitted calibration map to the hidden test logits,
\[
  \widehat p_{ji}
  =\sigma(\widehat\gamma_0+\widehat\gamma_1s_{ji}),
  \qquad
  j\in\mathcal J_{\mathrm{te}},\ i\in\mathcal H_j.
\]
This calibration step places the raw outputs of different non-HRL baselines on a common probability scale, which is needed for BCE and Brier-score comparisons.

\paragraph{Capacity-matched score models.}
In order to separate the effect of hyperlink random effects from model capacity and inference complexity, in the all-dataset comparison, we deliberately choose the width of HRL-Score so that the total number of parameters contained in HRL-Score is smaller than that in HRL-Fixed. The widths are $H=26$ (MIMIC-III), $H=19$ (Food Recipe), and $H=31$ (MovieLens). Their full retained counts are 6,039, 3,776, and 8,078, respectively, compared with HRL-Fixed's 6,125, 3,837, and 8,183. 

\paragraph{Evaluation and computational measurements.}
For every method, repetition, and reveal fraction, we compute AUROC, AUPRC,
binary cross-entropy, and Brier score over the hidden test coordinates only.
We report means and standard errors across the five repetitions.  

\subsubsection{Node Clustering}\label{append:task-node} 
We evaluate whether the learned node embeddings capture meaningful semantic structure by applying clustering to the fitted representations. For each dataset, we first select \((K,d)\) using the cross-validation procedure described in~\ref{append:select}. For HRL-Cat, HRL-GM, and HRL-Fixed, no additional hyperparameters are required once \((K,d)\) is fixed. We therefore refit these methods on the full hypergraph and use the resulting node embeddings for clustering.

For HRL-Score and the non-HRL baselines, we fix the final embedding dimension to be \(d\) for comparability. Remaining hyperparameters are optimized via a missing-value prediction task (described in Appendix~\ref{append:missing-value-implementation}), using a consistent held-out validation set. We select the configuration yielding the lowest validation Binary Cross-Entropy (BCE) before refitting the models on the full hypergraph.

For methods that naturally output node embeddings, we use these embeddings directly. For NeuMF, which contains both GMF and MLP components, we concatenate the two node-side representations. For methods that return both node embeddings \(\widehat{\bZ}\) and node intercepts \(\widehat{\boldsymbol\alpha}\), such as the HRL methods, we consider several node representations,
\[
    \widehat{\bZ},\qquad
    [\widehat{\bZ},\widehat{\boldsymbol\alpha}],\qquad
    \mathrm{scale}(\widehat{\bZ}),\qquad
    [\mathrm{scale}(\widehat{\bZ}),\mathrm{scale}(\widehat{\boldsymbol\alpha})],
\]
and report the representation that gives the most coherent clustering behavior. MultiVAE is excluded from the node-clustering comparison since it does not naturally provide an explicit node-side embedding.

Given the final node embeddings, we apply both Gaussian mixture model (GMM) clustering and \(K\)-means clustering. For GMM, we use the posterior membership probabilities to identify high-confidence representatives and we define the core members of cluster \(g\) are defined as
\[
    \mathcal C_g^{\mathrm{core}}
    =
    \{i:\Pr(g_i=g\mid \widehat{\bz}_i)\geq 0.9\}.
\]
For \(K\)-means, since posterior probabilities are not available, we define the center members of cluster \(g\) as the nodes assigned to \(g\) that are closest to its centroid in Euclidean distance. These nodes serve as representative examples of the cluster. 

We conduct clustering experiments on both food-recipe and medical-record hypergraphs. Since both datasets do not provide ground-truth node labels, we evaluate clustering performance qualitatively through carefully checking the semantic coherence of the resulting core or center members.

\subsubsection{Medical Record Hypergraph Built from MIMIC-III}\label{sec:mimic}
As discussed in the main text, a motivating real-world application of our framework is the analysis of heterogeneous patient profiles, where multiple clinical conditions, complications, and care-related events may co-occur within the same patient record. To evaluate whether the proposed methods can recover clinically meaningful structures, we consider the Medical Information Mart for Intensive Care (MIMIC-III; \citep{johnson2016mimic}), which contains clinical data for over 45,000 patients admitted to Beth Israel Deaconess Medical Center between 2001 and 2012.

We construct a hypergraph using International Classification of Diseases, Ninth Revision (ICD-9) codes recorded in MIMIC-III. ICD-9 codes are standardized clinical annotations associated with patient records. They are heterogeneous by design. In addition to disease diagnoses, ICD-9 also encode complications, procedure or device status, prior medical history, external causes, and care-related conditions. Consequently, the clusters identified in the following analysis should be interpreted as clinical phenotypes rather than isolated disease categories.

The hypergraph construction and preprocessing follow a systematic filter for clinical richness. For pre-processing, we first select the 300 most frequently recorded ICD-9 codes in the dataset. To focus on clinically rich patient profiles, we then retain the 1000 patient having largest numbers of ICD-9 records. After restricting to these 1000 patients, we further remove selected ICD-9 codes that do not appear in this subset, resulting in a final hypergraph with 1000 patients and 275 distinct ICD-9 codes. In this hypergraph, each node represents an ICD-9 clinical-code concept, and each hyperlink corresponds to a patient profile, connecting all ICD-9 codes recorded for that patient. For the constructed medical-record hypergraph, the cross-validation procedure suggests \(K=7\) and \(d=6\).

\textbf{Row-distinct cold-start experiment}

Here we first report the mean value and standard errors of measurements used in the cold-start experiment in Table \ref{tab:mimic_III}, the results are based on 5 repetitions.

\begin{table}[tbp]
\centering
\small
\setlength{\tabcolsep}{4pt}
\caption{MIMIC-III: cold-start prediction at $r_{\rm obs}=0.2$. Values are mean (standard error) across five repetitions.}
\label{tab:mimic_III}
\begin{tabular}{lcccc}
\toprule
Method & AUROC $\uparrow$ & AUPRC $\uparrow$ & BCE $\downarrow$ & Brier $\downarrow$ \\
\midrule
HRL-Cat & 0.771 (0.0022) & 0.396 (0.0041) & 0.334 (0.0020) & 0.097 (0.0006) \\
HRL-GM & \underline{0.795} (0.0014) & \textbf{0.437} (0.0038) & \underline{0.315} (0.0011) & \textbf{0.093} (0.0004) \\
HRL-Score ($H=26$) & \textbf{0.796} (0.0014) & \underline{0.436} (0.0035) & \textbf{0.314} (0.0012) & \underline{0.093} (0.0004) \\
HRL-Fixed & 0.745 (0.0031) & 0.337 (0.0069) & 0.506 (0.0115) & 0.111 (0.0011) \\
NeuMF & 0.768 (0.0014) & 0.386 (0.0031) & 0.330 (0.0016) & 0.097 (0.0006) \\
AllSet & 0.765 (0.0022) & 0.378 (0.0021) & 0.331 (0.0018) & 0.098 (0.0006) \\
HypeBoy & 0.768 (0.0017) & 0.388 (0.0040) & 0.329 (0.0019) & 0.097 (0.0007) \\
MultiVAE & 0.769 (0.0013) & 0.386 (0.0018) & 0.330 (0.0015) & 0.098 (0.0005) \\
LightGCN & 0.658 (0.0032) & 0.237 (0.0048) & 0.369 (0.0018) & 0.109 (0.0007) \\
EHGNN & 0.769 (0.0015) & 0.377 (0.0052) & 0.347 (0.0017) & 0.104 (0.0006) \\
NNMF & 0.749 (0.0048) & 0.348 (0.0062) & 0.338 (0.0014) & 0.100 (0.0004) \\
GraphSAGE & 0.768 (0.0011) & 0.383 (0.0049) & 0.339 (0.0009) & 0.101 (0.0005) \\
\bottomrule
\end{tabular}
\end{table}

In order to better compare the performance of four HRL-based approaches, we also considered ranging $r_{obs}\in\{0.2,0.5,0.8\}$ in the cold-start prediction task on MIMIC-III dataset. The results are shown in \ref{tab:cold-reveal}.

\begin{table}[tbp]
\centering
\small
\setlength{\tabcolsep}{4pt}
\caption{MIMIC-III reveal-fraction study with 70\% global training rows. Values are mean (standard error) over shared repetitions. Best values are bold and second-best values are underlined.}
\label{tab:cold-reveal}
\begin{tabular}{clcccc}
\toprule
$r_{\rm obs}$ & Method & AUROC $\uparrow$ & AUPRC $\uparrow$ & BCE $\downarrow$ & Brier $\downarrow$ \\
\midrule
0.2 & HRL-Cat & 0.771 (0.0022) & 0.396 (0.0041) & 0.334 (0.0020) & 0.097 (0.0006) \\
 & HRL-GM & \underline{0.795} (0.0014) & \textbf{0.437} (0.0038) & \underline{0.315} (0.0011) & \textbf{0.093} (0.0004) \\
 & HRL-Score & \textbf{0.796} (0.0010) & \underline{0.435} (0.0040) & \textbf{0.314} (0.0010) & \underline{0.093} (0.0004) \\
 & HRL-Fixed & 0.745 (0.0031) & 0.337 (0.0069) & 0.506 (0.0115) & 0.111 (0.0011) \\
\midrule
0.5 & HRL-Cat & 0.785 (0.0020) & 0.413 (0.0034) & 0.323 (0.0014) & 0.094 (0.0005) \\
 & HRL-GM & \underline{0.809} (0.0017) & \textbf{0.460} (0.0048) & \underline{0.305} (0.0009) & \textbf{0.090} (0.0003) \\
 & HRL-Score & \textbf{0.810} (0.0016) & \underline{0.459} (0.0045) & \textbf{0.304} (0.0008) & \underline{0.090} (0.0003) \\
 & HRL-Fixed & 0.791 (0.0016) & 0.421 (0.0054) & 0.379 (0.0059) & 0.095 (0.0001) \\
\midrule
0.8 & HRL-Cat & 0.792 (0.0019) & 0.423 (0.0045) & 0.321 (0.0018) & 0.094 (0.0008) \\
 & HRL-GM & \textbf{0.815} (0.0037) & \textbf{0.472} (0.0057) & \underline{0.304} (0.0023) & \textbf{0.090} (0.0007) \\
 & HRL-Score & \underline{0.815} (0.0033) & \underline{0.471} (0.0053) & \textbf{0.303} (0.0021) & \underline{0.090} (0.0007) \\
 & HRL-Fixed & 0.807 (0.0029) & 0.457 (0.0044) & 0.349 (0.0042) & 0.092 (0.0007) \\
\bottomrule
\end{tabular}
\end{table}

We can clearly observe that the gain is largest when the new hyperlink is weakly observed. At $r_{\mathrm{obs}}=0.2$, HRL-GM and HRL-Score raise AUPRC from $0.336$ HRL-Fixed to $0.437$ and reduce BCE from $0.506$ to $0.315/0.314$. At $r_{\mathrm{obs}}=0.8$, the AUC gap narrows to $0.815$ versus $0.807$. This matches well with how these HRL approaches actually work. The random-effect methods combine the observed coordinates with the learned hyperlink distribution $P_F$, while HRL-Fixed must estimate a new local $f_j$ from the observed coordinates alone. Thus learn the link-level distribution is most useful when the new row contains little row-specific information, while the fixed-effect estimate catches up as more of that row is revealed. Clinically, this corresponds to a new patient with only a partial set of recorded ICD-9 codes: $P_F$ transfers recurrent patient-level contexts learned from previous patients to predict additional unobserved codes.

\textbf{ICD-9 code clustering}

We first obtain node embeddings from both the proposed HRL methods and all baseline methods following the procedure described in Section~\ref{append:task-node}. Given the learned node embeddings, we then apply Gaussian mixture model (GMM) clustering to obtain soft cluster assignments for ICD-9 codes, varying the number of GMM components, denoted as \(G\) from 3 to 15. K-means are also applied, but none of the clustering results are reasonable. We thus mainly focus on analyzing GMM clustering results.

As mentioned in Appendix~\ref{append:task-node}, since the MIMIC-III data do not provide ground-truth labels for ICD-9-code clusters, we evaluate clustering quality qualitatively by manually inspecting the clinical meanings of these representative codes. In particular, we assess whether the codes within each cluster indeed share similar clinical meanings. Here we define the representative codes as those with the highest posterior membership probabilities among the core members. If a cluster contains fewer than 20 core members, we use all of them. Otherwise, we consider only the top 20. 

\textbf{Among the methods and values of \(G\) considered, HRL-Score with \(G=7\) gives the most interpretable and balanced clustering.} For readability, we only report the top representative ICD-9 concepts for each cluster in Table~\ref{tab:mimic-Score-K7} .

\begin{table}[htbp]
\centering
\small
\caption{ICD-9 code cluster learned from HRL-Score with $G=7$}
\label{tab:mimic-Score-K7}
\begin{tabularx}{\textwidth}{
>{\centering\arraybackslash}m{1.0cm}
>{\centering\arraybackslash}m{0.9cm}
>{\centering\arraybackslash}m{3cm}
X
}
\hline
\textbf{Cluster} &  \textbf{Core size} &\textbf{Manual name}
& \textbf{Top 5 representative ICD-9 concepts} \\
\hline

1&  34 & Cardiac disease 
& \diseaselist{
Native coronary atherosclerosis (414.01)\\
PCI status (V45.82)\\
Old myocardial infarction (412)\\
CABG status (V45.81)\\
Atrial fibrillation (427.31)
} \\
\hline

2& 33 & Cardiometabolic comorbidities 
& \diseaselist{
Essential hypertension (401.9)\\
Morbid obesity (278.01)\\
Chronic diastolic heart failure (428.32)\\
Obstructive sleep apnea (327.23)\\
Acute-on-chronic diastolic heart failure (428.33)
} \\
\hline

3 & 63& Postoperative complications
& \diseaselist{
Implant/graft-related surgical complications (E878.2)\\
Post-procedural hematoma (998.12)\\
Post-procedural hemorrhage (998.11)\\
Postoperative infection (998.59)\\
Other procedure-related complications (E878.8)
} \\
\hline

4& 12 & Diabetes-related complications
& \diseaselist{
Type 2 diabetes with neurologic complications (250.60)\\
Diabetic polyneuropathy (357.2)\\
Diabetic retinopathy (362.01)\\
Type 2 diabetes with renal complications (250.40)\\
Nephritis/nephropathy(583.81)
} \\
\hline

5 & 4 & Kidney-related
& \diseaselist{
Chronic kidney disease, unspecified (585.9)\\
Hypertensive CKD, stage I--IV or unspecified (403.90)\\
Stage III CKD (585.3)\\
Acute kidney failure (584.9)
} \\
\hline

6& 31& Liver-related complications
& \diseaselist{
Portal hypertension (572.3)\\
Hepatic encephalopathy (572.2)\\
Liver cirrhosis (571.5)\\
Ascites (789.59)\\
Alcoholic cirrhosis (571.2)
} \\
\hline

7& 21& Severe infection / inflammatory complications
& \diseaselist{
Septic shock (785.52)\\
Severe sepsis (995.92)\\
Septicemia (038.9)\\
Staphylococcal pneumonia / MSSA pneumonia (482.41)\\
Penicillin-resistant infection (V09.0)
} \\
\hline
\end{tabularx}
\end{table}

Generally speaking, the clustering in Table~\ref{tab:mimic-Score-K7} gives a compact and largely interpretable summary of the ICD-9 code space. The first two clusters are both cardiovascular-related, but they capture different aspects of the cardiovascular domain: the first is more directly associated with severe cardiac disease, while the second is closer to a heart-failure-related comorbidity profile. Similarly, although the diabetes-related cluster and the kidney-related cluster are clinically related, they are not redundant. The former contains codes explicitly tied to diabetic complications, whereas the latter consists of more general kidney dysfunction codes without explicit diabetes-specific coding. We interpret the kidney-related cluster cautiously, however, because it contains only four core members.

We also notice that the third cluster is quite large with 63 core members. This broadness is understandable given that ICD-9 coding covers a wide range of clinical annotations rather than pure diseases. Thus, the third cluster is better understood as a broad procedure-related care phenotype. 

We also examined larger values of \(G\) to determine whether a finer partition improves interpretability. Increasing \(G\) can reveal additional local structure, but it also tends to fragment several already interpretable groups. As an example, Table~\ref{tab:mimic-Score-K11} reports the HRL-Score result with \(G=11\).

\begin{table}[htbp]
\centering
\small
\caption{ICD-9 code cluster learned from HRL-Score with $G=11$}
\label{tab:mimic-Score-K11}
\begin{tabularx}{\textwidth}{
>{\centering\arraybackslash}m{0.7cm}
>{\centering\arraybackslash}m{0.9cm}
>{\centering\arraybackslash}m{3.0cm}
X
}
\hline
\textbf{Cluster} 
& \textbf{Core Size} & {\textbf{Manual name}}
& \textbf{Top 5 ICD-9 concepts} \\
\hline

1 & 22
& Liver-related complications
& \diseaselist{
Portal hypertension (572.3)\\
Other ascites (789.59)\\
Hepatic encephalopathy (572.2)\\
Cirrhosis of liver without mention of alcohol (571.5)\\
Alcoholic cirrhosis of liver (571.2)
} \\
\hline

2 & 27
& Cardiometabolic
& \diseaselist{
Unspecified essential hypertension (401.9)\\
Chronic diastolic heart failure (428.32)\\
Morbid obesity (278.01)\\
Obstructive sleep apnea (adult)(pediatric) (327.23)\\
Iatrogenic pneumothorax (512.1)
} \\
\hline

3 & 35
& Cardiac disease
& \diseaselist{
Aortocoronary bypass status (V45.81)\\
Bypass-graft coronary atherosclerosis (414.02)\\
Native coronary atherosclerosis (414.01)\\
Cardiac device/graft complications(996.72)\\
Subendocardial infarction, initial episode of care (410.71)
} \\
\hline

4 & 53
& Malignancy / neurologic complications
& \diseaselist{
Obstructive hydrocephalus (331.4)\\
Intracerebral hemorrhage (431)\\
Secondary malignancy, other sites (198.89)\\
History of lung cancer (V10.11)\\
Brain/spinal cord metastasis (198.3)
} \\
\hline

5 & 22
& Hematologic comorbidity
& \diseaselist{
Acute posthemorrhagic anemia (285.1)\\
Venous peripheral insufficiency, unspecified (459.81)\\
Do not resuscitate status (V49.86)\\
Other and unspecified hyperlipidemia (272.4)\\
Hyperpotassemia (276.7)
} \\
\hline

6 & 8
& Diabetes-related complications
& \diseaselist{
Diabetic retinopathy (362.01)\\
Type 2 diabetes with neurologic complications (250.60)\\
Diabetic polyneuropathy (357.2)\\
Type 2 diabetes with renal complications (250.40)\\
Nephropathy in diseases classified elsewhere (583.81)
} \\
\hline

7 & 20
& Severe infection / inflammatory complications
& \diseaselist{
Septic shock (785.52)\\
Severe sepsis (995.92)\\
Unspecified septicemia (038.9)\\
Staphylococcal pneumonia / MSSA pneumonia (482.41)\\
Tracheostomy status (V44.0)
} \\
\hline

8 & 4
& Dialysis disease
& \diseaselist{
Renal dialysis status (V45.11)\\
Anemia in chronic kidney disease (285.21)\\
End stage renal disease (585.6)\\
Hypertensive CKD, stage V or ESRD (403.91) (403.91)
} \\
\hline

9 & 2
& Heart failure
& \diseaselist{
Atrial fibrillation (427.31)\\
Congestive heart failure, unspecified (428.0)
} \\
\hline

10 & 4
& CKD-related renal dysfunction
& \diseaselist{
Acute kidney failure, unspecified (584.9)\\
Chronic kidney disease, stage III moderate (585.3)\\
Chronic kidney disease, unspecified (585.9)\\
Hypertensive CKD, stage I–IV or unspecified (403.90)
} \\
\hline

11 & 27
& Severe infection / inflammatory complications
& \diseaselist{
Implant/graft-related surgical complication (E878.2)\\
Cardiac complications, not elsewhere (997.1)\\
Acute vascular insufficiency of intestine (557.0)\\
Hemorrhage complicating a procedure (998.11)\\
Hematoma complicating a procedure (998.12)
} \\
\hline
\end{tabularx}
\end{table}

Compared with Table \ref{tab:mimic-Score-K7}, the 11-cluster HRL-Score based clustering provides a finer partition of the ICD-9 code space, but the improvement in interpretability is mixed. Several high-level groups from \(G=7\) remain recognizable, including cardiac, liver-related, diabetes-related, infection-related, and procedure-related components. And the postoperative complication cluster becomes substantially smaller, suggesting that the finer clustering separates out some codes that were previously grouped under a broad procedure-related phenotype. The 11-cluster result also identifies an additional coherent hematologic-related cluster. However, this added resolution also leads to some over-fragmentation. In particular, kidney-related and cardiac-related codes are each further split into small components, but these small clusters do not appear to be clearly separated from their corresponding clusters in Table \ref{tab:mimic-Score-K7} in terms of semantic interpretation. In addition, the fourth cluster appears to mix neurologic and malignancy-related codes, making its clinical interpretation less direct. 

We next compare HRL-Score with embedding-based baseline methods. Among LightGCN, EHGNN, NNMF, NeuMF, and HRL-Fixed, none produced a partition as clean and balanced as the HRL-Score result above. The most semantically reasonable LightGCN solution in the cluster sweep was a coarse \(G=3\) partition, which captured broad super-groups. When we increase the cluster numbers, the new components do not become clearly separated, and the remaining clusters are still heavily mixed. For better comparison, we represent the result based on LightGCN at $G=7$ in Table \ref{tab:mimic_LightGCN}.

\begin{table}[htbp]
\centering
\small
\caption{ICD-9 code clusters learned from LightGCN with $G=7$.}
\label{tab:mimic_LightGCN}
\begin{tabularx}{\textwidth}{
>{\centering\arraybackslash}m{0.8cm}
>{\centering\arraybackslash}m{1.1cm}
>{\centering\arraybackslash}m{3.0cm}
>{\raggedright\arraybackslash}X
}
\hline
\textbf{Cluster} 
& \textbf{Core size} 
& \textbf{Manual name} 
& \textbf{Top 5 representative ICD-9 concepts} \\
\hline

1 & 33 
& Cardiorenal comorbidity
& \diseaselist{
Subendocardial infarction, initial episode of care (410.71)\\
Pleural effusion (511.9)\\
Hypertensive CKD, stage V or ESRD (403.91)\\
Acute kidney failure with tubular necrosis (584.5)\\
Hypovolemia (276.52)
} \\
\hline

2 & 60
& Neurovascular / malignancy-related
& \diseaselist{
Lung contusion (861.21)\\
Thoracic aneurysm (441.2)\\
Alcohol abuse, continuous (305.01)\\
Subarachnoid hemorrhage (430)\\
Cerebral edema (348.5)
} \\
\hline

3 & 18
& Anticoagulation / infection comorbidity
& \diseaselist{
Long-term anticoagulant use (V58.61)\\
Thrombocytopenia (287.5)\\
Aspiration pneumonitis (507.0)\\
Acquired hypothyroidism (244.9)\\
Clostridium difficile infection (008.45)
} \\
\hline

4 & 20
& Liver / fluid-overload complications
& \diseaselist{
Hematemesis (578.0)\\
Urinary complications (997.5)\\
Hepatic encephalopathy (572.2)\\
Alcoholic cirrhosis (571.2)\\
Fluid overload (276.6)
} \\
\hline

5 & 69
& Liver / diabetes mixed residual
& \diseaselist{
Tricuspid valve disease (397.0)\\
Dysthymic disorder (300.4)\\
Chronic hepatitis C (070.54)\\
Acute/subacute liver necrosis (570)\\
Diabetic polyneuropathy (357.2)
} \\
\hline

6 & 7
& Acute cardiorenal comorbidity
& \diseaselist{
Acute kidney failure (584.9)\\
Congestive heart failure (428.0)\\
Atrial fibrillation (427.31)\\
Native coronary atherosclerosis (414.01)\\
Essential hypertension (401.9)
} \\
\hline

7 & 12
& Infection / metabolic comorbidity
& \diseaselist{
Severe sepsis (995.92)\\
Acidosis (276.2)\\
Hyperlipidemia (272.4)\\
Type 2 diabetes without complications (250.00)\\
Pneumonia (486)
} \\
\hline

\end{tabularx}
\end{table}

From Table \ref{tab:mimic_LightGCN}, we can clearly see that the resulting partition obtained by LightGCN at $G=7$ is less clean at the global level. Some clinically related codes are split across multiple clusters, while several clusters combine heterogeneous ICD-9 concepts. In particular, diabetes-related complications and postoperative complications are not recovered as clearly separated phenotypes, and the largest cluster appears to act partly as a mixed residual group without a clear pattern.

Finally, we pay special attention to HRL-Fixed. While HRL-Fixed achieves competitive performance in missing-value prediction and multi-label classification in the main text, its behavior on semantic node clustering is substantially different. Across \(G=3,\ldots,15\), the HRL-Fixed partitions are highly imbalanced. At \(G=3\), the method produces one dominant cluster containing 265 core members, together with two very small clusters. Increasing \(G\) separates several clinically coherent but tiny microclusters, while a large residual group remains. Even at \(G=15\), there is still a residual cluster with 99 core codes, and multiple clusters contain only one or two core members.

This suggests that, for HRL-Fixed, the embedding space captures information useful for predictive tasks but does not organize ICD-9 codes into well-separated semantic groups. In other words, the learned representation appears to encode local predictive associations rather than a globally interpretable geometry for node-level code clustering. 

\subsubsection{Food Recipe Hypergraph}
The \textit{Epicurious -- Recipes with Rating and Nutrition}
dataset\footnote{\href{https://www.kaggle.com/datasets/hugodarwood/epirecipes/data}{https://www.kaggle.com/datasets/hugodarwood/epirecipes/data}} contains 17,736 recipes collected from Epicurious\footnote{\href{https://www.epicurious.com/recipes-menus}{https://www.epicurious.com/recipes-menus}}. We construct a hypergraph by treating each ingredient as a node and each recipe as a hyperedge containing its corresponding ingredients.

For preprocessing, we first merge ingredient names with identical or highly similar meanings, such as ``oat'' and ``oatmeal'', ``chili pepper'' and ``hot pepper''. We also find that in the ingredient list, there are several closely related meat-ingredient variants. For example, the dataset contains both generic pork-related ingredients, such as ``pork'', and more specific ones, such as ``pork chop'', ``pork rib'', and ``pork tenderloin''. We checked the node degrees for these ingredients and found that the generic term has a substantially higher degree, likely because many recipes do not specify the exact cut. We thus merge these variants into a single ``pork'' node. The same procedure is applied to beef- and lamb-related ingredients. After consolidation, we remove ingredients appearing in fewer than five recipes. Then to reduce the impact of incomplete ingredient records, we further retain only recipes containing at least eight ingredients. The resulting hypergraph contains 926 recipes and 249 ingredients. Here for the constructed food-recipe hypergraph, cross-validation procedure suggests choosing $K=6,d=4$.

\textbf{Cold-start experiment}

As a supplement for Table \ref{tab:cold-main} in the main text, here we report standard errors of missing value prediction measurements in Table \ref{tab:food_recipe}, the results are based on 5 repetitions.

\begin{table}[tbp]
\centering
\small
\setlength{\tabcolsep}{4pt}
\caption{Food Recipe: cold-start prediction at $r_{\rm obs}=0.2$. Values are mean (standard error) across five repetitions.}
\label{tab:food_recipe}
\begin{tabular}{lcccc}
\toprule
Method & AUROC $\uparrow$ & AUPRC $\uparrow$ & BCE $\downarrow$ & Brier $\downarrow$ \\
\midrule
HRL-Cat & 0.787 (0.0055) & 0.160 (0.0067) & 0.132 (0.0014) & 0.031 (0.0002) \\
HRL-GM & \textbf{0.827} (0.0036) & \underline{0.194} (0.0093) & \underline{0.121} (0.0010) & \underline{0.030} (0.0002) \\
HRL-Score & \underline{0.826} (0.0034) & \textbf{0.195} (0.0069) & \textbf{0.120} (0.0007) & \textbf{0.029} (0.0001) \\
HRL-Fixed & 0.769 (0.0075) & 0.130 (0.0061) & 0.206 (0.0067) & 0.041 (0.0006) \\
NeuMF & 0.755 (0.0051) & 0.132 (0.0048) & 0.133 (0.0002) & 0.031 (0.0001) \\
AllSet & 0.758 (0.0062) & 0.123 (0.0053) & 0.131 (0.0005) & 0.031 (0.0001) \\
HypeBoy & 0.781 (0.0035) & 0.142 (0.0052) & 0.128 (0.0006) & 0.031 (0.0001) \\
MultiVAE & 0.789 (0.0037) & 0.147 (0.0059) & 0.127 (0.0005) & 0.030 (0.0001) \\
LightGCN & 0.528 (0.0030) & 0.038 (0.0013) & 0.147 (0.0005) & 0.032 (0.0001) \\
EHGNN & 0.774 (0.0051) & 0.141 (0.0061) & 0.129 (0.0007) & 0.031 (0.0001) \\
NNMF & 0.658 (0.0148) & 0.071 (0.0075) & 0.143 (0.0018) & 0.032 (0.0003) \\
GraphSAGE & 0.776 (0.0049) & 0.128 (0.0049) & 0.131 (0.0005) & 0.031 (0.0001) \\
\bottomrule
\end{tabular}
\end{table}

\textbf{Node clustering}

For the node clustering task, we first follow the procedure described in section \ref{append:task-node} to obtain the node embeddings. Intuitively, the majority of the ingredients should belong to only one group, thus we apply K-means clustering on the learned embeddings directly, ranging the number of clusters from 3 to 12 for each embedding result. 

We carefully check the semantic interpretations of those clustering results, and find that using HRL-GM embedding with 5 clusters would give the most meaningful results. To better represent the clustering results, for each cluster, we select the top 10 ingredients that are closest to cluster centers as representative members, the result is shown in Table \ref{tab:recipe-GM}.

\begin{table}[htbp]
\centering
\small
\caption{Ingredient clusters learned from HRL-GM with 5 clusters.}
\label{tab:recipe-GM}
\begin{tabular}{c p{3.2cm} c p{7cm}}
\hline
\textbf{Cluster} & \textbf{Cluster name} & \textbf{Size} & \textbf{Representative ingredients} \\
\hline
1 & Seafood & 22 
& oyster, salmon, cod, snapper, halibut, fish, lobster, crab, scallop, bass \\

2 & Richer flavor profile & 42 
& whiskey, kumquat, cottage cheese, pumpkin, tea, sherry, jam or jelly, bourbon, duck, apple \\

3 & Lighter flavor profile & 68 
& nut, nectarine, egg, cantaloupe, honeydew, butter, hazelnut, orange, amaretto, watermelon \\

4 & Asian/Latin ingredients & 40 
& paprika, lime juice, tofu, coriander, sake, plum, sesame, tomatillo, cumin, bok choy \\

5 & Vegetable-legume savory dishes & 77 
& legume, meat, vegetable, chard, pork, chickpea, parsley, olive, butternut squash, herb \\
\hline
\end{tabular}
\end{table}

Overall, the clustering reveals meaningful culinary co-occurrence patterns rather than strict ingredient taxonomies. The seafood cluster is highly coherent, while the fourth cluster captures a clear Asian/Latin savory profile, grouping ``tofu'' and ``sake'' with ``tomatillo'' and ``cumin''. The two flavor-oriented clusters may appear mixed at first glance, but they correspond to distinct subtypes: one is associated with richer, alcohol-based and cooked flavor profiles (e.g., whiskey, sherry, jam), whereas the other reflects lighter, fresher combinations centered on fruits, dairy, and mild sweetness. This suggests that the model is capturing differences in flavor intensity and culinary usage rather than purely ingredient categories. The last cluster is relatively more heterogeneous. Rather than corresponding to a strict ingredient category, it appears to capture a broad savory cooking pattern, where vegetables and legumes frequently co-occur with herbs and animal proteins. This is consistent with the fact that the clustering is learned from recipe co-occurrence patterns rather than predefined food taxonomies. And in the actual recipes, vegetables and proteins often appear together.

The HRL-GM approach significantly outperforms baseline methods in terms of cluster balance and interpretability. For example, HRL-Fixed has even slightly better performance than our newly proposed three HRL methods on the multi-label classification task. However, it fails to provide a meaningful clustering result. When setting the number of clusters to 5, the resulting clusters with representative ingredients are shown in Table \ref{tab:recipe-Fixed}.

\begin{table}[htbp]
\centering
\small
\caption{Ingredient clusters learned from HRL-Fixed with 5 clusters}
\label{tab:recipe-Fixed}
\begin{tabular}{c p{3.2cm} c p{7cm}}
\hline
\textbf{Cluster} & \textbf{Cluster name} & \textbf{Size} & \textbf{Representative ingredients} \\
\hline
1 & Aromatics & 39
& lime juice, ginger, coriander, seed, chile, lemongrass, mango, lime, paprika, tamarind \\

2 & Nuts  & 17
& walnut, pistachio, pomegranate, date, tofu, chestnut, almond, dried fruit, tree nut, apricot \\

3 & Mixed group & 151
& lentil, phyllo/puff pastry dough, mustard, vinegar, breadcrumbs, ricotta, herb, pork, buttermilk, bread \\

4 & Seafood & 26
& fish, salmon, oyster, crab, cod, halibut, shrimp, snapper, squid, bass \\

5 & Vegetables & 16
& yellow squash, eggplant, broccoli, zucchini, corn, chickpea, plantain, barley, squash, okra \\
\hline
\end{tabular}
\end{table}

Although Table \ref{tab:recipe-Fixed} recovers a reasonably coherent seafood cluster and a vegetable-oriented cluster, it suffers from a ``giant component'' problem, where Cluster 3 collapses 151 heterogeneous ingredients into a single uninformative residual group. We have also tried to increase the number of clusters, but that would result in several over-splitting groups containing only two or three ingredients, while keeping one mixed group with a large size. Therefore, even though this clustering appears to capture some local co-occurrence patterns, it provides a weaker and less interpretable global organization than the HRL-GM based clustering reported in Table~\ref{tab:recipe-GM}.

For the remaining baseline methods, the clustering derived by EHGNN at 5 clusters is the most reasonable one. However, as shown in Table \ref{tab:recipe-EHGNN}, its overall organization is still less coherent than that of HRL-GM in Table~\ref{tab:recipe-GM}.  

\begin{table}[htbp]
\centering
\small
\caption{Ingredient clusters learned from EHGNN with 5 clusters.}
\label{tab:recipe-EHGNN}
\begin{tabular}{c p{3.2cm} c p{7cm}}
\hline
\textbf{Cluster} & \textbf{Cluster name} & \textbf{Size} & \textbf{Representative ingredients} \\
\hline
1 & Mixed fruit, dairy \& protein flavors & 46
& duck, pecan, raisin, pear, honey, orange, brie, swiss cheese, brussels sprout, fig \\

2 & Mixed savory ingredients & 91
& root vegetable, dairy, celery, green bean, citrus, beef, potato, fish, mushroom, shellfish \\

3 & Rich fruit and alcohol flavors & 43
& nutmeg, clove, cantaloupe, blackberry, nectarine, melon, blueberry, brandy, sake, asian pear \\

4 & Dairy, grains \& savory staples & 62
& bulgur, pumpkin, seed, lemongrass, horseradish, chard, mint, phyllo/puff pastry dough, sour cream, bok choy \\

5 & Drinks and spirits & 7
& champagne, sparkling wine, gin, whiskey, anchovy, honeydew, amaretto \\
\hline
\end{tabular}
\end{table}

Compared to the HRL-GM based clustering shown in Table \ref{tab:recipe-GM}, EHGNN tends to blur the boundary between protein-based and non-protein ingredients: for example, seafood is not cleanly separated and partially appears within the mixed cluster. In HRL-GM, by comparison, the seafood cluster is more clearly identified and remains semantically coherent. Apart from that, the first group in EHGNN result still mixes fruits, dairy, meat, and vegetables, making its semantic interpretation less clear. Moreover, the drinks cluster is very small, indicating possible over-fragmentation.

A key observation is that all three presented clusters produce at least one relatively heterogeneous cluster, reflecting the intrinsic challenge of clustering recipe in an unsupervised way based solely on the recipe--ingredient incidence matrix. The key difficulty lies in the mismatch between the intrinsic semantic structure of ingredients and their co-occurrence patterns in recipes. 

On the one hand, ingredients possess well-defined semantic categories (e.g., fruits, dairy products, meats, and vegetables), which suggest a natural clustering structure. On the other hand, recipes are deliberately constructed by combining heterogeneous ingredients from different categories in order to achieve complementary flavors and richer taste profiles. As a result, ingredients that are semantically distinct may frequently co-occur, while ingredients within the same semantic category may not co-occur as often.

This contrasts sharply with co-citation hypergraphs, where, as noted in \citep{ji2022co}, co-citation of two authors typically indicates shared research interests, leading to a stronger alignment between co-occurrence patterns and underlying semantic similarity. In the food-recipe setting, however, co-occurrence reflects culinary composition rather than semantic similarity, which introduces ambiguity in clustering.

Consequently, the natural ingredient taxonomy and the co-occurrence structure induced by recipes often compete with each other, making it difficult for unsupervised methods to recover clean and interpretable clusters from the incidence matrix alone. This explains the observed mixing of ingredient types and the instability of clustering results across different methods.

\subsubsection{Movie Preference Hypergraph}
The MovieLens 1M dataset\footnote{\href{https://grouplens.org/datasets/movielens/}{https://grouplens.org/datasets/movielens/}} contains 1,000,209 ratings from 6,040 users on approximately 3,900 movies. Apart from user rating information, MovieLens dataset also provides movie genre annotations, where the majority of movies contain multiple labels. We use these labels for the multi-label classification task after we learned the node embeddings.

We model the user-movie interaction data as a hypergraph by treating movies as nodes and users as hyperlinks. More specifically, a hyperedge contains a node if the corresponding user assigned a rating of at least 4 stars to that movie, signifying a positive preference. To focus on movies with sufficient interaction information, we first retain the 300 movies with the largest numbers of liked users. We then found that the dataset originally contains 18 different genres, with some associated with very few retained movies. We thus discard those genres appearing in fewer than 30 retained movies and further remove movies with zero degree after filtering. After that, we focus on only the top 1000 hyperlinks with the largest degrees. This preprocessing yields a hypergraph incidence matrix of size $1000 \times 287$, corresponding to 1000 user hyperedges and 287 movie nodes.

For this hypergraph, model selection procedure selects $K=15,d=8$, which are fixed for all following tasks.

\textbf{Cold-start experiment}

Same as we did for MIMIC-III and food-recipe, here we present the cold-start experiment results for movie preference hypergraph, based on 5 repetitions. The result is shown in Table~\ref{tab:movielens}

\begin{table}[tbp]
\centering
\small
\setlength{\tabcolsep}{4pt}
\caption{MovieLens: cold-start prediction at $r_{\rm obs}=0.2$. Each entry reports mean (standard error) over five repetitions.}
\label{tab:movielens}
\begin{tabular}{lcccc}
\toprule
Method & AUROC $\uparrow$ & AUPRC $\uparrow$ & BCE $\downarrow$ & Brier $\downarrow$ \\
\midrule
HRL-Cat & 0.728 (0.0017) & 0.644 (0.0051) & 0.601 (0.0005) & 0.205 (0.0003) \\
HRL-GM & \textbf{0.783} (0.0018) & \textbf{0.712} (0.0049) & \textbf{0.545} (0.0012) & \textbf{0.184} (0.0005) \\
HRL-Score & \underline{0.776} (0.0021) & \underline{0.703} (0.0050) & \underline{0.552} (0.0012) & \underline{0.187} (0.0005) \\
HRL-Fixed & 0.767 (0.0018) & 0.682 (0.0055) & 0.617 (0.0031) & 0.199 (0.0007) \\
NeuMF & 0.754 (0.0034) & 0.664 (0.0083) & 0.576 (0.0023) & 0.196 (0.0011) \\
AllSet & 0.659 (0.0020) & 0.572 (0.0054) & 0.635 (0.0010) & 0.222 (0.0005) \\
HypeBoy & 0.681 (0.0039) & 0.597 (0.0081) & 0.655 (0.0041) & 0.231 (0.0020) \\
MultiVAE & 0.652 (0.0011) & 0.569 (0.0040) & 0.638 (0.0011) & 0.223 (0.0005) \\
LightGCN & 0.698 (0.0019) & 0.610 (0.0062) & 0.612 (0.0005) & 0.212 (0.0002) \\
EHGNN & 0.606 (0.0244) & 0.515 (0.0309) & 0.655 (0.0080) & 0.232 (0.0038) \\
NNMF & 0.693 (0.0049) & 0.615 (0.0086) & 0.615 (0.0022) & 0.213 (0.0010) \\
GraphSAGE & 0.664 (0.0036) & 0.579 (0.0124) & 0.645 (0.0030) & 0.226 (0.0014) \\
\bottomrule
\end{tabular}
\end{table}
\textbf{Multi-label classification}

The multi-label classification task follows the evaluation setting of \cite{zhang2020hypersagnn}. We therefore include Hyper-SAGNN proposed in that paper as an additional baseline. For this task, we consider two kinds of repetitions to check the performance stability. In the first case, we fit all proposed HRL variants and baseline methods only once to obtain final node embeddings, following the procedure described in the first two paragraphs in Appendix~\ref{append:task-node}. Then we randomly split the movie nodes into training and testing sets, with the training proportion varying from $10\%$ to $90\%$. For each training proportion, we consider 100 different train/test split and apply that use the same splits for all methods. For each genre label, we train a separate one-vs-rest logistic regression classifier, using the learned movie embeddings as predictors. Classification performance is evaluated using both Micro-F1 and Macro-F1 scores. The result for this setting is presented in Figure~\ref{fig:movie1}

\begin{figure}[t]
\centering\includegraphics[width=5in]{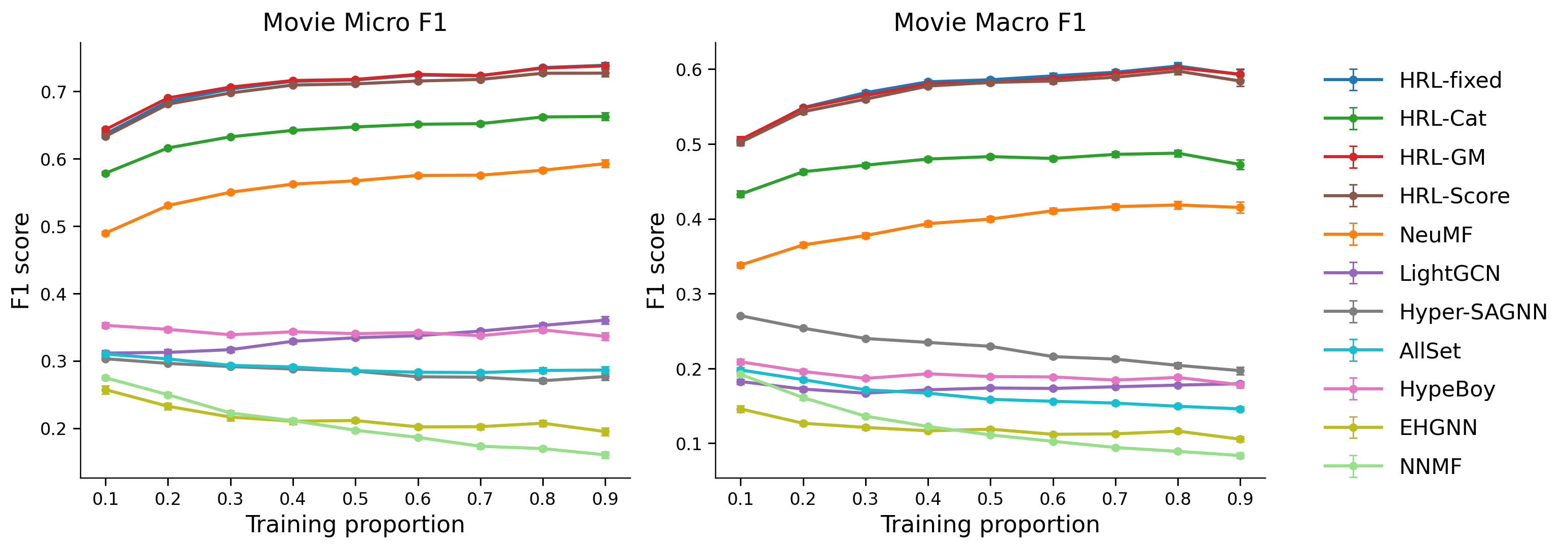}
	\caption{Performance of Multi-label classification on MovieLens datasets: one embedding and 100 training/testing splits}
    \label{fig:movie1}
\end{figure}

Then in the second experimental setting, in order to check the embedding-stability protocol, each hypergraph embedding method was refit five independent times, varying method-specific randomness such as initialization, neural-network training, or diffusion sampling. For each fitted embedding and each training proportion from 0.1 to 0.9, we generate 20 shared training/testing splits. This result is shown in Figure~\ref{fig:movie_small} in the main text.

Across both settings, the HRL-based methods achieve strong classification performance. In particular, HRL-Fixed, HRL-GM, and HRL-Score form the leading group in most cases, with HRL-Fixed often giving a small improvement over the other HRL variants. The advantage of HRL-Fixed is modest, but it is consistent with the cross-validation results. For both the sub-hypergraph and the full hypergraph, a relatively large $K$ is selected, which indicates that the movie preference hypergraph may contain a relatively complicated user-level preference patterns. Since HRL-Fixed can be viewed as a highly flexible special case of HRL-Cat, where each hyperedge is allowed to have its own latent edge effect, it may better capture such heterogeneous user preferences. We note that the above comparison follows the model-selection protocol in Appendix~\ref{append:select}. For HRL-Score, as well as for all baseline methods, hyperparameters are tuned during the representation-learning stage using the missing-value prediction criterion, rather than the downstream genre labels. This is intentional, since we are conducting the multi-label classification task as an external evaluation of the learned node embeddings. It is conceivable that for HRL-Score, its classification performance could potentially improve if its hyperparameters were selected directly to maximize multi-label classification accuracy. 

\subsection{Data and Asset Licenses.}\label{append:licenses}
All real-world datasets used in this paper are existing datasets obtained from their
original public or credentialed sources. We use these datasets only for academic
research evaluation. We do not redistribute the raw datasets.

\begin{table}[t]
\centering
\small
\caption{Sources, versions, and license or usage terms of the real-world datasets.}
\label{tab:data-licenses}
\begin{tabular}{
  >{\RaggedRight\arraybackslash}p{0.15\linewidth}
  >{\RaggedRight\arraybackslash}p{0.25\linewidth}
  >{\RaggedRight\arraybackslash}p{0.27\linewidth}
  >{\RaggedRight\arraybackslash}p{0.20\linewidth}
}
\toprule
Dataset & Source and version & License / terms of use & Use in this paper \\
\midrule
MIMIC-III
&
MIMIC-III Clinical Database, version 1.4, hosted on PhysioNet. We cite both the
PhysioNet database record and the original Scientific Data publication.
&
PhysioNet Credentialed Health Data License 1.5.0 and Credentialed Health Data Use
Agreement 1.5.0. Access is restricted to credentialed users who complete the required
training and sign the data use agreement.
&
Used to construct an ICD-9 code co-occurrence hypergraph. We only use it for method evaluation, we do not redistribute the raw data.
\\

Epicurious recipes
&
Kaggle dataset ``Epicurious -- Recipes with Rating and Nutrition'' by Hugo Darwood;
downloaded from \url{https://www.kaggle.com/datasets/hugodarwood/epirecipes/data}.
&
The Kaggle page lists the license as unknown. We therefore treat the dataset as a
third-party asset with unspecified licensing terms, use it only for academic evaluation,
and do not redistribute the raw dataset.
&
Used to construct a recipe--ingredient hypergraph. We only use it to evaluate the proposed method. We do not re-distribute the raw data.
\\
MovieLens 1M
&
GroupLens MovieLens 1M Dataset, released 02/2003; downloaded from
\url{https://grouplens.org/datasets/movielens/1m/}.
&
GroupLens usage terms as stated in the MovieLens 1M README. The dataset may be used
for research with acknowledgment, but may not be redistributed or used for commercial
or revenue-bearing purposes without permission.
&
Used to construct the movie-preference hypergraph, used for model performance evaluation. We do not redistribute the raw data.
\\
\bottomrule
\end{tabular}
\end{table}

\subsection{Direct HRL Comparisons and Computational Scaling}
\label{app:comparison}
In this subsection, we compare the four HRL specifications through parameter counts,
computational scaling, and cold-start prediction, and summarize their
practical tradeoffs.

\paragraph{Parameterization and computational cost.}
Let $m$, $n$, $d$, and $K$ denote the numbers of training hyperlinks,
nodes, latent dimensions, and mixture components, respectively, and let
$s_j=|e_j|$ be hyperlink cardinality. All methods operate on the
$m\times n$ incidence matrix without enumerating $s_j$-node subsets or
constructing order-$s_j$ tensors.

Table~\ref{tab:complexity} separates retained parameters, counted before
normalization constraints, from fitting memory. HRL-Cat and HRL-GM replace
the $md$ hyperlink coordinates of HRL-Fixed with $Kd$ component coordinates
and $K-1$ weights. HRL-GM fixes its common covariance. HRL-Cat factorizes an unrestricted $K\times n$
probability table after fitting, so its retained count differs from the
number of coordinates optimized during fitting.

For HRL-Score, $B_{\mathrm{MC}}$ denotes the number of prior particles and
$C_{\mathrm{score}}$ the total network-training and prior-sampling cost
per outer epoch. With hidden width $H$ and time-feature dimension $T$,
the time encoder $T\to H\to H$ and score network
$(d+H)\to H\to H\to d$ contain
\[
 P_{\btheta}=3H^2+(T+2d+4)H+d
\]
parameters, including biases. For example, on MIMIC-III, $n=275$, $d=6$, $T=64$, and $H=26$
give $P_{\btheta}=4114$ and $n(d+1)+P_{\btheta}=6039$ retained parameters. Posterior moments, importance weights, optimizer
states, and exponential-moving-average copies contribute to memory but are
not additional statistical parameters.

\begin{table}[!htbp]
\centering
\caption{Retained representation size and conservative dense-operation counts for one fitting iteration or epoch. GM uses a fixed common covariance. Working arrays exclude the common incidence matrix; initialization and final normalization/factorization are excluded from per-iteration work.}
\label{tab:complexity}
\footnotesize
\setlength{\tabcolsep}{3pt}
\renewcommand{\arraystretch}{1.35}
\resizebox{\linewidth}{!}{%
\begin{tabular}{@{}l *{3}{>{\(\scriptstyle}c<{\)}}@{}}
\toprule
Method
& \multicolumn{1}{c}{Retained parameters}
& \multicolumn{1}{c}{Dense work}
& \multicolumn{1}{c@{}}{Principal working arrays} \\
\midrule
HRL-Cat
& n(d+1)+K(d+1)-1
& O(mnK)
& O(mK+Kn+nd) \\
HRL-GM
& n(d+1)+K(d+1)-1
& O(mnKd^2+(mK+n)d^3)
& O(mnK+mKd^2+nd) \\
HRL-Score
& n(d+1)+P_{\btheta}
& O(mnd^2+mB_{\mathrm{MC}}d^2+(m+n)d^3+C_{\mathrm{score}})
& O(mn+mB_{\mathrm{MC}}+nd+(m+B_{\mathrm{MC}})d^2+P_{\btheta}) \\
HRL-Fixed
& n(d+1)+md
& O(mnd)
& O(nd+md) \\
\bottomrule
\end{tabular}%
}
\end{table}
\paragraph{Scaling with the number of nodes.} In order to show the impact of node number in a more direct way, we measured scaling under the mixture-Gaussian and mixture-$t$
simulation designs from the main text.
We fix $m=3000$ and vary $n\in\{90,180,360\}$ under the Gaussian-mixture
and heavy-tailed $t_4$-mixture designs. Tables~\ref{tab:cost_gaussian}
and~\ref{tab:cost_t} report normalized, sign-aligned representation error
$\mathrm{Err}_Z=\|\widehat{\bZ}-\bZ^\star\|_F/\sqrt{nd}$ and normalized
mutual information (NMI) for hyperlink types, while computational cost are reported in Table \ref{tab:cost_gaussian} and Table \ref{tab:cost_t}.

\begin{table}[!htbp]
\centering
\small
\setlength{\tabcolsep}{4pt}
\renewcommand{\arraystretch}{1.08}
\caption{Gaussian-mixture design with $m=3000$: representation recovery and computational cost. Accuracy values are means across simulation seeds. Bold and underlined entries mark the best and second-best reported accuracy values within each node count.}
\label{tab:cost_gaussian}
\textit{(a) Representation recovery}
\par\smallskip
\begin{tabular}{@{}l*{6}{c}@{}}
\toprule
& \multicolumn{2}{c}{$n=90$} & \multicolumn{2}{c}{$n=180$} & \multicolumn{2}{c}{$n=360$} \\
\cmidrule(lr){2-3}\cmidrule(lr){4-5}\cmidrule(lr){6-7}
Method & $\mathrm{Err}_Z\downarrow$ & NMI$\uparrow$ & $\mathrm{Err}_Z\downarrow$ & NMI$\uparrow$ & $\mathrm{Err}_Z\downarrow$ & NMI$\uparrow$ \\
\midrule
HRL-Cat & 0.0808 & 0.8824 & 0.0783 & 0.9482 & 0.0784 & 0.9744 \\
HRL-GM & \textbf{0.0612} & \textbf{0.8984} & \textbf{0.0530} & \textbf{0.9573} & \textbf{0.0518} & \underline{0.9818} \\
HRL-Score & \underline{0.0672} & 0.8878 & \underline{0.0689} & 0.9476 & 0.0894 & 0.9694 \\
HRL-Fixed & 0.1566 & \underline{0.8976} & 0.0810 & \underline{0.9569} & \underline{0.0606} & \textbf{0.9819} \\
\bottomrule
\end{tabular}
\par\medskip
\textit{(b) Computational cost}
\par\smallskip
\resizebox{\linewidth}{!}{%
\begin{tabular}{@{}l*{9}{c}@{}}
\toprule
& \multicolumn{3}{c}{$n=90$} & \multicolumn{3}{c}{$n=180$} & \multicolumn{3}{c}{$n=360$} \\
\cmidrule(lr){2-4}\cmidrule(lr){5-7}\cmidrule(lr){8-10}
Method & Time (s) & s/iter & RSS (MB) & Time (s) & s/iter & RSS (MB) & Time (s) & s/iter & RSS (MB) \\
\midrule
HRL-Cat & 20.33 & 0.1874 & 507.6 & 10.26 & 0.2919 & 505.0 & 12.29 & 0.3368 & 638.9 \\
HRL-GM & 396.79 & 0.9681 & 1541.4 & 458.67 & 1.3015 & 2208.7 & 1059.54 & 2.1191 & 2030.2 \\
HRL-Score & 54.10 & 0.0902 & 532.4 & 76.00 & 0.0950 & 660.2 & 146.40 & 0.1220 & 485.7 \\
HRL-Fixed & 12.84 & 0.0128 & 469.0 & 25.04 & 0.0250 & 625.7 & 50.35 & 0.0504 & 438.9 \\
\bottomrule
\end{tabular}%
}
\par\smallskip
\begin{minipage}{\linewidth}\footnotesize
\textit{Note:} Time is total fitting time; s/iter is time per iteration or epoch; RSS is peak resident-set size. HRL-Cat timings use separate sequential reruns.
\end{minipage}
\end{table}

\begin{table}[t]
\centering
\small
\setlength{\tabcolsep}{4pt}
\renewcommand{\arraystretch}{1.08}
\caption{Heavy-tailed $t_4$-mixture design with $m=3000$: representation recovery and computational cost. Accuracy values are means across simulation seeds. Bold and underlined entries mark the best and second-best reported accuracy values within each node count.}
\label{tab:cost_t}
\textit{(a) Representation recovery}
\par\smallskip
\begin{tabular}{@{}l*{6}{c}@{}}
\toprule
& \multicolumn{2}{c}{$n=90$} & \multicolumn{2}{c}{$n=180$} & \multicolumn{2}{c}{$n=360$} \\
\cmidrule(lr){2-3}\cmidrule(lr){4-5}\cmidrule(lr){6-7}
Method & $\mathrm{Err}_Z\downarrow$ & NMI$\uparrow$ & $\mathrm{Err}_Z\downarrow$ & NMI$\uparrow$ & $\mathrm{Err}_Z\downarrow$ & NMI$\uparrow$ \\
\midrule
HRL-Cat & 0.4124 & \underline{0.4143} & 0.4152 & 0.4330 & 0.4042 & 0.4396 \\
HRL-GM & 0.3930 & 0.4141 & 0.3818 & 0.4398 & \underline{0.3759} & 0.4042 \\
HRL-Score & \textbf{0.0702} & \textbf{0.8769} & \textbf{0.0737} & \textbf{0.9321} & \textbf{0.0862} & \textbf{0.9486} \\
HRL-Fixed & \underline{0.3899} & 0.3920 & \underline{0.3813} & \underline{0.4523} & 0.3771 & \underline{0.4722} \\
\bottomrule
\end{tabular}
\par\medskip
\textit{(b) Computational cost}
\par\smallskip
\resizebox{\linewidth}{!}{%
\begin{tabular}{@{}l*{9}{c}@{}}
\toprule
& \multicolumn{3}{c}{$n=90$} & \multicolumn{3}{c}{$n=180$} & \multicolumn{3}{c}{$n=360$} \\
\cmidrule(lr){2-4}\cmidrule(lr){5-7}\cmidrule(lr){8-10}
Method & Time (s) & s/iter & RSS (MB) & Time (s) & s/iter & RSS (MB) & Time (s) & s/iter & RSS (MB) \\
\midrule
HRL-Cat & 24.75 & 0.1796 & 494.6 & 37.84 & 0.2488 & 580.5 & 62.67 & 0.3419 & 657.5 \\
HRL-GM & 462.45 & 0.9679 & 1752.1 & 647.67 & 1.5237 & 1525.0 & 848.64 & 1.9332 & 2443.9 \\
HRL-Score & 53.00 & 0.0757 & 527.7 & 88.70 & 0.1109 & 660.3 & 186.10 & 0.1329 & 538.1 \\
HRL-Fixed & 13.36 & 0.0134 & 477.5 & 25.34 & 0.0253 & 629.6 & 52.40 & 0.0524 & 428.7 \\
\bottomrule
\end{tabular}%
}
\par\smallskip
\begin{minipage}{\linewidth}\footnotesize
\textit{Note:} Time is total fitting time; s/iter is time per iteration or epoch; RSS is peak resident-set size. HRL-Cat timings use separate sequential reruns.
\end{minipage}
\end{table}

As shown in the tables, per-iteration time increases with $n$, whereas total time also depends
on convergence. HRL-GM achieves the lowest Gaussian-design error at each
node count, while HRL-Cat and HRL-Fixed fit faster. Under the heavy-tailed mixture-$t$ design,
HRL-Score is the only method that maintains low representation error and
high NMI, illustrating a case where flexibility in $P_F$ is worth the additional computation.
HRL-GM has the highest peak memory because it stores variational quantities indexed by hyperlink, node, and component.

\paragraph{Cold-start performance under parameter budgets.} Apart from simulation work, we use the cold-start experiment to study the impact of hyperlink size. In order to change node size, we vary the global training fraction from 0.6 to 0.8, keeping validation fraction 0.1 and $r_{\rm obs}=0.2$. With $d=6$ and $K=7$, HRL-Score uses the
largest integer width satisfying $P_{\btheta}<md$, giving
$H=23$, $26$, and $28$. Its hyperparameters are selected by validation
AUPRC on the first repetition at each fraction and reused thereafter. The results are shown in Table \ref{tab:train-sweep-revised}.

\begin{table}[tbp]
\centering
\small
\setlength{\tabcolsep}{4pt}
\caption{MIMIC-III row-disjoint cold-start prediction at $r_{\mathrm{obs}}=0.2$ as the global-training portion varies. Values are mean (standard error).}
\label{tab:train-sweep-revised}
\begin{tabular}{clrcccc}
\toprule
Train & Method & Params & AUPRC $\uparrow$ & BCE $\downarrow$ & Time (s) & RSS (MB) \\
\midrule
0.6 & HRL-Cat & 1,973 & 0.391 (0.0033) & 0.336 (0.0019) & 6.7 (1.2) & 388.0 (12.1) \\
 & HRL-GM & 1,973 & \textbf{0.434} (0.0025) & \textbf{0.315} (0.0007) & 366.1 (13.7) & 1278.5 (70.2) \\
 & HRL-Score & 5,358 & \underline{0.429} (0.0035) & \underline{0.316} (0.0018) & 29.9 (1.6) & 530.0 (6.8) \\
 & HRL-Fixed & 5,525 & 0.332 (0.0075) & 0.514 (0.0068) & 45.0 (1.4) & 492.3 (7.3) \\
\midrule
0.7 & HRL-Cat & 1,973 & 0.396 (0.0041) & 0.334 (0.0020) & 13.6 (2.7) & 402.6 (13.4) \\
 & HRL-GM & 1,973 & \underline{0.437} (0.0038) & \underline{0.315} (0.0011) & 332.7 (32.3) & 1408.2 (62.6) \\
 & HRL-Score & 6,039 & \textbf{0.437} (0.0034) & \textbf{0.314} (0.0012) & 97.7 (6.4) & 504.0 (35.7) \\
 & HRL-Fixed & 6,125 & 0.337 (0.0069) & 0.506 (0.0115) & 44.1 (1.1) & 574.3 (9.5) \\
\midrule
0.8 & HRL-Cat & 1,973 & 0.402 (0.0031) & 0.331 (0.0016) & 12.5 (1.3) & 401.2 (7.5) \\
 & HRL-GM & 1,973 & \textbf{0.441} (0.0032) & \textbf{0.313} (0.0012) & 457.5 (37.2) & 1385.2 (37.0) \\
 & HRL-Score & 6,523 & \underline{0.435} (0.0031) & \underline{0.314} (0.0010) & 46.2 (1.3) & 566.3 (8.0) \\
 & HRL-Fixed & 6,725 & 0.341 (0.0033) & 0.519 (0.0094) & 53.4 (0.8) & 560.4 (8.3) \\
\bottomrule
\end{tabular}
\end{table}
The conclusions are consistent across training portions. HRL-GM and
HRL-Score achieve the highest AUPRC and lowest BCE despite fewer retained
parameters. Even HRL-Cat improves on HRL-Fixed, supporting the value of
learning a population distribution for cold-start prediction. HRL-Cat
and HRL-GM share the smallest count (1973), with HRL-Cat fitting fastest. We also find that HRL-Score approaches HRL-GM accuracy with substantially lower peak memory.

\paragraph{Practical choice and fitting stability.} We summarize the resulting tradeoffs in Table~\ref{tab:model-guide}.

\begin{table}[H]
\centering
\small
\setlength{\tabcolsep}{4pt}
\renewcommand{\arraystretch}{1.15}
\caption{Practical considerations for choosing an HRL specification, informed by the reported accuracy, cost, and stability results.}
\label{tab:model-guide}
\begin{tabularx}{\linewidth}{@{}l>{\raggedright\arraybackslash}X>{\raggedright\arraybackslash}X>{\raggedright\arraybackslash}X@{}}
\toprule
Method & Random-effect structure & Advantage & Disadvantages \\
\midrule
HRL-Cat
& Discrete hyperlink types
& Fast fitting and a compact, interpretable representation
& Restrictive with within-type variation \\
HRL-GM
& Mixture Gaussian
& Continuous/moderately multimodal heterogeneity; a strong default	
& High working memory \\
HRL-Score
& Flexible learned distribution $\PP_{\bF}$
& Admits non-Gaussian flexibility
& Needs structured initialization and enough particles\\
HRL-Fixed
& Unrestricted effects for training hyperlinks
& Dense/transductive rows or no pooling benefit
& Parameters grow with 
; no prior for new rows\\
\bottomrule
\end{tabularx}
\end{table}
We recommend selecting $d,K$ and the model family by row-disjoint validation BCE/Brier, then applying a one-standard-error rule to choose the simplest statistically indistinguishable model. For HRL-Score, we additionally recommend particle-size stability checks and USVT-based initialization. On current evidence, HRL-GM is a strong default, while HRL-Score is most valuable when non-Gaussian flexibility is empirically supported.

\subsection{Detailed Introduction of Baseline Models}\label{sec:baseline-details}
\subsubsection{MultiVAE.}
MultiVAE~\citep{liang2018variational} was originally proposed for collaborative
filtering with implicit feedback. In the original recommendation setting, one
observes a sparse interaction vector for each user, recording whether the user
watched, clicked, listened to, or purchased each item. The goal is to rank
unseen items according to the user's latent preference. In our hypergraph setting, we adapt MultiVAE by treating each hyperlink as a ``user'' and each node as an ``item''.

MultiVAE assumes that each interaction vector is generated from a latent
Gaussian representation through a multinomial likelihood. Specifically, for the
$j$-th hyperlink, let
\[
    \bh_j \sim N(0,I_K),
\]
where $K$ is the latent dimension. Conditional on $\bh_j$, the incidence vector
$\bv_j$ is modeled as
\[
    \bv_j \sim \mathrm{Mult}(N_j,\pi(\bh_j)),
\]
where $N_j=\sum_{i=1}^n v_{ji}$ is the size of hyperlink $j$, and
$\pi(\bh_j)\in\mathbb{R}^n$ is a probability vector over the $n$ nodes. The
probability vector is obtained by first passing $\bh_j$ through a neural decoder
$f_\theta(\cdot)$ and then applying the softmax function:
\[
    \pi(\bh_j)
    =
    \mathrm{softmax}\{f_\theta(\bh_j)\},
    \qquad
    f_\theta(\bh_j)\in\mathbb{R}^n .
\]
Since the posterior distribution $p(\bh_j\mid \bv_j)$ is intractable, MultiVAE
introduces an encoder network
\[
    q_\phi(\bh_j\mid \bv_j)
    =
    N\!\left(
        \mu_\phi(\bv_j),
        \operatorname{diag}\{\sigma_\phi^2(\bv_j)\}
    \right),
\]
and trains the encoder and decoder by maximizing the variational objective
\[
    \mathcal{L}_\beta(\bv_j;\theta,\phi)
    =
    \mathbb{E}_{q_\phi(\bh_j\mid \bv_j)}
    \left[
        \log p_\theta(\bv_j\mid \bh_j)
    \right]
    -
    \beta\,
    \mathrm{KL}\!\left(
        q_\phi(\bh_j\mid \bv_j)\,\|\,p(\bh_j)
    \right),
\]
Although MultiVAE provides a flexible nonlinear baseline for sparse
implicit-feedback data, its modeling assumptions are less tailored to general
hypergraph incidence data. The multinomial likelihood forces all nodes to compete for a fixed amount of probability mass. This is natural for
recommendation, where the objective is to rank items for a user, but it is less natural for hypergraphs, where multiple nodes jointly form a hyperlink and the
hyperlink size $N_j$ is itself an important structural feature. Apart from that, MultiVAE
learns hyperlink-level latent variables through an encoder--decoder
architecture, but it does not explicitly assign each node an interpretable
embedding. As a result, node-level heterogeneity and hyperlink--node
compatibility are only implicitly absorbed into the neural decoder. This limits
interpretability and makes the learned representation less directly useful for
downstream node-level tasks such as node clustering or node classification.

\subsubsection{NeuMF}\label{sec:NeuMF}
NeuMF~\citep{he2017neural} is also a neural collaborative filtering method originally proposed for implicit-feedback recommendation. Its main idea is to combine the strengths of Generalized Matrix Factorization (GMF), which captures linear latent interactions, and a multi-layer perceptron (MLP), which captures nonlinear user--item interaction patterns.

The GMF component models the interaction between hyperlink $j$ and node $i$ through the elementwise product of their embeddings $
    \bphi^{\mathrm{GMF}}_{ji}
    =
    \bff_j^{G} \odot \bz_i^{G}$, while the MLP component uses a separate set of hyperlink and node embeddings. It first concatenates the two embeddings and then passes the concatenated vector through multiple nonlinear layers:
\[
    \bt_1 =
    \begin{bmatrix}
        \bff_j^M \\
        \bz_i^M
    \end{bmatrix},
    \qquad
    \bt_{\ell}
    =
    a_{\ell}\!\left(\bW_{\ell}^{\top}\bt_{\ell-1}+\bb_{\ell}\right),
    \quad \ell=2,\ldots,L,
\]
where $\bW_\ell$ and $\bb_\ell$ are neural-network parameters, and $a_\ell(\cdot)$ is a nonlinear activation function. 

NeuMF then fuses the GMF and MLP representations and maps them to an incidence probability through a logistic link, i.e.
\[
    \widehat v_{ji}
    =
    \sigma\!\left(
        \bh^\top
        \begin{bmatrix}
            \bphi^{\mathrm{GMF}}_{ji} \\
            \bphi^{\mathrm{MLP}}_{ji}
        \end{bmatrix}
    \right),
\]
The model is trained by minimizing the binary cross-entropy loss
\[
    \mathcal{L}_{\mathrm{BCE}}
    =
    -
    \sum_{j=1}^{m}
    \sum_{i=1}^{n}
    \left\{
        v_{ji}\log \widehat v_{ji}
        +
        (1-v_{ji})\log(1-\widehat v_{ji})
    \right\},
\]

In our comparison tasks, we use the concatenation of the learned GMF embedding and MLP embedding as final embeddings.

\subsubsection{LightGCN}
LightGCN~\citep{he2020lightgcn} was originally proposed for graph-based collaborative filtering. 
It simplifies graph convolutional networks by removing feature transformations and nonlinear activations, retaining only the core operation of neighborhood aggregation. 
In our hypergraph setting, we adapt LightGCN by converting the incidence matrix into a bipartite incidence graph, whose two node types are hyperlinks and nodes. An edge is placed between hyperlink $j$ and node $i$ if and only if node $i$ belongs to hyperlink $j$.

Let $\be_j^{(0)}$ and $\be_i^{(0)}$ denote the initial embeddings of hyperlink $j$ and node $i$, respectively. LightGCN leverages the edge-node interaction graph to propagate embeddings through normalized neighborhood aggregation shown below
\[
    \be_j^{(k+1)}
    =
    \sum_{i\in \mathcal{N}_j}
    \frac{1}
    {\sqrt{|\mathcal{N}_j|}\sqrt{|\mathcal{N}_i|}}
    \be_i^{(k)},
\]
\[
    \be_i^{(k+1)}
    =
    \sum_{j\in \mathcal{N}_i}
    \frac{1}
    {\sqrt{|\mathcal{N}_i|}\sqrt{|\mathcal{N}_j|}}
    \be_j^{(k)}.
\]
Here $\mathcal{N}_j$ denotes the set of nodes contained in hyperlink $j$, and $\mathcal{N}_i$ denotes the set of hyperlinks containing node $i$. After $K$ propagation layers, LightGCN combines embeddings from different layers to obtain the final representations
\[
    \be_j = \sum_{k=0}^{K}\alpha_k \be_j^{(k)},
    \qquad
    \be_i = \sum_{k=0}^{K}\alpha_k \be_i^{(k)}.
\]
The incidence score between hyperlink $j$ and node $i$ is then computed by an inner product,
$\widehat y_{ji} = \be_j^\top \be_i$, and the model is trained with Bayesian Personalized Ranking (BPR) loss:
\[
    \mathcal{L}_{\mathrm{BPR}}
    =
    -
    \sum_{j}
    \sum_{i\in \mathcal{N}_j}
    \sum_{\ell\notin \mathcal{N}_j}
    \log \sigma(\widehat y_{ji}-\widehat y_{j\ell})
    +
    \lambda \|\bE^{(0)}\|_2^2.
\]
which encourages observed hyperlink--node incidences to receive higher scores than unobserved ones.

\subsubsection{EHGNN}
EHGNN~\citep{jo2021edge} originally focus on a bunch of graphs, where each graph can be viewed as a tuple containing  $\bG_i=(\bE_i,\bM_i,\bX_i)$. Here $\bE_i$ represents observed edge feature, $\bX_i$ represents observed node feature and $\bM_i$ is the incidence 0-1 matrix. Their method is motivated mainly by the molecular setting, where each molecular is a graph, $X_i$ is the atom type of atom $i$, $E_j$ is the bond type of bond $j$.

In our case, only one hypergraph is available, and there are neither edge features nor node features. In all the tasks, we adopt the main framework of EHGNN, first build the dual hypergraph $\bH^*=(\epsilon,\bB',\bV)$, and then use an EHGNN-style encoder to learn hyperlink embeddings:$$\bff_j=\text{EHGNN}(j)$$
And we use a node encoder on the original hypergraph to learn node embeddings:$$\bz_i=\text{NodeEncoder}(i)$$
Here we use the hyperlink degrees as the one-dimensional edge attribute and use the number of hyperlinks containing node $i$ as one-dimensional node attribute.

To train the network, we add an incidence decoder linking $z_j,h_i$ to $B_{ij}$, e.g. $p_{ij}=\sigma(\bh_i^{\top}\bW\bz_j)$ and using  binary cross-entropy loss.

\subsubsection{NNMF}
Neural Network Matrix Factorization (NNMF) was proposed for matrix completion
and relational data modeling~\citep{dziugaite2015neural}. The original problem
is to predict missing entries in a partially observed matrix, such as user--movie
ratings in collaborative filtering or links in relational arrays. Classical
matrix factorization assumes that the entry $X_{nm}$ can be approximated by the
inner product of row and column latent features:
\[
    \widehat X_{nm} = U_n^\top V_m.
\]
NNMF generalizes this idea by replacing the fixed inner product with a learned
neural-network function.

Specifically, NNMF assigns latent feature vectors to each row and each column.
Let $\bU_n$ and $\bV_m$ denote row and column latent feature vectors. The model may
also include additional latent feature matrices $\bU'_n$ and $\bV'_m$, whose columns
are combined through elementwise products. The predicted entry is modeled as
\[
    \widehat X_{nm}
    =
    f_\theta\!\left(
        \bU_n,
        \bV_m,
        \bU'_{n,1}\circ \bV'_{m,1},
        \ldots,
        \bU'_{n,D'}\circ \bV'_{m,D'}
    \right),
\]
where $f_\theta$ is a feed-forward neural network and $\circ$ denotes the
elementwise product. The parameters are estimated by minimizing a regularized
reconstruction loss over observed entries:
\[
    \sum_{(n,m)\in \mathcal{J}}
    \left(
        X_{nm}-\widehat X_{nm}
    \right)^2
    +
    \lambda
    \left(
        \sum_n \|\bU_n\|_2^2
        +
        \sum_m \|\bV_m\|_2^2
        +
        \sum_n \|\bU'_n\|_F^2
        +
        \sum_m \|\bV'_m\|_F^2
    \right),
\]
where $\mathcal{J}$ is the set of observed entries. Training alternates between
optimizing the neural-network parameters with latent features fixed and
optimizing the latent features with the network fixed.

The basic idea of NNMF is to preserve the latent-feature representation of
matrix factorization while learning a flexible nonlinear function from row and
column embeddings to matrix entries. In our hypergraph setting, the incidence
matrix $Y$ is treated as a partially observed relational matrix, with hyperlinks
as rows and nodes as columns. NNMF then predicts whether node $i$ belongs to
hyperlink $j$. This gives a flexible nonlinear matrix-completion baseline, but
it does not explicitly exploit the combinatorial nature of hyperlinks or model
hyperlink-level sparsity.
\subsubsection{HRL-Fixed}
As we have mentioned in the main text, HRL-Fixed~\citet{wu2024general} can be viewed as a special case of HRL-Cat. Apart from node embeddings $\{\bz_i\}$, hyperlink embeddings $\{\bff_i\}$ and the node degree intercepts $\{\alpha_j\}$, HRL-Fixed additionally consider an order-adjusting parameter $\beta_{m,n}$ to control the overall sparsity and typical order of hyperlinks. Conditional on the embeddings and
parameters, HRL-Fixed assumes 
\[
    v_{ji} \mid \bff_j,\bz_i,\alpha_i,\beta_{m,n}
    \sim
    \mathrm{Bernoulli}(p_{ji}),
\]
where
\[
    p_{ji}
    =
    \sigma\!\left(
        \beta_{m,n}
        +
        \alpha_i
        +
        \bff_j^\top \bz_i
    \right).
\]

The model parameters are estimated by maximizing the likelihood under
identifiability constraints, which is achieved by adopting the projected gradient ascent algorithm.

\subsubsection{AllSet}\label{Allset}
The central idea of AllSet~\citep{chien2022allset} is to formulate hypergraph layer as a composition of two learnable multiset functions. One aggregates node representations into hyperlink representations, and the other propagates hyperlink representations back to nodes. 

Let \(\bX\in\mathbb R^{n\times d_0}\) denote the input node-feature matrix. (In our application case, we are not considering using any external node attributes. We thus add an additional encoder layer, treating node-ID as input, and the resulting $\bX\in\RR^{n,d_0}$ learned embedding as node-feature matrix. Equivalently, this can be viewed as applying a learnable linear embedding layer to the one-hot node identifiers $\bE_{\mathrm{ID}} = \bI_n \bW_{\mathrm{ID}},
 \bW_{\mathrm{ID}}\in\mathbb R^{n\times d_0}$. And we tune $d_0\in\{8,16,32,64\}$ as an additional hyper-parameter.)

With the node-feature matrix, starting from \(\bh_i^{(0)}=\bx_i\), an AllSet layer first constructs hyperlink representations through a node-to-hyperlink multiset function,
\[
    \bu_j^{(\ell)}
    =
    \phi_{V\to E}^{(\ell)}
    \left(
    \{\bh_i^{(\ell)}: i\in e_j\}
    \right),
\]
and then updates node representations through a hyperlink-to-node multiset function,
\[
    \bh_i^{(\ell+1)}
    =
    \phi_{E\to V}^{(\ell)}
    \left(
    \{\bu_j^{(\ell)}: j\in \mathcal E_i\}
    \right),
\]
where \(\mathcal E_i=\{j:i\in e_j\}\) is the set of hyperedges containing node \(i\). After \(L\) layers, we use
\[
    \widehat{\bz}_i = \bh_i^{(L)}
\]
are the learned node embedding. And edge-embedding can be obatined by set 
\[
    \widehat{\bff}_j = \bu_j^{(L)},
\]

\subsubsection{HypeBoy}
HypeBoy~\citep{kim2024hypeboy} is a generative self-supervised representation learning method, originally proposed for hyperlink filling task. Given a subset of nodes from a hyperlink, the model predicts the missing node that completes the hyperlink. This task is closely related to our missing-value prediction setting, since both require evaluating whether a candidate node is compatible with a partially observed hyperlink.

However, HypeBoy also requires an existing node feature matrix $\bX$. (In our tasks, we construct $\bX$ in exactly the same way as described in \ref{Allset}.) With $\bX$, HypeBoy first applies a hypergraph neural encoder to obtain node embeddings,
\[
    \bZ=f_\theta(\bX,\mathcal E)
\]

In the original HypeBoy objective, for a hyperlink \(e_j\) and a held-out member \(i\in e_j\), the query set is
\[
    Q_{j,-i}=e_j\setminus\{i\}.
\]
The model represents this query set by summing the node embeddings in the query and passing the result through a set projection head:
\[
    \widehat{\bq}_{j,-i}
    =
    h_{\mathrm{set}}
    \left(
    \sum_{u\in Q_{j,-i}} \widehat{\bz}_u
    \right).
\]
Candidate nodes are also passed through a node projection head \(h_{\mathrm{node}}\). The hyperedge-filling score is then obtained based on the cosine similarity between the projected candidate-node representation and the projected query-set representation, followed by a softmax over candidate nodes.

The final learned representation from HypeBoy is the node embedding \(\widehat{\bz}_i\).
where \(e_j^{\mathrm{train}}\) contains only the observed training members of hyperedge \(j\).

\end{document}